\documentclass[twoside,11pt]{article}

%

\usepackage{jmlr2e}

\usepackage{bbm}
\usepackage{dsfont}
\usepackage{algorithm}

\usepackage{mathtools}
\usepackage{multirow}
\usepackage{amsmath} 

\newtheorem{prop}{Proposition}

\usepackage[table]{xcolor}

\usepackage{natbib}

\usepackage{jmlr2e}
\usepackage{xcolor}
\usepackage{bbm}
\usepackage{dsfont}
\usepackage{algorithm}
\usepackage{bm} 
\usepackage{mathtools}
\usepackage{multirow}
\usepackage{amsmath} 


\usepackage{makecell}

\def\m{\mathfrak{m}}
\def\real{\mathbb{R}}

\def\S{\mathcal{S}}
\def\A{\mathcal{A}}
\def\sgn{\mathrm{sgn}}
\def\v{\mathbf{v}}
\def\x{\mathbf{x}}
\def\p{\mathbf{p}}
\def\a{\mathbf{a}}
\def\tf{\tilde{f}}

\newcommand{\ff}[1] {\textcolor{black}{#1}}

\def\m{\mathfrak{m}}
\def\real{\mathbb{R}}

\def\S{\mathcal{S}}
\def\A{\mathcal{A}}
\def\sgn{\mathrm{sgn}}
\def\v{\mathbf{v}}
\def\x{\mathbf{x}}
\def\p{\mathbf{p}}
\def\a{\mathbf{a}}
\def\tf{\tilde{f}}





\ShortHeadings{Quasi-Equivalence of Width and Depth of Neural Networks}{Fenglei Fan, Rongjie Lai, and Ge Wang}
\firstpageno{1}

\begin{document}

\title{Quasi-Equivalence of Width and Depth of Neural Networks}

\author{\name Feng-Lei Fan \email fanf2@rpi.edu \\
       \addr Department of Biomedical Engineering\\
       Rensselaer Polytechnic Institute\\
       Troy, NY 12180, USA
       \AND
       \name Rongjie Lai \email lair@rpi.edu \\
       \addr Department of Mathematics\\
       Rensselaer Polytechnic Institute\\
       Troy, NY 12180, USA
       \AND      
       \name Ge Wang \email wangg6@rpi.edu \\
       \addr Department of Biomedical Engineering\\
       Rensselaer Polytechnic Institute\\
       Troy, NY 12180, USA}

\editor{XXXX}

\maketitle

\begin{abstract}
While classic studies proved that wide networks allow universal approximation, recent research and successes of deep learning demonstrate the power of deep networks. Based on a symmetric consideration, we investigate if the design of artificial neural networks should have a directional preference, and what the mechanism of interaction is between the width and depth of a network. Inspired by the De Morgan law, we address this fundamental question by establishing a quasi-equivalence between the width and depth of ReLU networks in two aspects. First, we formulate two transforms for mapping an arbitrary ReLU network to a wide network and a deep network respectively for either regression or classification so that the essentially same capability of the original network can be implemented. Then, we replace the mainstream artificial neuron type with a quadratic counterpart, and utilize the factorization and continued fraction representations of the same polynomial function to construct a wide network and a deep network, respectively. Based on our findings, a deep network has a wide equivalent, and vice versa, subject to an arbitrarily small error.
\end{abstract}

\begin{keywords}
  Deep networks, wide networks, ReLU networks, quasi-equivalence, network transformation
\end{keywords}

\section{Introduction}

Over the past years, deep learning \citep{b1, b2} has become the mainstream approach of machine learning and achieved the state-of-the-art performance in many important tasks \citep{b3, kumar2016ask, b5,b6}. One of the key reasons that accounts for the success of deep learning is the increased depth, which allows a hierarchical representation of features. There are a number of papers dedicated to explaining why deep networks are better than shallow ones. Encouraging progress has been made along this direction. The idea to show the superiority of deep networks is basically to find a special family of functions that are very hard to be approximated by a shallow network but easy to be approximated by a deep network, or that a deep network can express complicated functions that a wide network could not \citep{b7, cohen2016expressive, b10, eldan2016power, montufar2014number, b14}. For example, in \cite{eldan2016power}, a special class of radial functions was constructed so that a one-hidden-layer network needs to use an exponential number of neurons to obtain a good approximation, but a two-hidden-layer network only requires a polynomial number of neurons for the same purpose. With the number of linear regions as the complexity measure, \cite{montufar2014number} showed that the number of linear regions grows exponentially with the depth of a network but only polynomially with the width of a network. In \cite{b14}, a topological measure was utilized to characterize the complexity of functions. Then, it was shown that deep networks can represent more complex functions than what the shallow counterparts express. \ff{Besides, width-bounded but depth-unbounded universal approximators were also developed \citep{lu2017expressive,lin2018resnet,fan2018sparse} in analogy to the depth-bounded but width-unbounded universal approximators \citep{b15, b16}}.

\ff{Recently, the effects of width are discussed by more and more studies \citep{cheng2016wide, b18, b19}. 
Since width and depth are the most basic topology measures of a neural network, exploring the roles of width and depth in neural networks is a problem of strong interest and importance. Currently, there exist both width-bounded and depth-bounded universal approximators. Since both width-bounded and depth-bounded networks can represent any function, they can represent each other as well, which suggests the width-depth equivalence of neural networks. Nevertheless, how a neural network learns a mapping is quite different from the way used in proving the universal approximation. Moreover, the core of the width-depth conversion is to employ a network to learn another network instead of any function. Therefore, the width-depth conversion based on universal approximation falls short to capture the relationship between width and depth.} 

\ff{Specifically, we argue that the width-depth conversion via universal approximation is simplistic, inefficient, and lack of insight: 1) (Simplistic) As mentioned earlier, the way used in enabling universal approximation is to divide a target function into many functions over tiny hypercubes. In practice, a network usually does not do so, \textit{e.g.}, a ReLU network divides the space into polytopes. 2) (Inefficient) The published universal approximation analyses do not consider the character of a target function, and just inefficiently divide the function into many pieces over tiny hypercubes. However, in the width-depth transformation, the problem is how to express a wide (or deep) network using a deep (or wide) network. Thus, the properties of networks should be used to make a much more efficient transformation. 3) (Lack of insight) Universal approximation theoretically guarantees the representation capability of neural networks but it does not give any new insight in terms of difference in consumed computational resources and a practical procedure for the transformation from a wide (or deep) network to a deep (or wide) counterpart. As a small summary, utilizing universal approximation theorems to do width-depth transformation do not accurately characterize the width-depth conversion. We still do not fully resolve essential questions on width and depth, \textit{e.g.}, how to non-trivially do transformation between a wide and a deep network, and what the mechanism of interaction is between the width and depth of a network. }

\ff{To fill this gap, inspired by the De Morgan law, here we demonstrate from two perspectives that the width and depth of neural networks are quasi-equivalent. The first perspective leverages that a ReLU network is a piecewise linear function over polytopes, while second perspective utilizes the nested structure of deep networks and the parallel structure of wide networks.} Specifically, in the first perspective, we revisit the De Morgan law:
\begin{equation}
 A_1 \lor A_2 \cdots \lor A_n = \neg\Big( (\neg A_1) \land (\neg A_2) \cdots  \land  (\neg A_n) \Big),   
\end{equation}
where $A_i$ is a propositional rule (\textit{e.g.}, IF $input \in [a_i
, b_i]^m$, THEN $input$ belongs to some class), and such rules are disjoint. A neural network can be linked to a rule-based system such as a collection of propositional rules. Straightforwardly, we can construct either a deep network to realize a union of propositional rules (left side) or a wide network that realizes the complement of the intersection of those rules after complement (right side). As a result, the constructed deep and wide networks are equivalent to each other. Furthermore, we elaborate the quasi-equivalence of general regression and classification networks by constructing two transforms mapping an arbitrary ReLU network to a wide network and a deep network, respectively, thereby verifying a general quasi-equivalence of the width and depth of ReLU networks. Our constructive scheme is largely based on the fact that a ReLU network partitions the space into polytopes~\citep{chu2018exact}. This enables us to have a simplicial complex in the space and then to establish a quasi-equivalence of networks using the essential building blocks, fan-shaped (more generally, hyper-cone-shape) functions, in the form of modularized ReLU networks. 

\begin{table}[htb]
 \centering
\caption{Network structures and complexities through transformation of regression and classification networks. $D$ is the input dimension, and $M$ is the complexity measure of a function class represented by ReLU networks.}
 \scalebox{0.8}{\begin{tabular}{ |c|c|c|c| c|  }
      \hline
      &Network & Width & Depth \\
   \hline
    \multirow{2}{12em}{Transform Regression Networks (\textbf{Theorem \ref{thm:maind=2}})} & Wide & $D(D+1) (2^{D}-1)M $ &
    $D$ \\
    \cline{2-4}
    & Deep  & $(D+1)D^2$ & $(D+1)M $ \\
   \hline
    \multirow{2}{12em}{Transform Classification Networks (\textbf{Theorem \ref{thm:networklaw}})} & Wide  & $D(D+1) (2^{D}-1)M$ & $D$ \\   
    \cline{2-4}
    & Deep  & $(D+1)D^2$ & $(D+1)M $ \\
   \hline
 \end{tabular}}
 \label{tab:summary1}
\end{table}

In the second perspective, we replace the mainstream artificial neuron type with a quadratic counterpart and extend our first perspective by utilizing the factorization and continued fraction representations of the same univariate polynomial to construct wide and deep networks, respectively. Specifically, a univariate polynomial function can be expressed as follows:
\begin{equation}
\begin{aligned}
   &  \prod_j^N (r_j x^2 + s_j x + t_j)   = \sum_{i=0}^{2N} a_{i} x^i = \sum_{k=0}^{N} a_{2k} x^{2k} + \sum_{k=0}^{N-1} a_{2k+1} x^{2k+1}  \\
     = & \cfrac{b_0}{1-\cfrac{b_1 x^2}{1+b_1 x^2- \cfrac{b_2 x^2}{1+b_2 x^2- \cfrac{b_3 x^2}{1-\cdots}}}} +  \cfrac{c_0x}{1-\cfrac{c_1 x^2}{1+c_1 x^2- \cfrac{c_2 x^2}{1+c_2 x^2- \cfrac{c_3 x^2}{1-\cdots}}}}, 
\end{aligned}    
    \label{ExtendedDeMorgan}
\end{equation}
where $a_i\neq 0$, and $r_j, s_j, t_j$ and $b_l, c_l$ are related. 
\ff{Previously, inspired by neuronal diversity, our group designed the quadratic neuron \citep{b2} that replaces the inner product in a conventional neuron with a quadratic function. Due to the merits of the idea, the network based on quadratic neurons have been increasingly studied and applied \citep{bu2021quadratic, ji2021prediction, mantini2021cqnn, xu2022quadralib}}. Here, we can construct a wide quadratic network and a deep quadratic network to implement the left side and right side of Eq. \eqref{ExtendedDeMorgan}, respectively. This establishes the equivalence between wide and deep quadratic networks. Finally, we generalize such an equivalence into a multivariate setting based on Kolmogorov-Arnold theorem \citep{b29}.

Our main contribution is the establishment of the width-depth quasi-equivalence of neural networks. We summarize our main results on ReLU networks (the first perspective) and quadratic networks (the second perspective) in Tables \ref{tab:summary1} and \ref{tab:summary2}, respectively. \ff{Specifically, Table 1 lists the width and depth of the wide and deep networks constructed in our first perspective, where the complexity measure $M$ is the minimum number of simplices needed to cover the polytopes formed by a ReLU network. Table 2 shows the width and depth of the wide and deep quadratic networks constructed in our second perspective, where the complexity measures $K_1$ and $K_2$ are the degrees of polynomials to represent a function of interest. Clearly, given a complexity measure, the width of the constructed wide network is greater than the depth, while the depth of the constructed deep network is greater than the width. }

\begin{table}[htb]
 \centering
\caption{Network structures and complexities through the extension of the De Morgan law. $D$ is the input dimension, and $K_1$ and $K_2$ are the complexity measure of a function class.}
 \scalebox{0.8}{\begin{tabular}{ |c|c|c|c| c|  }
      \hline
      &Network & Width & Depth \\
   \hline
    \multirow{2}{15em}{Extension of the De Morgan Law (\textbf{Theorem \ref{thm:RealDuality}})} & Wide  & $\max\{K_1,K_2\}$&
    $\log(K_1 K_2)$ \\
    \cline{2-4}
    & Deep  & $2(2D+1)$ & $K_1 + K_2$ \\
   \hline
 \end{tabular}}
 \label{tab:summary2}
\end{table}

To put our contributions in perspective, we would like to mention relevant studies. \ff{\cite{jacot2018neural} proposed the theory of the neural tangent kernel (NTK), which provides a useful lens to understand a network when the width of a network goes to infinity.} \cite{b22} analyzed the effect of width and depth on the quality of local minima. They showed that the quality of local minima improves toward the global minima as depth and width become larger. \cite{levine2020limits} revealed the width-depth interplay in a self-attention network. We discuss the width of neural networks as related to NTK in the \textbf{Supplementary Information I.} To the best of our knowledge, our study is the first work to reveal the width-depth quasi-equivalence of neural networks.

\section{Quasi-Equivalence by De Morgan's Law}

\subsection{Preliminaries}

For convenience, we use $\sigma(x) = \max\{0,x\}$ to denote the ReLU function. 
We mainly discuss ReLU networks in this work, thus all networks in the rest of this paper are referred as ReLU networks unless otherwise specified. At the same time, we focus on the fully-connected ReLU networks.

\begin{definition}
A regression network is a network with continuous outputs, while a classification network is a network that produces categorical outputs (for example,  $\{0,1,\cdots,9\}$ for digit recognition). \ff{The classification network is obtained by thresholding a ReLU network in the last layer}. In this study, we investigate a classification network with binary labels without loss of generality.
\end{definition}

\begin{definition}[Width and depth of a feedforward network \citep{arora2016understanding}] 

\ff{For any number of hidden layers $k \in \mathbb{N}$, input and output dimensions  $w_{0}, w_{k+1} \in \mathbb{N}$, a $\mathbb{R}^{w_{0}} \rightarrow \mathbb{R}^{w_{k+1}}$ feedforward network is given by specifying a sequence of $k$ natural numbers  $w_{1}, w_{2}, \ldots, w_{k}$ representing widths of the hidden layers, a set of $k$ affine transformations $T_{i}: \mathbb{R}^{w_{i-1}} \rightarrow \mathbb{R}^{w_{i}}$  for  $i=1, \ldots, k$ and a linear transformation  $T_{k+1}: \mathbb{R}^{w_{k}} \rightarrow   \mathbb{R}^{w_{k+1}}$ corresponding to weights of the hidden layers. The function $f: \mathbb{R}^{n_{1}} \rightarrow \mathbb{R}^{n_{2}}$ computed or represented by this network is
\begin{equation}
f=T_{k+1} \circ \sigma \circ T_{k} \circ \cdots \circ T_{2} \circ \sigma \circ T_{1}, 
\end{equation}
where $\circ$ denotes function composition, and $\sigma$ is an activation function. The depth of a ReLU DNN is defined as $k+1$. The width of a ReLU DNN is $\max \left\{w_{1}, \ldots, w_{k}\right\}$.} 
\end{definition} 

\begin{definition}[Width and depth of a shortcut network] 
\ff{Given a shortcut network $\mathbf{\Pi}$, we delete a minimum number of links such that the resultant network $\mathbf{\Pi}'$ is a feedforward network without any isolated neuron. Then, we define the width and depth of $\mathbf{\Pi}$ as the width and depth of $\mathbf{\Pi}'$, respectively.}
\end{definition}

Over the past several years, increasingly diversified network architectures, such as randomly wired networks \citep{xie2019exploring}, networks with stochastic structures \citep{deng2020understanding}, etc. are used as backbones for deep learning. Our definitions for width and depth are applicable to many unusual network configurations. Moreover, they are also natural extensions of the conventional width and depth definitions and can make sense for common networks.  

\begin{definition}[Simplicial complex]
\label{def:simplex}
A $D$-simplex $S$ is a $D$-dimensional convex hull provided by convex combinations of  $D+1$ affinely independent vectors $\{\v_i\}_{i=0}^D \subset \real^D$. In other words, $\displaystyle S = \left\{ \sum_{i=0}^D \xi_i \v_i ~|~ \xi_i \geq 0, \sum_{i=0}^D \xi_i = 1 \right \}$.  The convex hull of any subset of $\{\v_i\}_{i=0}^D$ is called a {\it face} of $S$. A simplicial complex $\displaystyle \mathcal{S} = \bigcup_\alpha S_\alpha$ is composed of a set of simplices $\{S_\alpha\}$ satisfying: 1) every face of a simplex from $\S$ is also in $\S$; 2) the non-empty intersection of any two simplices $\displaystyle S _{1},S _{2}\in \S$ is a face of both $S_1$ and $S_2$. 
\end{definition}

\begin{prop} Suppose that $f(\mathbf{x})$ is a function represented by a ReLU network, then $f$ is a piecewise linear function that splits the space into polytopes, \ff{where each polytope is convex and associated with a linear function}.
\label{prop:ReLUpolytopes}
\end{prop}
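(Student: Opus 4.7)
The plan is to decompose the proposition into two claims and treat them separately: first, that a ReLU network implements a piecewise linear function whose linearity regions are convex polytopes; second, that each such polytope admits a simplicial decomposition. I would prove the first claim by induction on the depth $L$ of the network. For $L=1$ the network is a single affine map, trivially linear on the polytope $\real^D$. For the inductive step, assume the output through layer $L-1$ is a vector-valued piecewise linear map whose pieces live on a finite collection of convex polytopes $\{P_\alpha\}$ partitioning $\real^D$. Applying the $L$-th affine layer preserves this structure on each $P_\alpha$. The ReLU activation, applied componentwise, then refines each $P_\alpha$ according to the sign patterns of the pre-activations: since each such sign condition is an affine inequality, the refined regions are intersections of finitely many half-spaces with a convex polytope, hence still convex polytopes, and the post-activation remains affine on each refined region. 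Composing with the output affine layer preserves both properties.

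For the second claim, I would invoke the classical fact from combinatorial geometry that any convex polytope $P \subset \real^D$ admits a triangulation into simplices. One standard constructive argument proceeds by induction on dimension: the base $D=1$ gives intervals, which are themselves $1$-simplices; for higher $D$, pick a vertex $v$ of $P$, triangulate by the inductive hypothesis each facet of $P$ not containing $v$, and cone each resulting lower-dimensional simplex with apex $v$ to obtain a $D$-simplex. The union of these cones fills $P$ and the compatibility on shared facet boundaries ensures the pieces form a valid simplicial complex in the sense of Definition \ref{def:simplex}. Unbounded polytopes are handled by restricting to a bounded domain of interest (as is typical for approximation purposes) or by allowing unbounded cones as extended ``simplices.''

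The main difficulty is not conceptual but bookkeeping: one must verify that convexity is preserved under each ReLU subdivision (straightforward, since intersections of half-spaces with convex polytopes remain convex polytopes) and that only finitely many pieces arise at each layer (since each ReLU neuron contributes at most one new hyperplane cut per existing region, the total count is bounded by a product of per-layer multiplicities). The subtlest point is the handling of unbounded polytopes, which is essentially a matter of convention in what ``filling with simplices'' means; the rest of the argument reduces to invoking standard triangulation results.
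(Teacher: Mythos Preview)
Your proposal is correct and in fact more detailed than what the paper provides. The paper's own proof is a two-sentence sketch: it asserts that affine maps plus ReLU yield a polytope decomposition, cites \cite{chu2018exact} for the convexity of the pieces in the feedforward case, and cites \cite{edmonds1970simplicial} for the simplicial filling. You instead supply the induction on depth explicitly and give the standard vertex-coning triangulation argument rather than deferring both to the literature. What your self-contained route buys is an actual proof rather than a pointer; what the paper's route buys is brevity appropriate to a proposition that is essentially background. One small point the paper addresses that you do not: it notes that ReLU networks \emph{with shortcuts} can produce non-convex linearity regions, and handles this by observing that such a network is a sum of feedforward branches, so the non-convex regions are unions of convex polytopes and hence still triangulable. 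Your induction as written covers only the feedforward case; if you want to match the paper's generality you should add this one-line extension.
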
  

\begin{proof}
Inspired by the idea in \citep{chu2018exact}, the proof here is for all ReLU networks, including networks using shortcuts. Let a vector $\mathbf{C} = \{c_1,...,c_N\}$ denote the firing states of all neurons in the network, where $N$ is the total number of neurons, and $c_i \in \{0,1\}$. $c_i=0$ means that the $i$-th neuron is not fired and vice versa. The firing state of every neuron is determined by the input $\mathbf{x}$, then we denote the set of instances that share the same collective neuron firing state $\mathbf{C_h}$ as the polytope $P_h$: $P_h=\{\mathbf{x}~|~\mathbf{C}(\mathbf{x})=\mathbf{C_h}\}$.  For $\mathbf{x} \in P_h$, the output of the neuron $i$ is a linear function, denoted as $n^{(i)}(\mathbf{x})$. Then, the firing state of the neuron $i$ is controlled by a linear inequality ($n^{(i)}(\mathbf{x})>0$ or $n^{(i)}(\mathbf{x})\leq 0$). In total, $\mathbf{C}(\mathbf{x})=\mathbf{C_h}$ is equivalent to a set of $N$ linear inequality constraints, indicating that $P_h$ is a convex polytope.
\end{proof}


\begin{definition}
We define the complexity of the function represented by a ReLU network as the minimum number of simplices: $M$ that are needed to cover each and every polytope to support the function of the ReLU network. 
\end{definition}

\begin{figure*}[htb!]
\center{\includegraphics[width=0.5\linewidth] {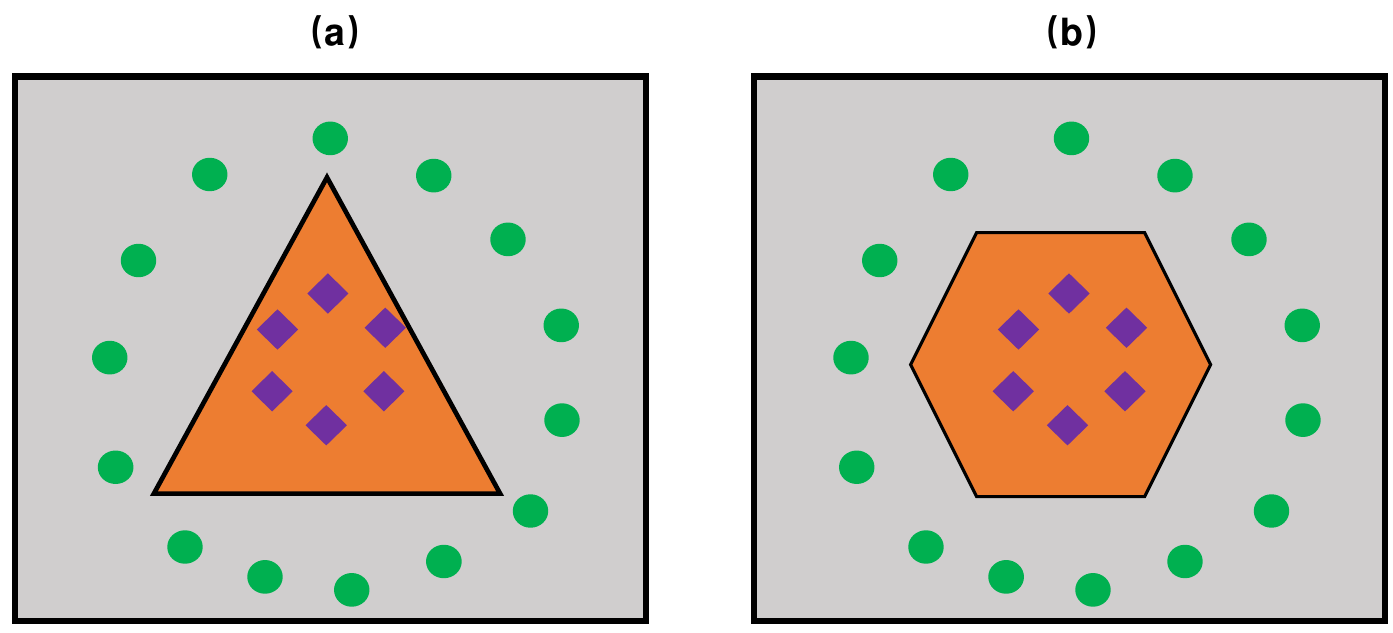}}
\caption{(a) A one-hidden-layer network with three neurons to classify concentric rings. (b) A one-hidden-layer network with six neurons to classify concentric rings. }
\label{Figure_JustifySimplices}
\end{figure*}

\ff{Here, we elaborate why $M$ is a good measure. Previously, because a deep network with piecewise linear activation is a piecewise linear function, the number of linear regions (polytopes) was intensively studied to measure the complexity of a neural network. For example,  \cite{montufar2014number} and \cite{serra2018bounding} estimated the upper and lower bounds of the number of linear regions with respect to the number of neurons at each layer. \cite{xiong2020number} computed the bounds for convolutional neural networks. \cite{park2021unsupervised} proposed neural activation coding to maximize the number of linear regions to improve the model performance. Despite these results, we find that there exists a problem with the number of linear regions as a complexity measure. It may happen that simple and complex networks realize the same number of regions for a given task. As shown in Figure \ref{Figure_JustifySimplices}, two networks divide the space into two linear regions to separate concentric rings. But one network uses three neurons to define a triangle domain, while the other has six neurons to form a hexagon. In this example, according to the number of linear regions, the two networks have the same complexity but the six-neuron network is apparently more complex than the other. To address this problem, the complexity of a linear region should be taken into account as well. We argue that how many simplices a linear region comprises indicates how complex a linear region is. Therefore, we propose to use the number of simplices as a legitimate complexity measure. In Figure \ref{Figure_JustifySimplices}, counting the number of simplices, the complexity of two networks are 2 and 4, respectively, which is a better characterization.}

Let us estimate the lower bound of $M$. To this end, we need to take advantage of the lower bound of the number of polytopes. Empirical bounds of the number of polytopes ($N_p$) in a feedforward ReLU network were estimated in \citep{montufar2014number, serra2018bounding, serra2020empirical}, where one result in \citep{montufar2014number} states that let $n_i,i=1,\cdots,L$, be the number of neurons in the $i$-th layer, and $D$ be the dimension of the input space, $N_p$ is lower bounded by $\Big(\prod_{i=1}^{L-1} [\frac{n_i}{D}]^{D}\Big)\cdot \sum_{j=0}^{D} {n_L \choose j}$. The polytopes constructed therein are hypercubes. Because the minimum number of simplices that fill a $D$-dimensional hypercube is $\frac{2^D \cdot D!}{(D+1)^{(D+1)/2}}$, 
\begin{equation}
    M \geq \frac{2^D \cdot D!}{(D+1)^{(D+1)/2}} \Big(\prod_{i=1}^{L-1} [\frac{n_i}{D}]^{D}\Big)\cdot \sum_{j=0}^{D} {n_L \choose j} .
\label{Mlowerbound}    
\end{equation}

\begin{definition}
We define a wide network and a deep network as follows. Let us assume a function that can be sufficiently complex and yet can be represented by a network. When such a function becomes increasingly complex, the structure of this network must be also increasingly complex, depending on the complexity of the function. We call a network wide if its width is larger than its depth by at least an order of magnitude in $M$, e.g., $\mathcal{O}(M^{\alpha+1})$ vs $\mathcal{O}(M^{\alpha})$, where $M$ is the complexity measure and $\alpha>0$. Similarly, we call a network deep if its depth is larger than its width by at least an order of magnitude in $M$. 
\end{definition} 

It is underscored that we use two different concepts: the complexity of the function class represented by networks and the structural complexity of a network. The former measures the complexity of the function, while the latter measures the topological structure of a network. In our transformation scheme, the structures of constructs/networks are determined by the complexity of the function of interest.

\begin{definition}
We call a wide network $\mathbf{N}_1:\Omega\rightarrow\mathbb{R}$ is equivalent to a deep network $\mathbf{N}_2:\Omega\rightarrow\mathbb{R}$, if $\mathbf{N}_1(\x) = \mathbf{N}_2(\x), \forall x\in\Omega$. We call a wide network $\mathbf{N}_1$ is quasi-equivalent to a deep network $\mathbf{N}_2$, if there is $\delta>0$, $\m(\{\x\in\Omega~|~\mathbf{N}_1(\x)\neq \mathbf{N}_2(\x)\} < \delta$, where $\m$ is a Lebesgue measurement defined on $\Omega$.
\end{definition}

\subsection{Motivating Example}

\begin{figure*}[h]
\center{\includegraphics[width=0.8\linewidth] {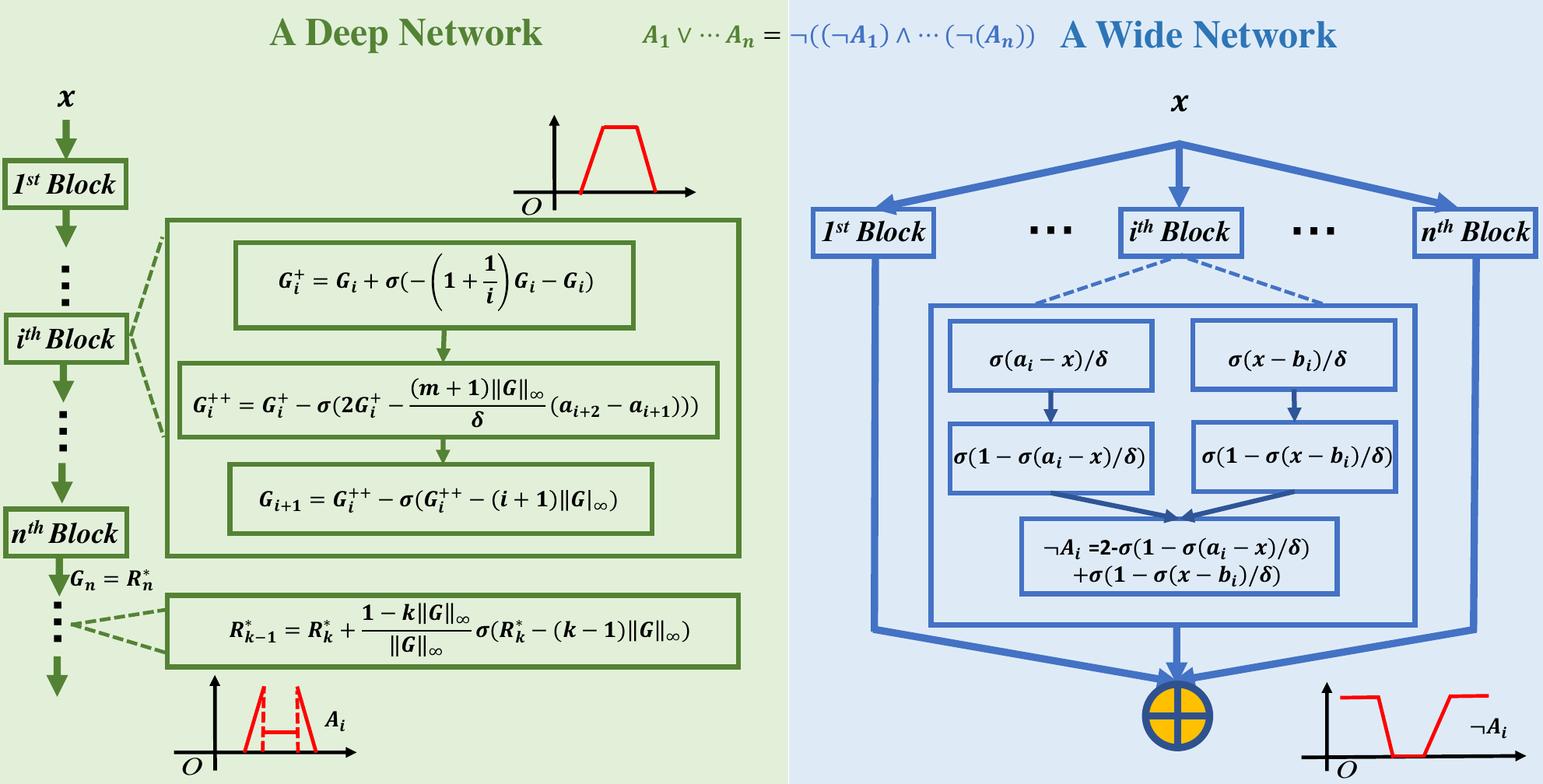}}
\caption{The width and depth equivalence in light of the De Morgan law duality. In this construction, a deep network implements  $A_1 \lor A_2 \cdots \lor A_n$ using a trapezoid function, and a wide network implements $\neg \Big((\neg A_1) \land (\neg A_2) \cdots  \land (\neg A_n) \Big)$ using the trap-like function. $(\cdot)^+$ denotes the ReLU activation.}
\label{DWnetwork}
\end{figure*}

An important school of neural network interpretability research \citep{b30, b31} is to extract interpretable rules from a network \citep{thrun1995extracting, setiono1995understanding, b34} using decompositional or pedagogical methods \citep{thrun1995extracting}. Pedagogical methods decode a set of rules that imitate the input-output relationship of a network, whereas these rules do not necessarily correspond to the parameters of the network. One common type of rules are propositional in the IF-THEN format, where the preconditions are provided as a set of hypercubes with respect to the input:

      IF $input \in [a_i,b_i ]^m$, 
      THEN $input$ belongs to some class.

Since there is a connection between the rule-based inference and the network-based inference, we consider a neural network in terms of propositional rules. 
Furthermore, we know that the De Morgan law holds true for disjoint propositional rules. Mathematically, the De Morgan law is formulated as
\begin{equation}
 A_1 \lor A_2 \cdots \lor A_n = \neg\Big( (\neg A_1) \land (\neg A_2) \cdots  \land  (\neg A_n) \Big),
    \label{eq:DMlaw}
\end{equation}
where $A_i$ is a rule, and $\neg A_i$ is its negation. The De Morgan law gives a duality in the sense of binary logic that the operations $\lor$ and $\land$ are dual, which means that for any propositional rule system described by $A_1 \lor A_2 \cdots \lor A_n$, there exists an equivalent dual propositional rule system $\neg\Big( (\neg A_1) \land (\neg A_2) \cdots  \land  (\neg A_n) \Big)$.

Regarding each rule as an indicator function over a hypercube: 
\[g_i(\x) = \left\{\begin{array}{cc} 1,& \textbf{if}\quad  \x \in \text{a \ hypercube}\\ 0, & \textbf{if} \quad \x \notin \text{a \ hypercube} \end{array}\right.\] 
in Figure \ref{DWnetwork}, we construct a deep network that realizes a logic union of propositional rules (the left hand side of Eq. \eqref{eq:DMlaw}) and a wide network that realizes the negation of the logic intersection of those rules after negation (the right hand side of Eq. \eqref{eq:DMlaw}). As a result, the constructed deep and wide networks are equivalent by the De Morgan law.

\ff{The above motivating example inspires us to consider the width-depth equivalence in a broader domain. First, a ReLU network is a piecewise linear function over polytopes. To generate rules, such a piecewise linear function should be divided into simplices instead of hypercubes. As shown in Figure \ref{Figure_JustifyRules}, we only need two rules if we build rules over simplices, which is much more efficient than building rules over hypercubes. Second, the network can be a regression network rather than a classification network. Thus, representing a linear function rather than an indicator function is demanded. Based on these two considerations, we generalize an indicator function over a hypercube to a linear function over a simplex $S_i$ in a bounded domain:
\[g_i(\x) = \left\{\begin{array}{cc} \mathbf{w}\x + c, & \textbf{if}\quad  \x \in  S_i\\ 0,&\ \textbf{if} \quad \x\in S_i^{c}. \end{array}\right.\]} 

\begin{figure*}[htb!]
\center{\includegraphics[width=0.5\linewidth] {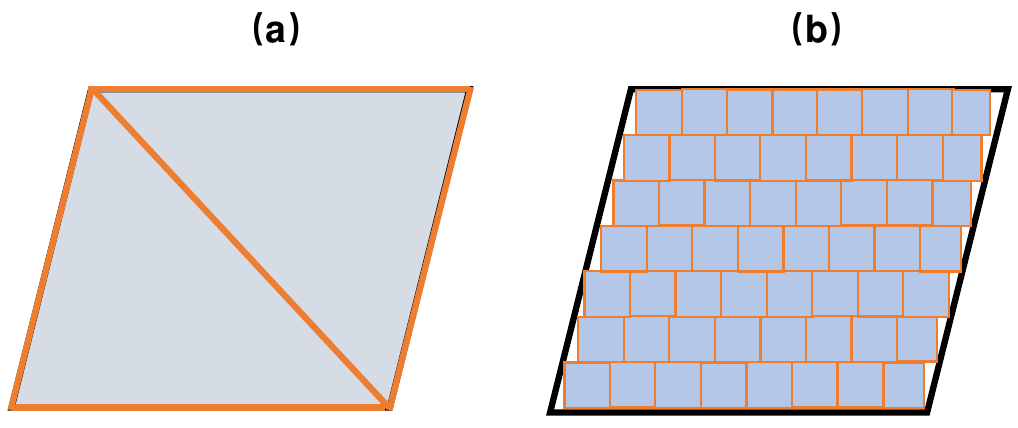}}
\caption{Building rules over simplices is more efficient than over hypercubes for a network.}
\label{Figure_JustifyRules}
\end{figure*}

\subsection{Quasi-Equivalence of Width and Depth of Networks} 

This section describes the first contribution of our paper. We formulate the transformation from an arbitrary ReLU network to a wide network and a deep network, respectively. We use a network-based building block to represent a linear function over a simplex. Integrating such building blocks can represent any piecewise linear function over polytopes, thereby elaborating a general equivalence of the width and depth of networks. Particularly, a regression ReLU network is converted into both a wide and a deep ReLU network (\textbf{Theorem \ref{thm:maind=2}}), while a classification ReLU network is a special case of a regression ReLU network (\textbf{Theorem \ref{thm:networklaw}}). In the regression networks, the transformation of a univariate network is rather different from that of a multivariate network. As a result, the equivalence for the wide and deep networks in the univariate case is precise, whereas the multivariate wide and deep networks are made approximately equivalent up to an arbitrarily small error. What's more, in the multivariate case, the width of the wide network is not the same as the depth of the deep network. This is why we term such an equivalence as a quasi-equivalence.

\subsubsection{Regression Networks}
\label{subsec:regression} 

The sketch of transforming a regression ReLU network is that we first construct either a wide modular network or a deep modular network to represent the corresponding function over each and every simplex, then we aggregate the results into deep or wide networks in series or parallel, respectively, to represent the original network well.

\begin{theorem}[Equivalence of Univariate Regression Networks] Given any ReLU network $f:[-B,~B]\rightarrow\real$, there is a wide ReLU network $\mathbf{H}_1:[-B,~B]\rightarrow\real$ and a deep ReLU network $\mathbf{H}_2:[-B,~B]\rightarrow\real$, such that $f(x) = \mathbf{H}_1(x) = \mathbf{H}_2(x), \forall x\in [-B,~B]$.
\label{thm1}
\end{theorem}
Our main result is formally summarized as the following quasi-equivalence theorem for the multivariate case. 

\begin{theorem} [Quasi-Equivalence of Multivariate Regression Networks] Suppose that the representation of an arbitrary ReLU network is $h: [-B,~B]^D \to \mathbb{R}$, \ff{and $M$ is the minimum number of simplices to cover the polytopes to support $h$}, for any $\delta>0$, there exist a wide ReLU network $\mathbf{H}_1$ of width $\mathcal{O}\left[D(D+1) (2^{D}-1)M \right]$ and depth $D$, and also a deep ReLU network $\mathbf{H}_2$ of width $(D+1)D^2$ and depth $\mathcal{O}\left[(D+1)M\right]$, satisfying that 
\begin{equation}
    \begin{aligned}
    & \m\Big(\x~|~h(\x)\neq \mathbf{H}_1(\x)\}\Big) < \delta \\
    & \m\Big(\x~|~h(\x)\neq \mathbf{H}_2(\x)\}\Big) < \delta, 
    \end{aligned}
\end{equation}
where $\m(\cdot)$ is the standard measure in $[-B,~B]^D$. 
\label{thm:maind=2}
\end{theorem}

We defer the proof of $\textbf{Theorem \ref{thm1}}$ to \textbf{Supplementary Information II}, and split the proof of \textbf{Theorem \ref{thm:maind=2}} into the two-dimensional case (more intuitive) in Appendix and the general case in \textbf{Supplementary Information II} for better readability.

\begin{figure*}[htb!]
\center{\includegraphics[width=0.5\linewidth] {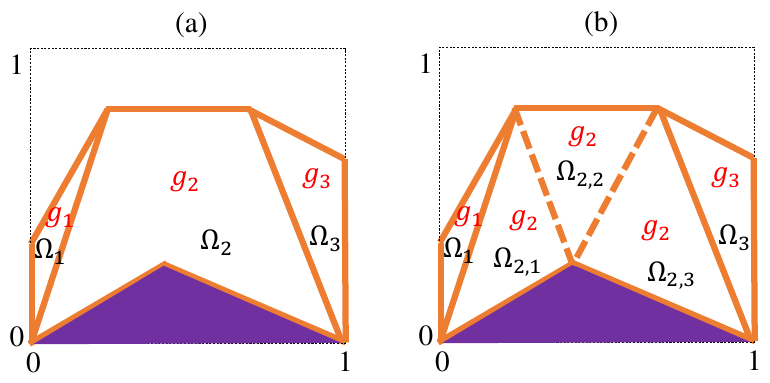}}
\caption{Due to the unboundedness and continuity, the representations in \citep{wang2005generalization,he2018relu} is handicapped in representing a function over polytopes that make a non-convex region. }
\label{Figure_JustifyLocal}
\end{figure*}

The key idea to represent a linear function over a simplex is to construct high-dimensional fan-shaped functions that are supported in fan-shaped domains, and to use these constructs to eliminate non-zero functional values outside the simplex of interest. This is a new and local way to represent a piecewise linear function over polytopes. In contrast, there are two global ways to represent piecewise linear functions \citep{wang2005generalization, he2018relu}. In \citep{wang2005generalization}, for every piecewise linear function $f:\mathbb{R}^n \to \mathbb{R}$, there exists a finite set of linear functions $g_1,\cdots, g_m$ and subsets $T_1,\cdots,T_P \subseteq [m]$ such that $f=\sum_{p=1}^P s_p \underset{i\in T_p}{\max} \{g_i\}$, where $s_p\in \{-1,+1\}, p=1,\cdots,P$. In \citep{he2018relu}, the representation is $f=\underset{1\leq p \leq P}{\max} \underset{i\in T_p}{\min} \{g_i\}$. Nevertheless, \ff{due to the unboundedness and continuity, the global representation of a piecewise linear function is handicapped over polytopes that make a non-convex region. Let us use Figure \ref{Figure_JustifyLocal} to illustrate our point, where the relations of $g_1, g_2, g_3$ are summarized in Table \ref{relation}, and the function value over the purple area is zero. }
\begin{table}[htb]
 \centering
\caption{Regions and relations of functions. }
 \begin{tabular}{ |c|c|   }
      \hline
      Region & Relation  \\
  \hline
     $\Omega_1$  & $g_1\geq g_2\geq g_3$ \\
  \hline
     $\Omega_2$  & $g_2\geq g_1, g_2 \geq g_3$ \\
  \hline
     $\Omega_3$  & $g_3\geq g_2\geq g_1$ \\
     \hline
 \end{tabular}
 \label{relation}
\end{table}

\begin{itemize}
    \item Representation in \cite{wang2005generalization,he2018relu}: $f=\max \{g_1, g_2, g_3\}$
    \item Ours: $f = (g_1)_{\{\x \in \Omega_1\}}+(g_2)_{\{\x \in \Omega_{2,1}\}}+(g_2)_{\{\x \in \Omega_{2,2}\}}+(g_2)_{\{\x \in \Omega_{2,3}\}}+(g_3)_{\{\x \in \Omega_3\}}$.
\end{itemize}
\ff{It can be seen that due to the unboundedness and continuity, $f=\max \{g_1, g_2, g_3\}$ is inaccurate over the purple area. But our representation is accurate over the purple area because it is local.} We highlight the construction of fan-shaped functions, which opens a new door for high-dimensional piecewise function representation. Particularly, the employment of fan-shaped functions will enable a neural network to express a manifold more effectively.


\begin{figure*}[htb!]
\center{\includegraphics[height=1.4in,width=6in,scale=0.4] {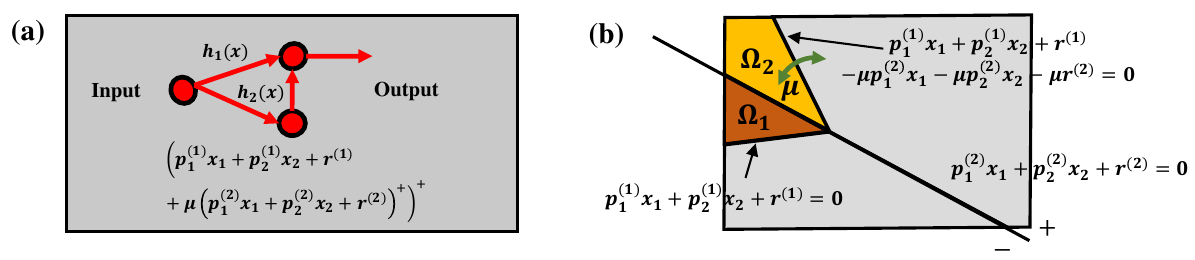}}
\caption{Typical fan-shaped functions constructed by a modularized network to eliminate non-zero functional values outside the simplex of interest.}
\label{fanshape}
\end{figure*}

Since such a fan-shaped function is a basic building block in our construction of wide and deep equivalent networks, let us explain it in a two-dimensional case for easy visualization. An essential building block expressed by a network in Figure \ref{fanshape}(a) is based on the following function:
\begin{equation}
    F(\x) = \sigma\circ(h_1(\x) - \mu \sigma\circ h_2(\x)), 
    \label{eqn:fanshape}
\end{equation}
where $h_1(\x) = p_1^{(1)} x_1 +p_2^{(1)} x_2 +r^{(1)}$, and  $h_2(\x) = p_1^{(2)} x_1 +p_2^{(2)} x_2 +r^{(2)}$ are provided by two linearly independent vectors $\{(p_1^{(1)},p_2^{(1)}), (p_1^{(2)},p_2^{(2)})\}$, and $\mu$ is a positive controlling factor. Eq. \eqref{eqn:fanshape} is a ReLU network of depth=2 and width=2 according to our width-depth definition. As illustrated in Figure \ref{fanshape}(b), the support of $F(\x)$ contains three boundaries and four polytopes (two of which only allow zero value of $F$). For convenience, given a linear function $\ell(\x) = c_1x_1 + c_2x_2 + c_3$, we define $\ell^{-} = \{\x\in\real^2~|~ \ell(\x) <0\}$ and $\ell^{+} = \{\x\in\real^2~|~ \ell(\x) \geq 0\}$. Thus, we can write $\Omega_{1} = h_1^+ \cap h^-_2$ and $\Omega_2 = (h_1 - \mu h_2)^{-}\cap h^+_2$. 
There are three properties of $F(\x)$. First, the common line shared by $\Omega_1$ and $\Omega_2$ is $h_2(\x)=0$.
Second, the size of $\Omega_2$ is adjustable by controlling $\mu$. Note that $h_1(\x) - \mu h_2(\x) = 0$ can move very close to $h_2(\x) = 0$ as $\mu\rightarrow \infty$, which makes $\Omega_2$ negligible. In the limiting case, the support of $F(\x)$ converges to the fan-shaped domain $\Omega_1$. Because $h_1(\x) - \mu h_2(\x) = 0$ is almost parallel to $h_2(\x) = 0 $ when $\mu$ is big enough, we approximate the area of $\Omega_2$  as the product of the length of $h_2(\x)=0$ within $[-B,B]^2$ and the distance between two lines, which yields $|\Omega_2| \leq {2\sqrt{2}B}/\mu$.
Third, the function $F$ over the fan-shaped area $\Omega_1$ is $h_1$. 

\textbf{Remark 1.} 
As a ReLU network of interest partitions the space into more and more polytopes, the number of needed simplices will go increasingly larger. Because the lower bound of $M$ in Eq. \eqref{Mlowerbound} is far larger than $D$ or $(D+1)D^2$, the width of $\mathbf{H}_1(\x)$ and the depth of $\mathbf{H}_2(\x)$ will dominate. Furthermore, the width of $\mathbf{H}_1(\x)$ is higher than its depth by an order of magnitude in terms of $M$, and the depth of $\mathbf{H}_2(\x)$ is higher than its width in a similar way. Therefore, $\mathbf{H}_1(\x)$ is a wide network, and $\mathbf{H}_2(\x)$ is a deep network.

\ff{\textbf{Remark 2.} Compared to universal approximation, our construction, with the use of fan-shaped functions, is valuable in the following aspects: 1) Our construction utilizes the character of ReLU networks. It divides the space into finitely many simplices instead of infinitely many tiny hypercubes; 2) Given a target network, the complexity of our construction does not change with the prescribed error rate. In contrast, the complexity of the construction schemes used in the universal approximation analyses would increase as the preset error decreases. Therefore, our construction is much more efficient; 3) Our construction offers a new and local way to represent a piecewise linear function over polytopes, which is more flexible in representing discontinuous piecewise linear function than the global ways \citep{wang2005generalization, he2018relu}. Furthermore, inspired by the network structure used to construct the proposed fan-shaped function, we find that intra-layer links can enhance the representation capability of a shallow network, closely relevant to the published results on ``depth separation''. Because intra-layer links greatly increase the number of pieces represented by a network, a shallow network with intra-layer links can  express a complicated piecewise linear function as well as a deep network! For more details on this new finding, please see \textbf{Supplementary Information IX} for details.}

\subsubsection{Classification Networks}
\label{subsec:classification}

\ff{A regression network gives a continuous output, while a classification network produces categorical outputs (for example,  $\{0,1,\cdots,9\}$ for digit recognition). The classification network is derived by thresholding the output of the last layer of a ReLU network. In this study, we investigate a classification network with binary labels without loss of generality. We can directly build the equivalence for classification networks in the same way as the regression networks.}

\begin{theorem}[Quasi-Equivalence of Classification Networks]
Without loss of generality for multi-class classification, we assume a binary output. Suppose that the representation of an arbitrary ReLU network is $h: [-B,~B]^D \to \{0,1\}$, \ff{and $M$ is the minimum number of simplices to cover the polytopes to support $h$}, for any $\delta>0$,
there exist a wide ReLU network $\mathbf{H}_1$ of width $\mathcal{O}\left[D(D+1) (2^{D}-1)M \right]$ and depth $D$, and also a deep ReLU network $\mathbf{H}_2$ of width $(D+1)D^2$ and depth $\mathcal{O}\left[(D+1)M\right]$, satisfying that 
\begin{equation}
    \begin{aligned}
    & \m\Big(\x~|~h(\x)\neq \mathbf{H}_1(\x)\}\Big) < \delta \\
    & \m\Big(\x~|~h(\x)\neq \mathbf{H}_2(\x)\}\Big) < \delta, 
    \end{aligned}
\end{equation}
where $\m(\cdot)$ is the standard measure in $[-B,~B]^D$.
\label{thm:networklaw}
\end{theorem}

\begin{proof}
The key is to regard the classification network as a special case of regression network. Then, applying the construction techniques used in the proof of \textbf{Theorem \ref{thm:maind=2}} will lead to that for any $\delta>0$,
\begin{equation}
    \begin{aligned}
    & \m\Big(\x~|~h(\x)\neq \mathbf{H}_1(\x)\}\Big) < \delta \\
    & \m\Big(\x~|~h(\x)\neq \mathbf{H}_2(\x)\}\Big) < \delta. 
    \end{aligned}
\end{equation}
which verifies the correctness of \textbf{Theorem \ref{thm:networklaw}}.
\end{proof}



A classification neural network can be interpreted as a disjoint rule-based system  $A_1 \lor A_2 \cdots \lor A_n$ by splitting the representation of a neural network into many decision polytopes:       IF $(input \in \text{certain polytope})$, THEN ($input$ belongs to some class).  Furthermore, each rule is a local function supported over a decision region.  

\textbf{Remark 3.} The De Morgan equivalence in Figure \ref{DWnetwork} can be summarized as
\begin{equation}
\begin{aligned}
\mathbf{H}_2(\x) & \simeq     A_1 \lor A_2 \cdots \lor A_M  \\
&= \neg\Big( (\neg A_1) \land (\neg A_2) \cdots  \land (\neg A_M) \Big) \simeq \mathbf{H}_1(\x),
\end{aligned}
\end{equation}
when the rules are based on hypercubes. \textbf{Theorem \ref{thm:networklaw}} corresponds to 
\begin{equation}
    \mathbf{H}_2(\x)\simeq     A_1 \lor A_2 \cdots \lor A_M \simeq \mathbf{H}_1(\x),
\end{equation}
when rules are based on simplices. 

\section{Quasi-Equivalence Extended to Networks of ``Quadratic Neurons''}

\subsection{Preliminaries}

\begin{definition}
Quadratic neurons \citep{b2, b25, fan2020universal} integrate the $n$-variable input $\bf x$ with a quadratic function as follows before being nonlinearly processed:
\begin{equation}
\begin{aligned}
h(\x)&=(\sum_{i=1}^{n} w_{ir}x_i +b_r)(\sum_{i=1}^{n} w_{ig}x_i +b_g) + \sum_{i=1}^{n} w_{ib}x_{i}^2+c \\
&=(\textbf{w}_{r}\x^\top+b_{r})(\textbf{w}_{g}\x^\top+b_{g})+\textbf{w}_{b}(\x\odot \x)^\top+c,
\end{aligned}
\end{equation}
where $\bf w_r,\bf w_g, \bf w_b$ are vectors of the same dimensionality as that of $\bf x$, $b_r, b_g, c$ are biases, and $\odot$ is the Hadamard product.
\end{definition}

\ff{The network using quadratic neurons is interesting in several ways: 1) Over the past years the design of neural networks has been focusing on architectures, such as shortcut connections, transformer structure, etc. Almost exclusively, the mainstream deep learning models are constructed with neurons of the same type, which are characterized by the inner product and nonlinear activation. Despite that an artifical neural network was invented via biomimicry, the current artificial networks and a biological neural system are fundamentally different in terms of neuronal diversity and complexity. As we know, a biological neural system coordinates numerous types of neurons to support intellectual behaviors. To fill in this gap, we believe that neuronal diversity should be taken into account in machine learning; 2) Due to the enhanced expressive power at the neuronal level, quadratic neurons have been used in real-world problems and achieved superior performance, such as medical imaging \citep{fan2019quadratic}, civil engineering \citep{ji2021prediction}, applied math \citep{bu2021quadratic}, and so on. Given the utility of quadratic neurons, research on networks with quadratic neurons is attractive.}

Hereafter, networks made of quadratic neurons are referred to as quadratic networks. We refer to the neurons using inner-product as the conventional neurons, and corresponding networks are called conventional networks. Please note that quadratic neurons are distinct from the conventional neurons that use quadratic activation. The decision boundary of the latter is still linear. The key characteristic of quadratic neurons is the employment of the quadratic function replacing the inner-product, and the choice of activation functions is flexible. Hereafter, we refer to networks made of quadratic neurons as quadratic networks.

\begin{prop} Any univariate polynomial of degree $N$ can be perfectly computed by a quadratic ReLU network with the depth of $\log_2(N)$ and the width of $N$ \citep{fan2020universal}. 
\label{prop:AlgebraicStructure}
\end{prop} 

\begin{proof}
Please refer to \citep{fan2020universal} for a detailed proof. Any univariate polynomial of degree $N$ can be factorized as $\prod_{j=1}^{N/2}(r_{j} x^2+s_{j}x+t_{j})$, without the incorporation of complex numbers. Due to the identity: $f(x) = \sigma(f(x)) - \sigma(-f(x))$, every two quadratic neurons $\sigma(r_{j} x^2+s_{j}x+t_{j})$ and $\sigma(-(r_{j} x^2+s_{j}x+t_{j}))$ can be ensembled together to perfectly express $r_{j} x^2+s_{j}x+t_{j}$ in the first layer of a network, followed by the half number of quadratic neurons in the second layer to combine the yields of the first layer, and so on and so forth. Consequently, a quadratic ReLU network with a depth of $\log_2(N)$ and a width of $N$ can express any univariate polynomial of degree $N$. 
\end{proof}

\begin{prop} [Stone-Weierstrass Theorem \citep{de1959stone}] Let $f(x)$ be a continuous real-valued function over $[a,b]$, then for any $\sigma>0$, there exists a polynomial $P(x)$, satisfying
\begin{equation}
    \underset{x\in [a,b]}{\sup}  |f(x)-P(x)| < \sigma.
\end{equation}
\label{StoneWtheorem}
\end{prop}

\begin{prop}[Kolmogorov-Arnold representation theorem \citep{b29}] For any continuous function $f(x_1,\cdots,x_D)$ with $D\geq 2$, there exists a group of continuous functions: 
$\phi_{t,s}$ and $\Phi_t$, where $t = 0,1,\cdots,2D$ and $s =1,2,\cdots,D$, such that
\begin{equation}
    f(x_1,x_2,\cdots,x_D) = \sum_{t = 0}^{2D}\Phi_t\left(\sum_{s=1}^{D}\phi _{t,s}(x_s)\right).
\end{equation}
\label{KATheorem}
\end{prop}

\ff{In the above propositions, \textbf{Proposition} 1 shows a quadratic ReLU network can express a univariate polynomial through factorization, which corresponds to the left side of Eq. \eqref{contiued}. At the same time, this quadratic network is wide. \textbf{Proposition} 2 suggests that a polynomial can represent any continuous function. \textbf{Proposition} 3 informs us that a multi-variate function can be expressed as a combination of univariate functions, which serves as a bridge to generalize our result from approximation to univariate functions to the case of multivariate functions.}

\subsection{Motivating Example}

In contrast to the factorization representation, there is a continued fraction representation for a general univariate polynomial: 
\begin{equation}
\begin{aligned}
    & \sum_{i=0}^{2N} a_{i} x^i = \sum_{k=0}^{N} a_{2k} x^{2k} + \sum_{k=0}^{N} a_{2k+1} x^{2k+1}  \\
  =  &  \cfrac{b_0}{1-\cfrac{b_1 x^2}{1+b_1 x^2-\cdots \cfrac{b_{N-1} + x^2}{1+b_{N-1}x^2- \cfrac{b_N x^2}{1 + b_N x^2} }}} + \cfrac{c_0x}{1-\cfrac{c_1 x^2}{1+c_1 x^2-\cdots \cfrac{c_{N-1} x^2}{1+c_{N-1}x^2- \cfrac{c_N x^2}{1 + c_N x^2} }}}, \\
\end{aligned}     
    \label{contiued}
\end{equation}
where $a_i \neq 0$, $b_0 = a_0$, $b_k = a_{2k}/a_{2k-2}, k\geq 1$ and $c_0 = a_1$, $c_k = a_{2k+1}/a_{2k-1}, k\geq 1$. In the right side of Eq. \eqref{contiued}, the first part contains the terms with even powers of $x$, while the second part with the odd powers of $x$. 
In \textbf{Supplementary Information V}, we prove the correctness of Eq. \eqref{contiued} in detail. 

\begin{figure*}[h]
\center{\includegraphics[width=0.8\linewidth] {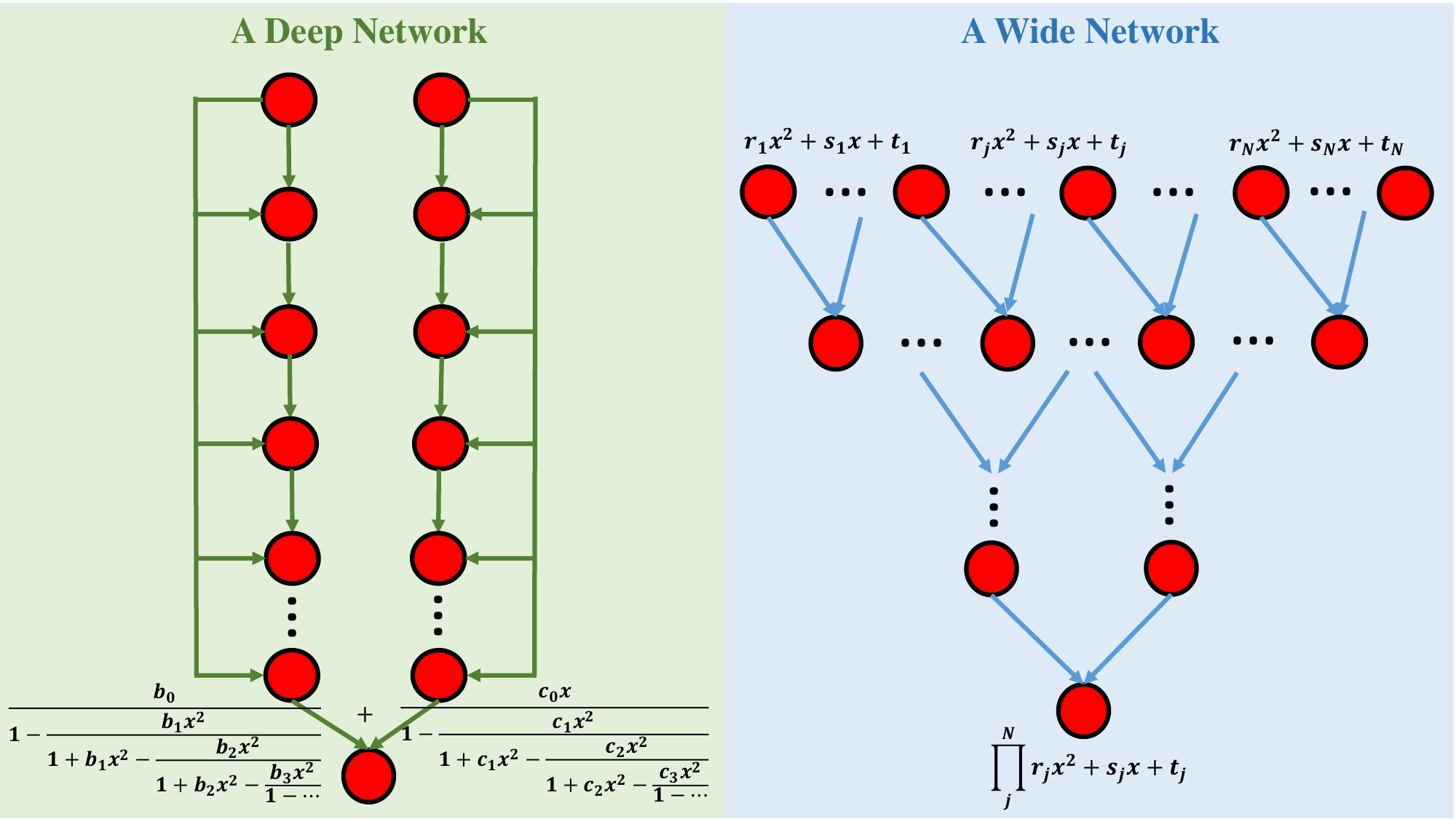}}
\caption{The width and depth equivalence for networks of quadratic neurons. In this construction, a deep network is to implement the continued fraction of a polynomial, and a wide network reflects the factorization of the polynomial.}
\label{DWnetwork_continued}
\end{figure*}

Since both the factorization representation and the continued fraction representation express the same polynomial, we have the following identity:
\begin{equation}
\begin{aligned}
    & \prod_{j}^{N}(r_{j} x^2+s_{j}x+t_{j}) \\
    =& \cfrac{b_0}{1-\cfrac{b_1 x^2}{1+b_1 x^2-\cdots \cfrac{b_{N-1} x^2}{1+b_{N-1}x^2- \cfrac{b_N x^2}{1 + b_N x^2} }}} +  \cfrac{c_0x}{1-\cfrac{c_1 x^2}{1+c_1 x^2-\cdots \cfrac{c_{N-1} x^2}{1+c_{N-1}x^2- \cfrac{c_N x^2}{1 + c_N x^2} }}}, 
\end{aligned}
\label{ExtendedDeMorgan_section4}
\end{equation}
where $r_j, s_j, t_j$ and $b_i, c_i$ are intrinsically connected. Let $\mathbb{Q}_j = r_j x^2 + s_j x + t_j$, $\mathbb{B}_l (x) = \frac{b_l x^2}{1 + b_l x^2- \mathbb{B}_{l+1} (x)}$, where $\mathbb{B}_N (x) = \frac{b_N x^2}{1 + b_N x^2}$ and $\mathbb{B}_0 (x) = \frac{b_0}{1 - \mathbb{B}_1 (x)}$, and $\mathbb{C}_l (x) = \frac{c_l x^2}{1 + c_l x^2- \mathbb{C}_{l+1} (x)}$, $\mathbb{C}_N (x) = \frac{c_N x^2}{1 + c_N x^2}$ and $\mathbb{C}_0 (x) = \frac{c_0 x}{1 - \mathbb{C}_1 (x)}$, we can express Eq. \eqref{ExtendedDeMorgan_section4} concisely as follows:
\begin{equation}
  \mathbb{Q}_1 \times \mathbb{Q}_2 \times \cdots \mathbb{Q}_j \times \cdots \mathbb{Q}_N = \mathbb{B}_{0} \circ \mathbb{B}_{1} \circ \cdots \mathbb{B}_{l} \circ \cdots \mathbb{B}_{N} + \mathbb{C}_{0} \circ \mathbb{C}_{1} \circ \cdots \mathbb{C}_{l} \circ \cdots \mathbb{C}_{N},
\label{condensedDM_section4}
\end{equation}
which could be somehow analogized to the De Morgan law by considering that multiplication and composition operations replace $\lor$ and $\land$ operations, respectively, for either side of Eq. \eqref{condensedDM_section4}. But the objects of those operations are global functions instead of local rules. Eq. \eqref{condensedDM_section4} is inspiring in the sense that the left side is of a parallel computational structure, and the right side is with two nested structures. Clearly, the left and right sides of Eq. \eqref{condensedDM_section4} suggest a wide network and a deep network, respectively. 

In Figure \ref{DWnetwork_continued}, a deep quadratic network is constructed to represent the right side of Eq. \eqref{condensedDM_section4}, while a wide quadratic network is constructed to represent the left side. In light of Eq. \eqref{condensedDM_section4}, these two quadratic networks are functionally equivalent to each other. In the case of the deep network, if the polynomial is of degree $2N$, then the depth of the deep network is $N$. The inter-layer relationship in the two branches are respectively 
$y_{l+1}^{(1)}(x) = \frac{b_{N-l} x^2}{1+b_{N-l} x^2 -y_{l}^{(1)}(x)}$, $y_{N+1}^{(1)} = \frac{b_0}{1-y_{N}^1}$, and  $y_{l+1}^{(2)}(x) = \frac{c_{N-l} x^2}{1+c_{N-l} x^2 - y_{l}^{(2)}(x)}$,
$y_{N}^2 = \frac{c_0 x}{1-y_{N}^2}$, where $y_l^{(1)}$ and $y_l^{(2)}$ are the outputs of the $l^{th}$ layer in either branch. Specially, $y_0^{(1)} = y_0^{(2)} = 0$. In this deep network, the activation function is in a form of reciprocal relation between two inputs: $z(x,y) = \frac{ax}{1+ax-y}$, which can be well approximated by a ReLU function after a proper normalization operation. In \textbf{Supplementary Information V}, we demonstrate that the reciprocal activation works effectively in the deep network after some twists. As for constructing the wide network, we employ Proposition \ref{prop:AlgebraicStructure} directly. Suppose that the polynomial is of degree $2N$, the resultant quadratic network has a width of $2N$ and a depth of only $\log_2(2N)$, where the width dominates. 

Despite the novelty, the above equivalence only fits univariate polynomials. To make a broader impact, we further demonstrate that in terms of representing a multivariate polynomial, the width and the depth of a neural network is also equivalent to each other.

\subsection{Width-Depth Quasi-Equivalence}

This section delineates the second contribution of our paper. Particularly, we first show that a combination of univariate polynomials can approximate any continuous multivariate function. Then, we leverage the constructed wide and deep quadratic networks for univariate polynomials to represent a general continuous multivariate function. As a result, the resultant wide and deep networks offer the same functionality.

\begin{lemma} For any continuous function $f:[0, 1]^D \rightarrow \mathbb{R}$, given $\delta>0$, there exists a function formulated as $\sum_{t=0}^{2D}P_t(\sum_{s=1}^{D}P_{t,s}(x_t))$, where $P_{t,s}$ and $P_t$ are univariate polynomials, such that 
\begin{equation}
    \underset{[x_1,x_2,...,x_D] \in [0,1]^D}{\sup} \left | f(x_1,x_2,...,x_D) - \sum_{t=0}^{2D}P_t(\sum_{s=1}^{D}P_{t,s}(x_t)) \right | < \delta.
\end{equation} 
\label{lemma:polynomials}
\end{lemma}

\begin{proof} 
First, we apply the famous Kolmogorov-Arnold theorem to express $f$:
\begin{equation}
    f(x_1,x_2,\cdots,x_D) = \sum_{t = 0}^{2D}\Phi_t\left(\sum_{s=1}^{D}\phi _{t,s}(x_s)\right),
\end{equation}
where $\phi_{t,s}(x_s)$ and $\Phi_t$ are continuous. 
According to Proposition \ref{StoneWtheorem}, for every function $\phi_{t,s}(x_s)$, given an arbitrarily small quantity $\epsilon_{t,s}>0$ , there exists a polynomial $P_{t,s}(x_s)$ such that
\begin{equation}
    \underset{x_s \in [0,1]}{\sup} \left | \phi_{t,s}(x_s) - P_{t,s}(x_s) \right | < \epsilon_{t,s}.
\end{equation} 
Combining $\phi_{t,1}(x_1), \phi_{t,2}(x_2), ..., \phi_{t,D}(x_D)$ and applying the triangle inequality, we have
\begin{equation}
\begin{aligned}
 &   \underset{[x_1,x_2,...,x_D]^D \in [0,1]^D}{\sup} |\sum_{s=1}^{D}\phi _{t,s}(x_s)-\sum_{s=1}^{D}P_{t,s}(x_s)|  \\
 & \leq \underset{[x_1,x_2,...,x_D]^D \in [0,1]^D}{\sup} \sum_{s=1}^{D}|\phi _{t,s}(x_s)-P_{t,s}(x_s)|  \\
 & < \sum_{s=1}^{D} \epsilon_{t,s}.
\end{aligned}
\end{equation}
Because $\Phi_t$ is a continuous function, we can use the property of continuity. We choose a sufficiently small $\epsilon_{t,s}, s=1,...,D$ such that for every $\Phi_t$, the following inequality holds:
\begin{equation}
        \underset{[x_1,x_2,...,x_D]^D \in [0,1]^D} {\sup}   |\Phi_t(\sum_{s=1}^{D}\phi _{t,s}(x_s))-\Phi_t(\sum_{s=1}^{D}D_{t,s}(x_s))| 
     < \frac{\delta}{4n+2}.
\end{equation}
With respect to the continuous function $\Phi_t$, we can find a polynomial $P_t$ such that
\begin{equation}
\begin{split}
    \underset{x \in \mathbb{R}}{\sup}  \left | \Phi_t(x) - P_t(x) \right | < \frac{\delta}{4n+2}.
\end{split}
\end{equation}
Next, applying the triangle inequality again, we have
\begin{equation}
\begin{aligned}
  & \underset{[x_1,x_2,...,x_D]^D \in [0,1]^D}{\sup} |\Phi_t(\sum_{s=1}^{D}\phi _{t,s}(x_s))-P_t(\sum_{s=1}^{D}P_{t,s}(x_s))| \\
  & \leq \underset{[x_1,x_2,...,x_D]^D \in [0,1]^D}{\sup} |\Phi_t(\sum_{s=1}^{D}\phi _{t,s}(x_s))-\Phi_t(\sum_{s=1}^{D}P_{t,s}(x_s))| \\
  & +  \underset{[x_1,x_2,...,x_D]^D \in [0,1]^D}{\sup} |\Phi_t(\sum_{s=1}^{D}P_{t,s}(x_s))-P_t(\sum_{s=1}^{D}P_{t,s}(x_s))| \\
   & < \frac{\delta}{2n+1}.
\end{aligned}
\end{equation}
Finally, integrating $\Phi_t, t=0,...,2D$, we immediately obtain that 
\begin{equation}
\begin{aligned}
& \underset{[x_1,x_2,...,x_D]^D \in [0,1]^D}{\sup}  |\sum_{t=0}^{2D} \Phi_t(\sum_{s=1}^{D}\phi _{t,s}(x_s))-\sum_{t=0}^{2D}  P_t(\sum_{s=1}^{D}P_{t,s}(x_s))| \\
& \leq \underset{[x_1,x_2,...,x_D]^D \in [0,1]^D}{\sup}  \sum_{t=0}^{2D} |\Phi_t(\sum_{s=1}^{D}\phi _{t,s}(x_s))- P_t(\sum_{s=1}^{D}P_{t,s}(x_s))| \\
& < \delta,
   \end{aligned}
\end{equation}
which concludes the proof.
\end{proof}

\begin{theorem}[Quasi-Equivalence by the Extension of the De Morgan Law]
Given a continuous function $h:[0, 1]^D \rightarrow \mathbb{R}$, for any $\delta>0$, there exists a function expressed as $\sum_{t=0}^{2D}P_t(\sum_{s=1}^{D}P_{t,s}(x_t))$ such that
\begin{equation}
        \underset{\x \in [0,1]^D}{\sup} \left | h(\x) - \sum_{t=0}^{2D}P_t(\sum_{s=1}^{D}P_{t,s}(x_t)) \right | < \delta,
\end{equation}
where $P_t$ and $P_{t,s}$ are polynomials of degrees $\deg(P_t)$ and $\deg(P_{s,t})$. Correspondingly, let $K_1 = \max_{t} [\deg(P_t)]$ and $K_2 = \max_{\{s,t\}}[\deg(P_{s,t})]$, there exist a wide quadratic network $\mathbf{Q}_1$ of a width $\max\{K_1,K_2\}$ and a depth $\log_2 (K_1 K_2)$ and a deep quadratic network $\mathbf{Q}_2$ of a width $2(2D+1)$ and a depth $K_1+K_2$, satisfying 
\begin{equation}
    \begin{aligned}
        & \underset{\x \in [0,1]^D}{\sup} \left | h(\x) - \mathbf{Q}_1 (\x) \right | < \delta \\
        & \underset{\x \in [0,1]^D}{\sup} \left | h(\x) - \mathbf{Q}_2 (\x) \right | < \delta .
    \end{aligned}
\end{equation}
\label{thm:RealDuality}
\end{theorem}

\begin{proof}
To prototype the wide network, we can use the wide quadratic sub-network scheme in Figure \ref{DWnetwork_continued} to express $P_t$ and $P_{t,s}$ whose [width, depth] are $[\deg(P_t), \log_2 (\deg(P_t))]$ and $[\deg(P_{t,s}), \log_2 (\deg(P_{t,s}))]$, respectively. A straightforward combination of these wide sub-networks can express $\sum_{t=0}^{2D}P_t(\sum_{s=1}^{D}P_{t,s}(x_t))$. Thus, the width of the construction is the summation of the widths of all the sub-networks: $\max\{\sum_{t=0}^{2D} \deg(P_t), \sum_{t=0}^{2D} \sum_{s=1}^D \deg(P_{t,s})\} \geq \max \{K_1, K_2\}$, while the depth is $\max_t [\log_2 (\deg(P_t)] + \max_{t,s} [\log_2 (\deg(P_{t,s}))] = \log_2 (K_1) +  \log_2(K_2) = \log_2 (K_1 K_2)$. 

For the deep network, we use the deep quadratic sub-network shown in Figure \ref{DWnetwork_continued} to express $P_t$ and $P_{t,s}$ whose [width, depth] are $[2, \deg(P_t)]$ and $[2,\deg(P_{t,s})]$, respectively.
Similarly, by integrating these deep sub-networks, we can have a deep network that expresses $\sum_{t=0}^{2D}P_t(\sum_{s=1}^{D}P_{t,s}(x_t))$. As a result, the width of the derived network is $2(2D+1)$, and the depth is $\max_t [\deg(P_t)] + \max_{t,s} [\deg(P_{t,s})] = K_1+K_2$.
\end{proof}

\ff{In \textbf{Supplementary Information IV,} we use the quasi-equivalence relationship to construct wide and deep quadratic network variants for the same task, verify their validity on the MNIST dataset, and show empirical hints that a wide network might have an enhanced robustness, since adversarial samples have much less room to perturb latent features and play tricks.}

\section{Discussions}

\ff{\textbf{Gains of Equivalence Notion.} Due to the importance of width and depth, a width-depth equivalence theory should have a major impact on deep learning. Some commonly-used deep learning theories, such as neural tangent kernel (NTK) and neural network Gaussian process (NNGP), were developed assuming an infinite width. Naturally, in the era of deep learning we are more concerned with how depth affects the behavior of a neural network. Therefore, depth-oriented theories would be highly desirable. Our work suggests that depth-oriented NTK and NNGP might be feasible since width and depth are equivalent. 2) Our work suggests a significant potential of wide networks in practical applications. Currently, wide networks are less popular than deep networks due to their sub-optimal performance. However, the reason why wide networks do not work as well might be related to the training strategies instead of the model capacities, since width and depth are equivalent in representation power. If wide networks can be trained to match the performance of deep networks, they suggest an alternative path to develop neural networks.  }

\textbf{Equivalent Networks.} In a broader sense, our quasi-equivalence studies demonstrate the existence of mutually equivalent networks. We argue that the network equivalence is useful in network design. Although deep networks manifest superb power, their applications can be constrained, for example, when the application is time-critical. In that case, we can convert the deep network to a wide counterpart that can be executed at a high speed. A direction for network design optimization is to derive a compact network that maintains a high performance of an original large network through quantization \citep{wu2016quantized}, pruning \citep{li2016pruning}, distillation \citep{polino2018model}, low-rank approximation \citep{zhang2015efficient}, etc. We envision that the equivalence of a deep network and a wide network suggests a new means of network design. Ideally, a wide network can replace the well-trained deep network without compromising the performance. Due to the parallel nature, the wide network can be trained on a computing cluster with many machines for the fast training. At the same time, the inference time of the equivalent wide network is shorter than its deep counterpart.

\textbf{Width-Depth Correlation.} Every continuous $n$-variable function $f$ on $[0,1]^n$ can be in the $L_1$ sense represented by partially separable multivariate functions ~\citep{light2006approximation}: 
\begin{equation}
    \int_{{(x_1,\cdots,x_n)\in [0,1]^n}}  |f(x_1,\cdots,x_n) - \sum_{l=1}^L \prod_{i=1}^n \phi_{li}(x_{i})| < \epsilon,
\end{equation}
where $\epsilon$ is an arbitrarily small positive number, $\phi_{li}$ is a continuous function, and $L$ is the number of products. In the \textbf{Supplementary Information V and VI}, we justify the suitability of this partially separable representation by showing its boundedness, and comparing it with other representations.

Further, we correlate the width and depth of a network to the structure of a function to be approximated. In a nutshell, each continuous function $\phi_{li}$ can be approximated by a polynomial of some degree, which can be appropriately represented by quadratic neurons. As a consequence, via a quadratic representation scheme, the width and depth of a network structure must reflect the complexity of $\sum_{l=1}^L \prod_{i=1}^n \phi_{li}(x_{i})$. In other words, they are controlled by the nature of a specific task. As the task becomes complicated, the width and depth must increase accordingly, and the combination of the width and depth is not unique. For more details, please see the \textbf{Supplementary Information VII}. 

\textbf{Effects of Width on Optimization and Generalization.} In the \textbf{Supplementary Information VIII}, we illustrate the importance of width on optimization in the context of over-paramterization, kernel ridge regression, and NTK, and then report our findings that the existing generalization bounds also shed light on the relationship of the width and depth given a fixed complexity. 

\section{Conclusion}

Inspired by the De Morgan law and through a systematic analysis, we have established the quasi-equivalence between the depth and width of ReLU neural networks from two perspectives. In the first perspective, we have formulated two transforms for mapping an arbitrary regression/classification ReLU network to a wide ReLU network and a deep ReLU network, respectively. In the second perspective, we have extended our quasi-equivalence results from ReLU networks of popular artificial neurons to those of quadratic neurons. This quasi-equivalence represents a step forward in developing a unified deep learning theory. More efforts are needed in the future to refine this quasi-equivalence relationship and find real-world applications.


\acks{F. L. Fan is supported by the Rensselaer-IBM AI Research Collaboration Program (\url{http://airc.rpi.edu}), part of the IBM AI Horizons Network (\url{http://ibm.biz/AIHorizons}), R. Lai is partially supported by an NSF Career Award DMS–1752934 and NSF DMS-2134168. G. Wang is partially supported by an NIH R01 CA237267 grant.}


\newpage

\appendix
\section*{Appendix A. Proof of \textbf{Theorem \ref{thm:maind=2}} (2D) }

Here, we show the correctness of Theorem \ref{thm:maind=2} in the 2D case. Regarding the transformation of an arbitrary multivariate network, the situation is more complicated than in the case of univariate networks. Nevertheless, we are able to establish the $\delta$-equivalence, which is a slightly relaxed result. 

\textbf{The sketch of proof:} A ReLU network is a piecewise linear function over polytopes, which can be decomposed into a summation of linear functions over a simplex. \textbf{Lemma} \ref{lem:wide_deep_modules} shows that a wider network module $\mathbf{N}_1(\x)$ and a deeper network module $\mathbf{N}_2(\x)$ can represent an arbitrary linear function over a simplex. Next, in \textbf{Theorem} \ref{thm:maind=2}, to transform an arbitrary ReLU network into a wide and a deep network, we horizontally aggregate network modules $\mathbf{N}_1(\x)$ to have a wide network, and we use shortcuts to sequentially establish a deep network with $\mathbf{N}_2(\x)$.

A $D$-simplex $S$ is a $D$-dimensional convex hull provided by convex combinations of  $D+1$ affinely independent vectors $\{\v_i\}_{i=0}^D \subset \real^D$. In other words, $\displaystyle S = \left\{ \sum_{i=0}^D \xi_i \v_i ~|~ \xi_i \geq 0, \sum_{i=0}^D \xi_i = 1 \right \}$. In 2D case, if we write $V = (\v_1 - \v_0,\v_2 - \v_0)$, then $V$ is invertible and $S = \left\{\v_0 + V\x ~|~ \x \in \Delta \right\}$, where $\Delta = \left\{\x\in\real^2~|~ \x \geq 0, \mathbf{1}^\top \x \leq 1 \right \}$ is a template simplex in $\real^2$. It is clear that the following one-to-one affine mapping between $S$ and $\Delta$ exists, which is  
\begin{equation}
    T:S\rightarrow \Delta, \p \mapsto T(\p) = V^{-1} (\p - \v_0).
\label{eqn:simplex_transform}    
\end{equation}
Therefore, we only need to make the construction in the special case where $S = \Delta$ to simplify our analysis. The coordinate transform in Eq. \eqref{eqn:simplex_transform} can conveniently map the construction from $\Delta$ to $S$. 

Given a linear function $\ell(\x) = c_1x_1 + c_2x_2 + c_3$, we write $\ell^{-} = \{\x\in\real^2~|~ \ell(\x) <0\}$ and $\ell^{+} = \{\x\in\real^2~|~ \ell(\x) \geq 0\}$. $\Delta$ is enclosed by three lines provided by $\ell_1(\x)=x_1$, 
$\ell_2(\x)= x_2 $, and $\ell_3(\x)=-x_1 -x_2 +1$. We write three vertices of $\Delta$ as $\v_0 = (0,0), \v_1 = (1,0), \v_2 = (0,1)$. Then, $f: [-B,~B]^2\rightarrow\real$ supported on $\Delta$ is expressed as follows:
\begin{equation}
f(\x) = \left\{\begin{array}{cc}  \a^\top \x + b,& \textbf{if}\quad  \x \in \Delta\\
0,& \ \textbf{if} \quad \x\in \Delta^{c}
\end{array}\right.,
\label{eqn:2Df}
\end{equation}
where $\a = (f(\v_1) - f(\v_0),f(\v_2) - f(\v_0)), b = f(\v_0)$.

\begin{lemma}
Suppose that the representation of an arbitrary ReLU network is $f: [-B,~B]^D \to \mathbb{R}$ expressed as Eq. \eqref{eqn:2Df},
for any $\delta>0$, there exist a ReLU network $\mathbf{N}_1$ of width $D(D+1) (2^{D}-1)+2$ and depth $D$, and also a ReLU network $\mathbf{N}_2$ of width $(D+1)D^2$ and depth $D+1$, satisfying that
\begin{equation}
    \begin{aligned}
    & \m\Big(\x~|~f(\x)\neq \mathbf{N}_1(\x)\}\Big) < \delta \\
    & \m\Big(\x~|~f(\x)\neq \mathbf{N}_2(\x)\}\Big) < \delta.
    \end{aligned}
\end{equation}
\label{lem:wide_deep_modules}
\end{lemma}

\begin{proof}(\textbf{D=2})
Our goal is to approximate the given piecewise linear function $f$ over $\Delta$; therefore, we need to cancel $f$ outside its domain. We first index the polytopes separated by three lines $\ell_1(\x) = 0, \ell_2(\x)= 0$, and $\ell_3(\x)=0$ as $\mathcal{A}^{(\chi_1, \chi_2, \chi_3)}=\ell_1^{\chi_1} \cap \ell_2^{\chi_2} \cap \ell_3^{\chi_3}, \chi_i \in \{+,-\}, i=1,2,3$. It is clear that $\Delta = \A^{(+,+,+)}$. In addition, we use $\vee$ to exclude a component. For instance, $\mathcal{A}^{(\chi_1, \vee, \chi_3)} =  \ell_1^{\chi_1}  \cap \ell_3^{\chi_3}$. It can be easily verified that $\mathcal{A}^{(\chi_1, \vee, \chi_3)} = \mathcal{A}^{(\chi_1, +, \chi_3)}\cup \mathcal{A}^{(\chi_1, -, \chi_3)}$. 

\underline{Constructing $\mathbf{N}_1(\x)$:} 
The discontinuity of $f$ in Eq. \eqref{eqn:2Df} is a major challenge of representing the function using a ReLU network. To tackle this issue, we start from a linear function $\tf(\x) = \a^\top \x + b, \forall \x\in\real^2$, which can be represented by two neurons  $\sigma\circ \tf-\sigma\circ (-\tilde{f})$. The key idea is to eliminate $f$ over all polytopes outside $\Delta$. In other words,  $\tf$ over three fan-shaped polytopes  $\A^{(\vee,-,+)},\A^{(-,+,\vee)}$, and $\A^{(+,\vee,-)}$  should be cancelled. 

Let us take the polytope $\A^{(+,\vee,-)}$ as an example. 
Note that $\A^{(+,\vee,-)}$ has two boundaries $\ell_1(\x) = 0$ and $\ell_3(\x) = 0$ as illustrated in Figure \ref{fig:2D_simplex_W}(b). We choose a sufficiently large positive number $\mu$ to construct the three fan-shaped functions:
\begin{equation}
\begin{aligned}
      & F_1^{(+,\vee,-)}(x_1,x_2) = \sigma(x_1 - \mu \sigma(-x_1-x_2+1)) \\
        & F_2^{(+,\vee,-)}(x_1,x_2) = \sigma(x_1 -\eta x_2 - \mu \sigma(-x_1-x_2+1)) \\
                & F_3^{(+,\vee,-)}(x_1,x_2) =\sigma ( x_1 -\eta - \mu \sigma (-x_1-x_2+1)),
\end{aligned}
\end{equation}
where the positive number $\eta$ is chosen to be small enough such that  the lines $ x_1 -\eta x_2 =0$ and $ x_1  -\eta=0$ are very close to $x_1 =0$, then  $\m((x_1)^+\cap (x_1 -\eta x_2)^{-}) < 2\sqrt{2}B\eta$ and $\m((x_1)^+\cap (x_1 -\eta)^{-}) < 2\sqrt{2}B\eta$.  

\begin{figure*}[hbt!]
\centering
\includegraphics[width=0.8\linewidth]{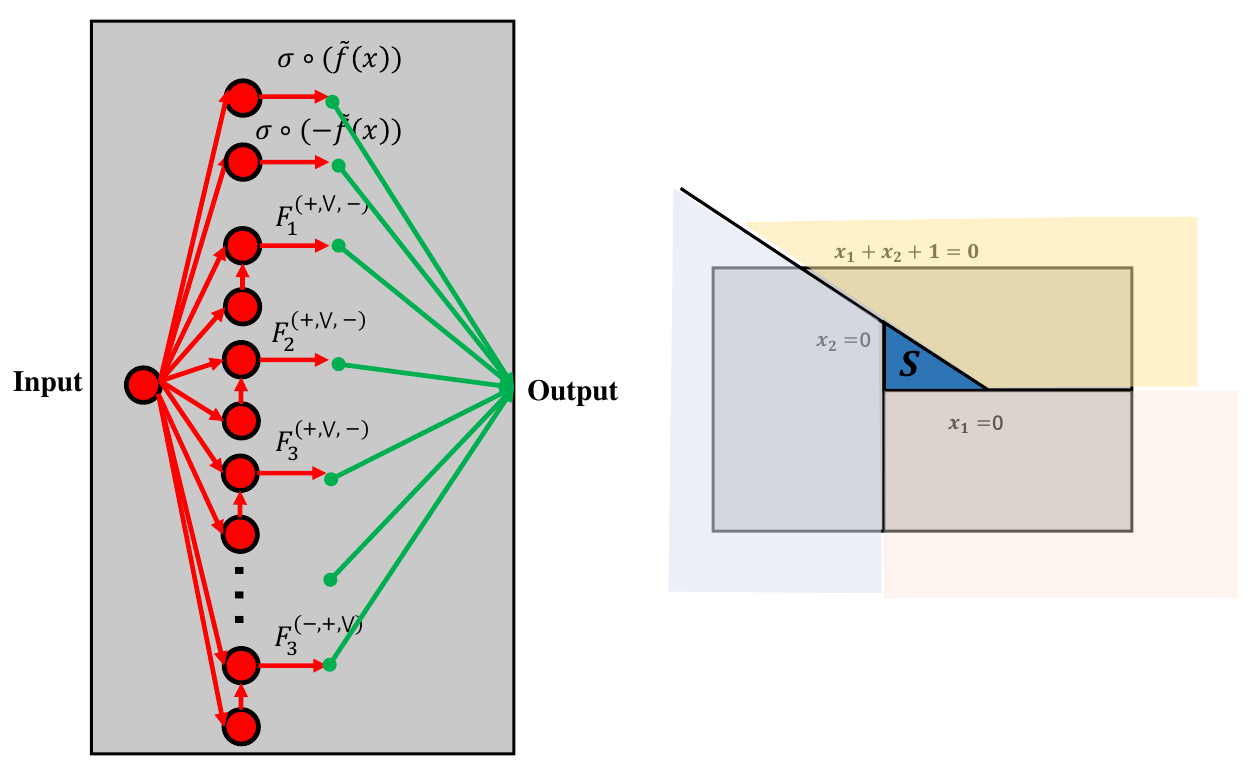}
\caption{Quasi-equivalence analysis in 2D case. Left: The structure of the wide network to represent $f$ over $\Delta$, where two neurons denote $f$ over $[-B,B]^2$ and nine fan-shaped functions handle the polytopes outside $\Delta$. Right: The polytopes outside $\Delta$ comprise of three fan-shaped domains, on which $f$ can be cancelled by three linearly independent fan-shaped functions.}
\label{fig:2D_simplex_W}
\end{figure*}

According to the aforementioned properties of fan-shaped functions, we approximately have 
\begin{equation}
\begin{aligned}
    F_1^{(+,\vee,-)}(\x) &= x_1, \quad\quad\quad\quad   \forall \x\in\A^{(+,\vee,-)}\\
    F_2^{(+,\vee,-)}(\x) &= x_1 - \eta x_2, \quad \forall \x\in\A^{(+,\vee,-)}\backslash \Big((x_1)^+\cap (x_1 -\eta x_2)^{-} \Big)   \\
     F_3^{(+,\vee,-)}(\x) &= x_1 - \eta , \quad\quad \forall \x\in\A^{(+,\vee,-)} \backslash \Big( (x_1)^+\cap (x_1 -\eta)^{-}\Big).
\end{aligned}
\end{equation} 
Let us find $\omega_1^*,\omega_2^*,\omega_3^*$ by solving
\begin{equation}
\begin{bmatrix}
1 & 1 & 1 \\
0 & -\eta & 0  \\
0 & 0 & -\eta
\end{bmatrix}\begin{bmatrix}
\omega_1 \\
\omega_2  \\
\omega_3
\end{bmatrix} = -\begin{bmatrix}
a_1 \\
a_2  \\
b
\end{bmatrix}.
\end{equation}
Based on the property of fan-shaped functions, $F_1^{(+,\vee,-)}(\x), F_2^{(+,\vee,-)}(\x), F_3^{(+,\vee,-)}(\x)$ are approximately constrained into the region $\A^{(+,\vee,-)}$ such that $\tilde{f}(\x)$ is approximately counteracted over $\A^{(+,\vee,-)}$. Mathematically, the new function
$F^{(+,\vee,-)}(\x)=\omega_1^* F_1^{(+,\vee,-)}(\x)+\omega_2^* F_2^{(+,\vee,-)}(\x)+\omega_3^* F_3^{(+,\vee,-)}(\x)$ satisfies
\begin{equation}
\begin{aligned}
    & \m\Big(\{\x\in~\A^{(+,\vee,-)}|~\tf(\x) + F^{(+,\vee,-)}(\x) \neq 0 \} \Big) \\
    & < 2\sqrt{2}B (2\eta + 3/\mu).
\end{aligned}
\end{equation}

Similarly, we can construct $F^{(\vee,-,+)}$ and $F^{(-,+,\vee)}$ to eliminate $\tf$ on $\A^{(\vee,-,+)}$ and $\A^{(-,+,\vee)}$  respectively. 
Finally, these fan-shaped functions are aggregated to form the following ReLU network $\mathbf{N}_1$ (illustrated in Figure \ref{fig:2D_simplex_W}(a)):
\begin{equation}
\begin{aligned}
     \mathbf{N}_1 (\x)=& \sigma\circ (\tf(\x))-\sigma\circ (-\tf(\x)) \\
     & +F^{(\vee,-,+)}(\x) + F^{(+,\vee,-)}(\x)+F^{(-,+,\vee)}(\x),   
\end{aligned}
\end{equation}
where the width and depth of the network are  $2+3\times3\times2=20$ and 2 respectively. In addition, due to the 9 fan-shaped functions being utilized and the effect of the $\eta$, the total area of the regions suffering from errors is no more than 
\begin{equation}
    2\sqrt{2} B (6\eta + 9/\mu).
\end{equation}
Therefore, for any $\delta>0$, as long as we choose $\eta$ and $\mu$ satisfying
\begin{equation}
0 < \eta, 1/\mu < \frac{\delta}{2\sqrt{2}B(6+9)}=\frac{\delta}{30\sqrt{2}B},
\end{equation}
the constructed network $\mathbf{N}$ will have
\begin{equation}
    \m\left(\{\x\in\real^2~|~f(\x)\neq \mathbf{N}_1(\x)\}\right) < \delta.
\end{equation}

\begin{figure*}[hbt!]
\centering
\includegraphics[width=0.8\linewidth]{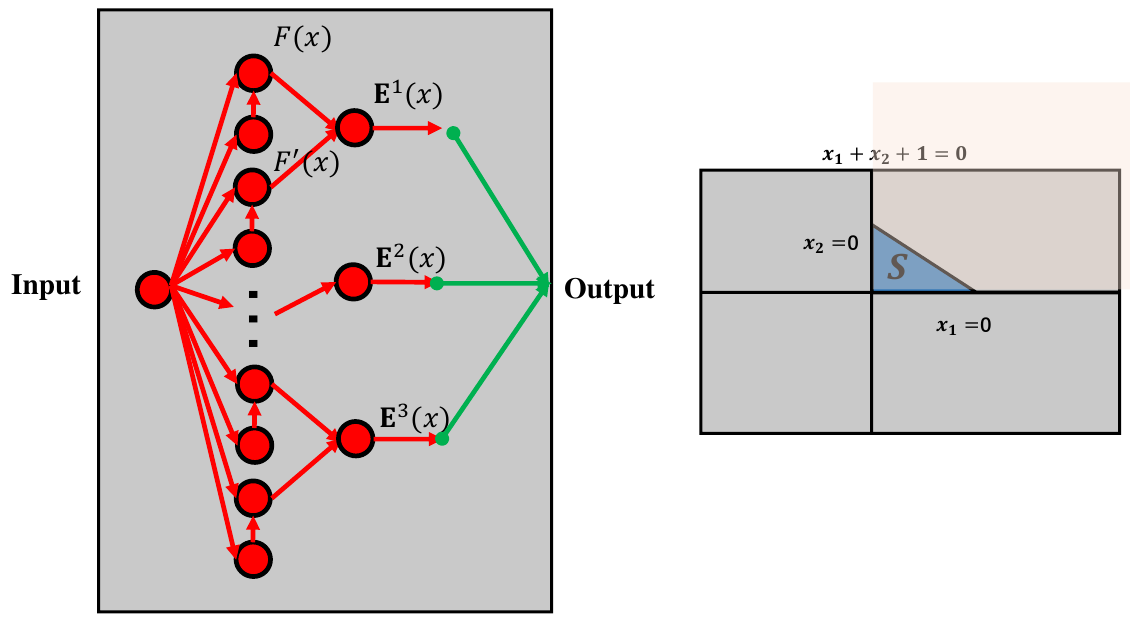}
\caption{Quasi-equivalence analysis in 2D case. Left: The structure of the deep network to represent $f$ over $\Delta$. Right: The polytopes outside $\Delta$ comprise of three fan-shaped domains, on which $f$ can be cancelled by three linearly independent functions over $\Delta$.}
\label{fig:2D_simplex_D}
\end{figure*}

\underline{Constructing $\mathbf{N}_2(\x)$:} Allowing more layers in a network provides an alternate way to represent $f$. Let $F(x_1,x_2) = \sigma \circ (x_1  -\mu \sigma \circ (-x_2))$ and $F^{'}(x_1,x_2) = \sigma \circ ( x_1 -\nu x_2 - \mu \sigma \circ (-x_2))$, both of which are approximately enclosed by boundaries $x_1=0$ and $x_2=0$. Therefore, the fan-shaped regions of $F(x_1,x_2)$ and $F^{'}(x_1,x_2)$ almost overlap as $\nu$ is 
small. The negative sign for $x_2$ is to make sure that the fan-shaped region is $\Delta$.  To obtain the third boundary $\ell_3(\x) =0$ for building the simplex $\Delta$, we stack one more layer with only one neuron to separate the fan-shaped region of $F(x_1,x_2)$ with the boundary $-x_1 -x_2 +1=0$ as follows:
\begin{equation}
    \mathbf{E}^1 (\x) = (\gamma_1^* F(\x) + \gamma_2^* F'(\x)+\gamma_3^*)^+,
\end{equation}
where $(\gamma_1^*, \gamma_2^*, \gamma_3^*)$ are roots of the following system of equations:
\begin{equation}
    \begin{cases}
     \gamma_1 +\gamma_2  = -1& \\
    -\nu \gamma_2  = -1 & \\
     \gamma_3= 1.
    \end{cases}
\end{equation}
Thus, $\mathbf{E}^{1} (x_1,x_2)$ will represent the function $-x_1-x_2 + 1$ over $\Delta$ and zero in the rest area. The depth and width of $\mathbf{E}^1 (x_1,x_2)$ are 3 and 4 respectively. Similarly, due to the employment of the two fan-shaped functions and the effect of $\nu$, the area of the region with errors is estimated as
\begin{equation}
    2\sqrt{2}B\left(\nu+2/\mu \right) .
\end{equation}

To acquire $f$ over $\Delta$, similarly we need three linear independent functions as linear independent bases. We modify $\ell_3$ slightly to get $\ell_3^{'}=\ell_3-\tau' x_1$ and $\ell_3^{''}=\ell_3-\tau'' x_2$. Repeating the procedure described in (1), for $\ell_3^{'}$ we construct the network $\mathbf{E}^{2}(x_1,x_2)$ that is $\ell_3-\tau' x_1$ over $\ell_1^{+}\cap \ell_2^{+} \cap (\ell_3^{'})^{+} $, while for $\ell_3^{''}$ we construct the network $\mathbf{E}^{3}(x_1,x_2)$ that is $\ell_3-\tau' x_1$ over $\ell_1^{+}\cap \ell_2^{+} \cap (\ell_3^{''})^{+} $. We set positive numbers $\tau'$ and $\tau''$  small enough to have two triangular domains $\ell_1^{+}\cap \ell_2^{+} \cap (\ell_3^{'})^{+} $ and $\ell_1^{+}\cap \ell_2^{+} \cap (\ell_3^{''})^{+}$ almost identical with $\Delta$. In addition, let $\tau'$ and $\tau''$ satisfy
\begin{equation}
\begin{bmatrix}
-1 & -1-\tau' & -1\\
-1 & -1 & -1-\tau''  \\
1 & 1& 1
\end{bmatrix} \begin{bmatrix}
\rho_1^*\\
\rho_2^*\\
\rho_3^* 
\end{bmatrix} = \begin{bmatrix}
a_1\\
a_2\\
b
\end{bmatrix},
\end{equation}
where $\rho_1^*,\rho_2^*,\rho_3^*$ are solutions. As a consequence, the deep network (illustrated in Figure \ref{fig:2D_simplex_D} (c)): 
\begin{equation}
  \mathbf{N}_2(\x) = \rho_1^*\mathbf{E}^1(\x)+\rho_2^*\mathbf{E}^{2}(\x)+\rho_3^*\mathbf{E}^{3}(\x)  
\end{equation}
produces $f$ on $\Delta$. The depth and width of the network are 3 and 12. Similarly, the area of the region with errors is bounded above by 
\begin{equation}
    2\sqrt{2}B\left(3\nu+\tau'+\tau''+6/\mu)\right). 
\end{equation}
Therefore, for any $\delta>0$, if we choose
\begin{equation}
   0< \nu, \tau', \tau'', 1/\mu< \frac{\delta}{2\sqrt{2}B(3+2+6)}=\frac{\delta}{22\sqrt{2}B}
\end{equation}
then the constructed network $\mathbf{N}_2$ will satisfy
\begin{equation}
    \m\left(\{\x \in \mathbb{R}^2|f(\x)\neq \mathbf{N}_2(\x)\}\right) < \delta .
\end{equation}
\end{proof}

\begin{proof}
(\textbf{Theorem \ref{thm:maind=2}}, $\mathbf{D=2}$) 
The network $h$ is piecewise linear and splits the space into polytopes. It is feasible to employ a number of simplices to represent the polytopes defined by $h$ \cite{ehrenborg2007perles}. Given that $M$ is the minimum number of required simplices, we have
\begin{equation}
    h(\x) = \sum_{m=1}^M f^{(m)}(\x), 
\end{equation}
where 
\begin{equation}
f^{(m)}(\x) = \left\{\begin{array}{cc}   a_1^{(m)} x_1 + a_2^{(m)} x_2 + b^{(m)},& \textbf{if}\quad  \x \in S^{(m)}\\
0,&  \textbf{if} \quad \x\notin S^{(m)}
\end{array}\right.,
\end{equation}
and $S^{(m)}$ is the $m$-th simplex. For construction of a wide network, we use network modules to represent $f^{(m)}(\x)$ and then horizontally aggregate them into a wide network. In contrast, for construction of a deep network, we sequentially express $f^{(m)}(\x)$ in terms of a network module without linking them to the input. 

\begin{figure*}[hbt!]
\centering
\includegraphics[width=0.95\linewidth]{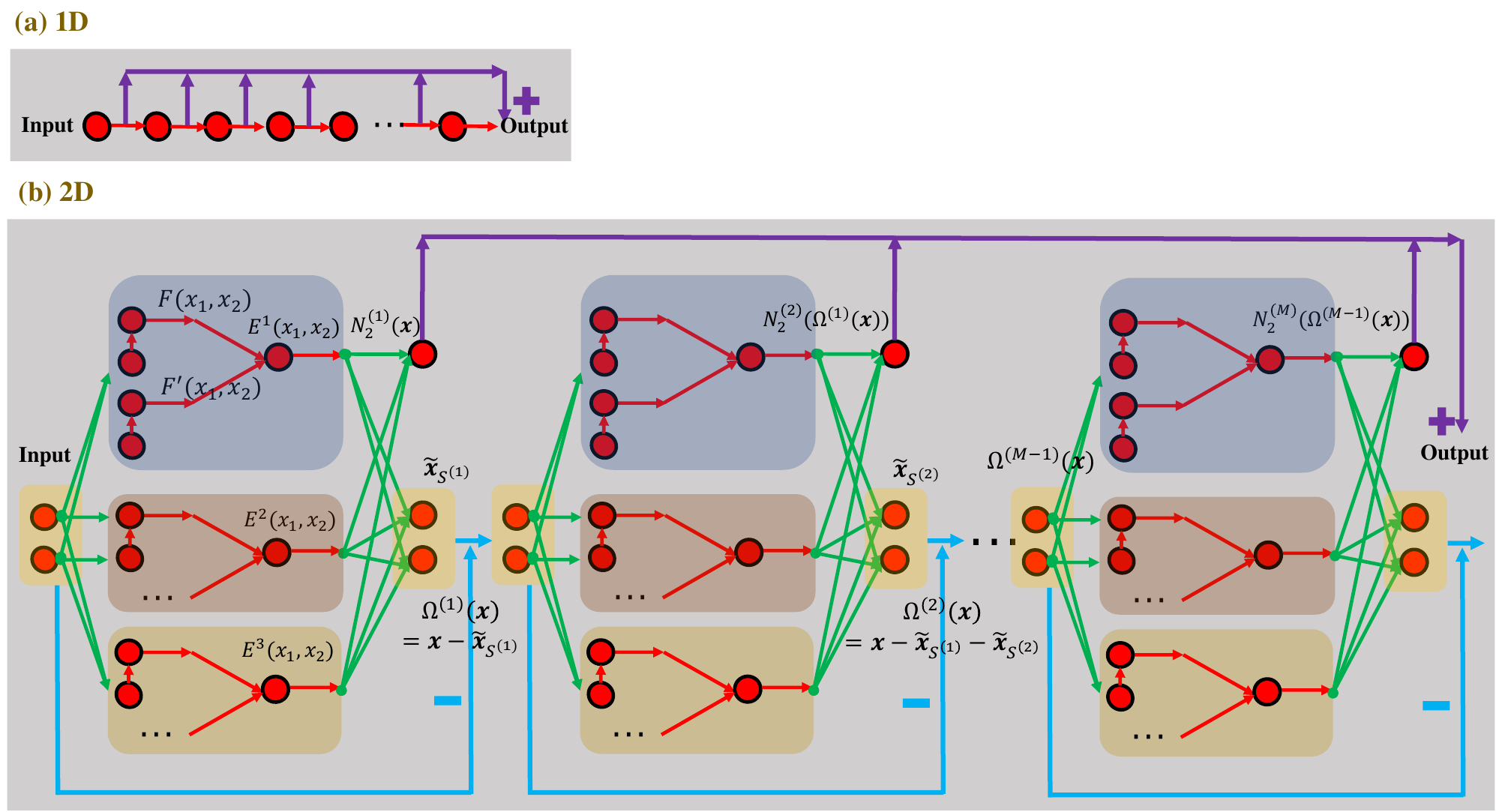}
\caption{Illustration of deep networks in 1D and 2D cases.}
\label{fig:deep_structure_2d}
\end{figure*}

\underline{Representing $f$ with a wide ReLU network}: \textbf{Lemma \ref{lem:wide_deep_modules}} suggests that a wide network module $\mathbf{N}_1$ can generically represent a function over a template simplex. To represent $f^{(m)}$ over $S^{(m)}$, we need to use Eq. \eqref{eqn:simplex_transform} to transform the function from the barycentric coordinate system to the Euclidean coordinate system. Let three vertices of $S^{(m)}$ be $\{\v_0^{(m)},\v_1^{(m)},\v_2^{(m)}\}$ and $V^{(m)} = (\v_1^{(m)} - \v_0^{(m)},\v_2^{(m)} - \v_0^{(m)})$, we have 
\begin{equation}
    \mathbf{N}_1^{(m)}(\x) = \mathbf{N}_1((V^{(m)})^{-1}(\x-\v_0^{(m)})),
\end{equation}
satisfying 
\begin{equation}
    \m\left(\{\x\in\real^2~|~f^{(m)}(\x)\neq \mathbf{N}_1^{(m)}(\x)\}\right) < \delta.
\end{equation}

By aggregating the network $\mathbf{N}_1^{(m)}(\x)$ horizontally, we have the following wide network:
 \begin{equation}
     \mathbf{H}_1(\x) = \sum_{m=1}^M \mathbf{N}_1^{(m)}(\x) .
 \end{equation}
Therefore, the constructed wide network $\mathbf{H}_1(\x)$ is of width $\mathcal{O}(20M)$ and depth 2. It is clear that the width $\mathcal{O}(20M)$ of the wide network $\mathbf{H}_1(\x)$ dominates, as the number of needed simplices goes larger and larger.

\underline{Representing $f$ with a deep ReLU network}: For a deep construction, the fundamental difficulty is how to sequentially express each $f^{(m)}$, \textit{i.e.}, the input of each block can only come from the earlier block instead of the input, like what we did in one-dimensional case (Figure \ref{fig:deep_structure_2d}(a)). Let us derive via induction how to sequentially represent each $f^{(m)}$. We still adopt the idea of modularized networks, but now each network module has two outputs. \textbf{Lemma \ref{lem:wide_deep_modules}} suggests that a deep network module $\mathbf{N}_2$ can generically represent a function over a template simplex. 

\textbf{Step 1.} Assume that the two outputs of the first block are $\mathbf{N}_2^{(1)}$ and $\Omega^{(1)}$. To represent $f^{(1)}$ over $S^{(1)}$, similarly, we need to transform the function from the barycentric coordinate system to the Euclidean coordinate system. Let three vertices of $S^{(1)}$ be $\{\v_0^{(1)},\v_1^{(1)},\v_2^{(1)}\}$ and $V^{(1)} = (\v_1^{(1)} - \v_0^{(1)},\v_2^{(1)} - \v_0^{(1)})$, we have 
\begin{equation}
    \mathbf{N}_2^{(1)} = \mathbf{N}_2((V^{(1)})^{-1}(\x-\v_0^{(1)})),
\end{equation}
which is one output of the first block.

Next, we derive the other output $\Omega^{(1)}(\x)$. Note that we are not allowed to use the input directly. Encouraged by the inversion idea in the univariate case, we invert the function domain into the input domain to get a function that is approximately only supported over $S^{(1)}$:
\begin{equation}
\tilde{\x}_{S^{(1)}} = \left\{\begin{array}{cc}   \x,& \textbf{if}\quad  \x \in S^{(1)}\\
0,& \textbf{if} \quad \x\notin S^{(1)}
\end{array}\right..
\end{equation}
To do so, recall that we have $\mathbf{E}^1, \mathbf{E}^2, \mathbf{E}^3$ in constructing $\mathbf{N}_2$, we will have $(\xi_1^*,\xi_2^*,\xi_3^*)$ 
such that 
\begin{equation}
   \tilde{\x}_{S^{(1)}} = \xi_1^*\mathbf{E}^1((V^{(1)})^{-1}(\x-\v_0^{(1)}))+\xi_2^*\mathbf{E}^{2}((V^{(1)})^{-1}(\x-\v_0^{(1)}))+\xi_3^*\mathbf{E}^{3}((V^{(1)})^{-1}(\x-\v_0^{(1)})) 
\end{equation}

As shown in Figure \ref{fig:deep_structure_2d}(b), use the residual connection, we compute $\Omega^{(1)}(\x)= \x-\tilde{\x}_{S^{(1)}}$, which is zero over $S^{1}$ and $\x$ for other regions. $\Omega^{(1)}(\x)$ will be used to feed the next block to construct $f^{(2)}$.

\textbf{Step 2}. Suppose that the two output functions of the $m$-th block are $\mathbf{N}_2^{m}(\x)$ and $\Omega^{(m)}(\x)=\x-\sum_{i=1}^{m} \tilde{\x}_{S^{(i)}}$. $\Omega^{m}(\x)$ is fed into the $(m+1)$-th block as the input. Because $S^{(m+1)}$ is outside the domain of $S^{(1)}\cup \cdots \cup S^{(m)}$, with the same technique in \textbf{Step 1}, we construct
\begin{equation}
    \mathbf{N}_2^{(m+1)}(\Omega^{(m)}(\x)) = \mathbf{N}_2((V^{(m+1)})^{-1}(\Omega^{(m)}(\x)-\v_0^{(1)})),
\end{equation}
where $\{\v_0^{(m+1)},\v_1^{(m+1)},\v_2^{(m+1)}\}$ are three vertices of $S^{(m+1)}$, and $V^{(m+1)} = (\v_1^{(m+1)} - \v_0^{(m+1)},\v_2^{(m+1)} - \v_0^{(m+1)})$ to express $f^{(m+1)}$ over $S^{(m+1)}$ well. $\Omega^{(m)}(\x)$ is functionally equivalent to $\x$ for two reasons. First, all simplices do not overlap. $\Omega^{(m+1)}(\x)$ is $\x$ outside the domain of $S^{(1)}\cup \cdots \cup S^{(m)}$; therefore zero value over $S^{(1)}\cup \cdots \cup S^{(m)}$ has no effect. Second, w.l.o.g., we assume that $(0,0)$ is outside of all simplices or lies on the boundary of a certain simplex. As such, we have $\mathbf{N}_2^{(m+1)}((0,0))=0, \forall m$, therefore, $\mathbf{N}_2^{(m+1)}(\Omega^{(m)}(\x))$ will not erroneously produce a non-zero constant over $S^{(1)}\cup \cdots \cup S^{(m)}$. Furthermore, we also obtain a function $\tilde{\x}_{S^{(m+1)}}$ inside the $(m+1)$-th block. Apply the residual operation, we have 
\begin{equation}
    \Omega^{(m+1)}(\x)=\Omega^{(m)}(\x) - \tilde{\x}_{S^{(m+1)}} =  \x-\sum_{i=1}^{m+1} \tilde{\x}_{S^{(i)}}.
\end{equation}

Lastly, simlilar to the one-dimensional deep network, we use the shortcut connection to aggregate $\mathbf{N}_2^{(m)}(\Omega^{(m-1)}(\x))$ to obtain the following deep network:
 \begin{equation}
     \mathbf{H}_2(\x) = \sum_{m=1}^M \mathbf{N}_2^{(m)}(\Omega^{(m-1)}(\x)).
 \end{equation}
Therefore, the constructed deep network $\mathbf{H}_2(\x)$ is of depth $\mathcal{O}(3M)$ and width 12. Please note that in the above equation, the summation is made by shortcuts. It is clear that the depth of $\mathbf{H}_2(\x)$ dominates over the width.

\end{proof}

%

\newpage

\addcontentsline{toc}{section}{I. Width as Related to Neural Tangent Kernel (NTK)}
\subsection*{I. Width as Related to Neural Tangent Kernel (NTK)}
NTK sheds light on the power of a network when the width of this network goes to infinity. Previous results have suggested a link between the neural network and the Gaussian distribution when the network width increases infinitely. In \cite{neal1996priors,lee2018deep, matthews2018gaussian, novak2018bayesian, arora2019exact}, the Gaussian process is shown in the two-layer networks, the deep networks, and the convolutional networks. Weakly-trained networks \citep{arora2019exact} refer to the networks where only the top layer is trained after other layers are randomly initialized. Let $f(\bm{\theta}, x) \in \mathbb{R}$ denote the output of the network on input $x$ where $\bm{\theta}$ denotes the parameters of the network, and $\bm{\theta}$ is an initialization over $\bm{\Theta}$. In the above context, training the top layer with the $l_2$ penalty reduces to the kernel regression:
\begin{equation}
    ker^{(F)}(x,x') = \underset{\bm{\theta}\sim \bm{\Theta}}{\mathbb{E}} \langle f(\bm{\theta}, x), f(\bm{\theta}, x') \rangle,
\end{equation}
where $(F)$ denotes the functional space. On the other hand, an infinite wide network is considered as an over-parameterized network, with the motivation to explain the fact that why over-paramterized networks scale. There were extensive studies on this topic \citep{du2018gradient, allen2019learning, cao2019generalization}. Notably, these studies indicate that the training renders relatively small changes when the network is sufficiently wide. Such a concentration behavior is justified by NTK in terms of the Gaussian process. The NTK is defined as the following:
\begin{equation}
    ker^{(\partial)}(x,x') = \underset{\bm{\theta}\sim \bm{\Theta}}{\mathbb{E}} \langle \frac{\partial f(\bm{\theta}, x)}{\partial \bm{\theta}},\frac{\partial f(\bm{\theta}, x')}{\partial \bm{\theta}} \rangle
\end{equation}
where $(\partial)$ denotes the gradient space. Along this line, \cite{huang2020deep} compared the NTK of the ResNet and the deep chain-like network, and found that the NTK of the chain-like network converges to a non-informative kernel as the depth goes to infinity, while the counterpart of ResNet does not. \cite{du2019width} showed that the width provably matters in networks using the piecewise linear activation.

\newpage

\addcontentsline{toc}{section}{II. Transformation of an Arbitrary Network}
\subsection*{II. Transformation of an Arbitrary Network}

Here, we provide the proof for \textbf{Theorem \ref{thm1_SI}} and the experiment that shows the feasibility of the procedures in the proof of \textbf{Theorem \ref{thm:maind=N}}. The similar derivation to the proof for \textbf{Theorem \ref{thm1_SI}} is also seen in \cite{fan2018sparse}.

\begin{theorem}[Equivalence of Univariate Regression Networks] Given any ReLU network $f:[-B,~B]\rightarrow\real$ with one dimensional input and output variables. There is a wide ReLU network $\mathbf{H}_1:[-B,~B]\rightarrow\real$ and a deep ReLU network $\mathbf{H}_2:[-B,~B]\rightarrow\real$, such that $f(x) = \mathbf{H}_1(x) = \mathbf{H}_2(x), \forall x\in [-B,~B]$.
\label{thm1_SI}
\end{theorem}

\begin{proof} (\textbf{Theorem \ref{thm1_SI}}) Since the function represented by a network $f: [-B,~B] \to \mathbb{R}$ is piecewise linear, we express $f(x)$ as
\begin{equation}
    f(x) =
    \begin{cases}
    w^{(0)}(x-x_0)+f(x_0), & x\in[x_0,x_1)\\
    w^{(1)}(x-x_1)+f(x_1), & x\in[x_1,x_2)\\
    w^{(2)}(x-x_2)+f(x_2), & x\in[x_2,x_3)\\
    \ \ \ \ \ \ \ \ \ \ \ \ \vdots \\
    w^{(n)}(x-x_n)+f(x_n), & x\in[x_{n},x_{n+1}],
    \end{cases}
\end{equation}
where $-B= x_0<x_1<x_2<\cdots<x_n<x_{n+1}=B$, $\displaystyle w^{(i)}=\frac{f(x_{i+1})-f(x_i)}{x_{i+1}-x_i}$ to guarantee continuity, and the slopes of neighboring pieces are different; otherwise they can be fused together. 

We first construct a wide ReLU network $\mathbf{H}_1(x)$ to represent $f$. This can be straightforward as follows:
\begin{equation}
    \displaystyle \mathbf{H}_1(x) = f(x_0) + \sum_{i=0}^n (w^{(i)}-w^{(i-1)})\sigma(x-x_i),
    \label{eqn:1Dwide}
\end{equation}
where specially $w^{(-1)}=0$. This network is of width $n+1$.
Now, we verify this claim. Given $x \in [x_j,x_{j+1})$, for some $j\geq 0$, we have the following:
\begin{equation}
\begin{aligned}
       & f(x_0)+ \sum_{i=0}^n  (w^{(i)}-w^{(i-1)})\sigma(x-x_i) \\      
       =&f(x_0)+ \sum_{i=0}^j  (w^{(i)}-w^{(i-1)})(x-x_i) \\
       =& f(x_0)+ w^{(j)} x + \sum_{i=0}^{j-1}  w^{(i)}(x_{i+1}-x_i) - w^{(j)}x_j\\
      =& f(x_0)  + \sum_{i=0}^{j-1} (f(x_{i+1})- f(x_i)) + w^{(j)}(x - x_j) \\ 
       =& f(x_j) + w^{(j)}(x-x_j).
\end{aligned}
\end{equation}

\begin{figure}[h]
\centering
\includegraphics[width=.4\linewidth]{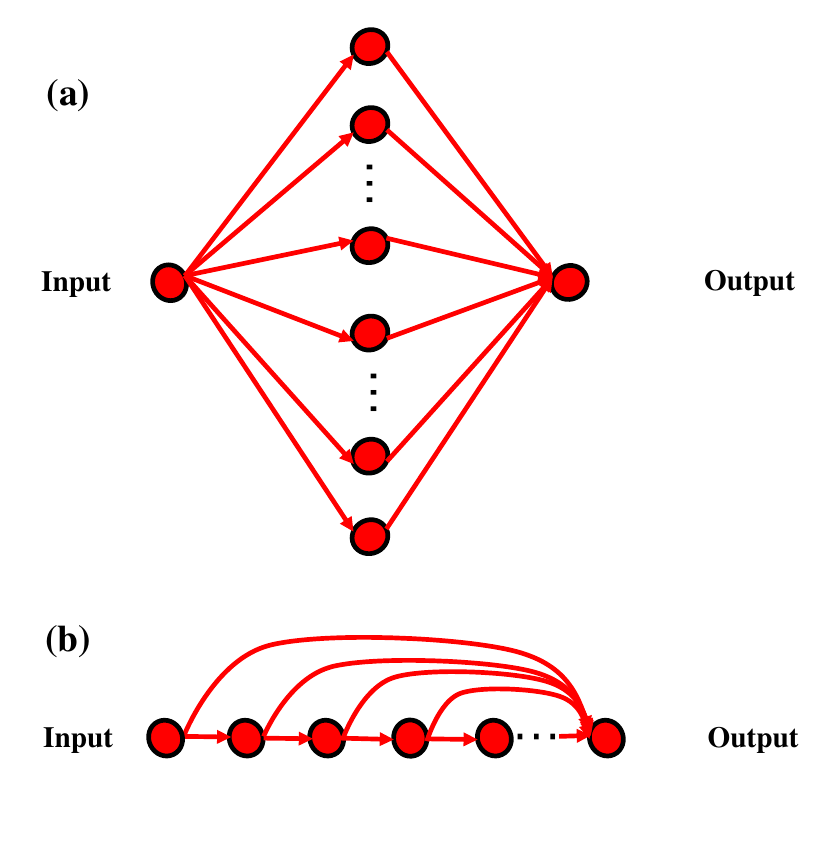}
\caption{Equivalent wide (a) and deep (b) univariate ReLU networks.}
\label{fig:1Ddual}
\end{figure}

Next, we construct an equivalent deep network dual to the above wide network. Note that \eqref{eqn:1Dwide} is the same as
\begin{equation}
    \displaystyle  f(x_0) + \sum_{i=0}^n \sgn(w^{(i)}-w^{(i-1)})\sigma\Big(|w^{(i)}-w^{(i-1)}|(x-x_i)\Big),
\end{equation}
where $\sgn(x<0)=-1$ and $\sgn(x>0)=1$.
If we write $R_i(x) = \sigma(|w^{(i)}-w^{(i-1)}|(x-x_i)), i = 0,\cdots,n-1$, then it is clear that the following recursive relation holds:
\begin{equation}
R_{i+1}(x)  = \sigma\left(|w^{(i+1)}-w^{(i)}| (\frac{1}{|w^{(i)}-w^{(i-1)}|}R_i(x) - x_{i+1} + x_{i})\right).
\label{eqn:1Drecursive}
\end{equation}
Thanks to the recurrent relation, each $R_i$ can accurately represent a small monotonically increasing piece over $[x_{i}, x_{i+1}]$. We aggregate the outputs of those $n+1$ pieces in the output neuron to get the desired deep ReLU network $\mathbf{H}_2(x)$: 
\begin{equation}
\mathbf{H}_2(x) = f(x_0) + \sum_{i=0}^{n} \sgn(w^{(i)}-w^{(i-1)}) R_i(x),
\label{eqn:1Ddeep}
\end{equation}
which is the same as $\mathbf{H}_1(x)$. To construct $R_n(x)$, $n+1$ consecutive neurons are connected in series. Therefore, such a one-neuron-wide network has $n+2$ layers. We plot two types of neural networks $\mathbf{H}_1(x)$ and $\mathbf{H}_2(x)$ in \textbf{Figure \ref{fig:1Ddual}}.
\end{proof}

\begin{theorem} [Quasi-Equivalence of Multivariate Regression Networks] Suppose that the representation of an arbitrary ReLU network is $h: [-B,~B]^D \to \mathbb{R}$, and $M$ is the minimum number of simplices to cover the polytopes to support $h$, for any $\delta>0$, there exist a wide ReLU network $\mathbf{H}_1$ of width $\mathcal{O}\left[D(D+1) (2^{D}-1)M \right]$ and depth $D$, and also a deep ReLU network $\mathbf{H}_2$ of width $(D+1)D^2$ and depth $\mathcal{O}\left[(D+1)M\right]$, satisfying that
\begin{equation}
    \begin{aligned}
    & \m\Big(\x~|~h(\x)\neq \mathbf{H}_1(\x)\}\Big) < \delta \\
    & \m\Big(\x~|~h(\x)\neq \mathbf{H}_2(\x)\}\Big) < \delta, 
    \end{aligned}
\end{equation}
where $\m(\cdot)$ is the standard measure in $[-B,~B]^D$. 
\label{thm:maind=N}
\end{theorem}

\textbf{The sketch of proof:} A ReLU network is a piecewise linear function over polytopes, which can be decomposed into a summation of linear functions over a simplex. \textbf{Lemma} \ref{lem:wide_deep_modules_ND} shows that a wider network module $\mathbf{N}_1(\x)$ and a deeper network module $\mathbf{N}_2(\x)$ can represent an arbitrary linear function over a simplex. Next, in \textbf{Theorem} \ref{thm:maind=N}, to transform an arbitrary ReLU network into a wide and a deep network, we horizontally aggregate network modules $\mathbf{N}_1(\x)$ to have a wide network, and we use shortcuts to sequentially establish a deep network.

A $D$-simplex $S$ is a $D$-dimensional convex hull provided by convex combinations of  $D+1$ affinely independent vectors $\{\v_i\}_{i=0}^D \subset \real^D$. In other words, $\displaystyle S = \left\{ \sum_{i=0}^D \xi_i \v_i ~|~ \xi_i \geq 0, \sum_{i=0}^D \xi_i = 1 \right \}$. 

If we write $V = (\v_1 - \v_0,\v_2 - \v_0,\cdots,\v_D - \v_0)$, then $V$ is invertible, and $S = \left\{\v_0 + V\x ~|~ \x \in \Delta \right\}$ where $\Delta = \left\{\x\in\real^D~|~ \x \geq 0, \mathbf{1}^\top \x \leq 1 \right \}$ is a template simplex in $\real^D$. It is clear that the following one-to-one affine mapping between $S$ and $\Delta$ exists, which is  
\begin{equation}
    T:S\rightarrow \Delta, \p \mapsto T(\p) = V^{-1} (\p - \v_0).
\label{eqn:simplex_transform_SI}    
\end{equation}
Therefore, we only need to prove the statement on the special case that $S = \Delta$.

We write $D+1$ vertices of $S$ as $\v_0 = (0,0,\cdots,0), \v_1 = (1,0,\cdots,0), \cdots, \v_{D+1} = (0,\cdots,0,1)$. Then $f: [-B,~B]^D\rightarrow\real$ supported on $S$ is provided as
\begin{equation}
f(\x) = \Big{\{}\begin{array}{cc}  \a^\top \x + b,& \textbf{if}\quad  \x \in S\\
0,&\ \textbf{if}\quad \x\in S^{c}
\end{array},
\label{eqn:NDf}
\end{equation}
where $\a = (f(\v_1) - f(\v_0),f(\v_2) - f(\v_0),\cdots,f(\v_D) - f(\v_0)), b = f(\v_0)$. We denote the domain of a network as $[-B,B]^D$. Given a linear function $\ell(\x) = c_1x_1 + c_2x_2 +\cdots+c_nx_n+ c_{n+1}$, we write $\ell^{-} = \{\x\in\real^D~|~ \ell(\x) <0\}$ 
and $\ell^{+} = \{\x\in\real^D~|~ \ell(\x) \geq 0\}$. $S$ is enclosed by $D+1$ hyperplanes provided by $\ell_i(\x)=x_i, i = 1,\cdots,D $, and $\ell_{D+1}(\x)=-x_1 -\cdots-x_D +1$.

\begin{lemma}
Suppose that the representation of an arbitrary ReLU network is $f: [-B,~B]^D \to \mathbb{R}$ expressed as Eq. \eqref{eqn:NDf},
for any $\delta>0$, there exist a ReLU network $\mathbf{N}_1$ of width $D(D+1) (2^{D}-1)+2$ and depth $D$, and also a ReLU network $\mathbf{N}_2$ of width $(D+1)D^2$ and depth $D+1$, satisfying that
\begin{equation}
    \begin{aligned}
    & \m\Big(\x~|~f(\x)\neq \mathbf{N}_1(\x)\}\Big) < \delta \\
    & \m\Big(\x~|~f(\x)\neq \mathbf{N}_2(\x)\}\Big) < \delta.
    \end{aligned}
\end{equation}
\label{lem:wide_deep_modules_ND}
\end{lemma}

\begin{proof} (\textbf{Lemma \ref{lem:wide_deep_modules_ND}}, $\mathbf{D\geq2}$) 
Our goal is to approximate the given piecewise linear function $f$ using ReLU networks. We first index the polytopes separated by $D+1$ hyperplanes $\ell_i(\x) = 0, i=1,\cdots,D+1$ as $\mathcal{A}^{(\chi_1, \cdots,\chi_i,\cdots, \chi_{D+1})}=\ell_1^{\chi_1} \cap \cdots \cap \ell_i^{\chi_i} \cap \cdots \cap \ell_{D+1}^{\chi_{D+1}}, \chi_i \in \{+,-\}, i=1,\cdots,D+1$. It is clear to see that $S = \A^{(+,+,\cdots,+)}$. In addition, we use $\vee$ to denote exclusion of certain component. For instance, $\mathcal{A}^{(\chi_1, \vee, \chi_3,\cdots,\chi_{D+1})} =  \ell_1^{\chi_1}  \cap \ell_3^{\chi_3}\cap \cdots \cap \ell_{D+1}^{\chi_{D+1}}$. It can be easily verified that
\begin{equation}
\mathcal{A}^{(\chi_1, \vee, \chi_3,\cdots,\chi_{D+1})} = \mathcal{A}^{(\chi_1, +, \chi_3,\cdots,\chi_{D+1})}\cup \mathcal{A}^{(\chi_1, -, \chi_3,\cdots,\chi_{D+1})}.
\label{eqn:Combine}
\end{equation}
Please note that $\mathcal{A}^{(-,-,\cdots,-)} = \emptyset$. Thus, $D+1$ hyperplanes create in total $2^{D+1}-1$ polytopes in the $[-B,B]^D$. 

Now we recursively define an essential building block, a D-dimensional fan-shaped ReLU network $F_D (\x)$:
\begin{equation}
    \begin{cases}
    F_1(\x) = h_1(\x)\\
    F_{j+1}(\x) = \sigma \circ ( F_j(\x)-\mu^j \sigma \circ h_{j+1}(\x)), \quad j = 1,\cdots, D-1,
    \end{cases}
\end{equation}
where the set of linear functions $\{h_k (\x) =  \p_k^\top \x + r_k\}_{k=1}^{D}$ are provided by $D$ linearly independent vectors $\{\p_k\}_{k=1}^D$, and $\mu$ is a large positive number ($\mu^j$ denotes $\mu$ with the power to $j$). Note that the network $F_D$ is of width $D$ and depth $D$. This network enjoys the following key characteristics: 1) As $\mu \rightarrow \infty$, the hyperplane $h_{1}- \mu h_{2} -\cdots-\mu^j h_{j+1} =0 $ is approximate to the hyperplane $h_{j+1} =0$ as the term $\mu^j h_{j+1}$ dominates. 
Thus, the support of $F_D (\x)$ converges to $h_1^+ \cap h_2^-\cap \cdots \cap h_D^-$ which is a $D$-dimensional fan-shaped function.
2) Let $C$ be the maximum area of hyperplanes in $[-B,~B]^D$. Because the real boundary $h_{1}- \mu h_{2} -\cdots-\mu^j h_{j+1} =0 $ is almost parallel to the ideal boundary $h_{j+1} =0$, the measure of the imprecise domain caused by $\mu^j$ is at most $C/\mu^j$, where $1/\mu^j$ is the approximate distance between the real and ideal boundaries. In total, the measure of the inaccurate region in building $F_D (\x)$ is at most $C \sum_{j=1}^{D-1}1/\mu^j\leq C/(\mu - 1) $.
3) The function over $D$-dimensional fan-shaped domain is $h_1$, since $(h_j)^+ = 0, j\geq 2$ over the $D$-dimensional fan-shaped domain. 

\underline{Constructing $\mathbf{N}_1$}: Discontinuity of $f$ in Eq.\eqref{eqn:NDf} is one of the major challenges of representing it using a ReLU network. To tackle this issue, we start from a linear function $\tf(\x) = \a^\top \x + b, \forall \x\in\real^D$, which can be represented by two neurons  $\sigma\circ \tf-\sigma\circ (-\tilde{f})$. The key idea is to eliminate $f$ over all $2^{D+1}-2$ polytopes outside $S$ using the $D$-dimensional fan-shaped functions.

Let us use $\mathcal{A}^{(+,+,+,-,\cdots,-)}$ and $\mathcal{A}^{(+,+,-,-,\cdots,-)}$ to show how to cancel the function $\tf$ over the polytopes outside $S$. According to \eqref{eqn:Combine}, $\mathcal{A}^{(+,+,+,-,\cdots,-)}$ and $\mathcal{A}^{(+,+,-,-,\cdots,-)}$ satisfy
\begin{equation}
    \mathcal{A}^{(+,+,\vee,-,\cdots,-)} = \mathcal{A}^{(+,+,+,-,\cdots,-)} \cup \mathcal{A}^{(+,+,-,-,\cdots,-)},
\label{eq:neighbors}    
\end{equation}
where $\mathcal{A}^{(+,+,\vee,-,\cdots,-)}$ is a $D$-dimensional fan-shaped domain. Without loss of generality, a number $D+1$ of $D$-dimensional fan-shaped functions over $\mathcal{A}^{(+,+,\vee,-,\cdots,-)}$ are needed as the group of linear independent bases to cancel $\tf$ , where the $k^{th}$ fan-shaped function is constructed as
\begin{equation}
    \begin{cases}
        & F_1^{(k)} = x_1-\eta_k x_k\\
        & F_2^{(k)} = \sigma \circ \big(F_1^{(k)}-\mu \sigma \circ (-x_2) \big)\\        
        & F_{3}^{(k)} =  \sigma \circ\big(F_{2}^{(k)} -\mu^2 \sigma \circ(x_{4}) \big) \\
        & F_{4}^{(k)} = \sigma \circ \big(F_{3}^{(k)} -\mu^3 \sigma \circ(x_{5}) \big) \\
        & \ \ \ \ \vdots \\
        & F_{D}^{(k)} =   \sigma \circ \big(F_{D-1}^{(k)}-\mu^{D-1} \sigma \circ(-x_1-\cdots-x_D+1) \big),
    \end{cases}
\end{equation}
where we let $x_{D+1}=1$ for consistency, the negative sign for $x_2$ is to make sure that the fan-shaped region $\ell_1^+ \cap (-\ell_2)^- \cap \ell_4^- \cap \cdots \cap \ell_{D+1}^-$ of $F_{D}^{(k)}$ is $\mathcal{A}^{(+,+,\vee,-,\cdots,-)}$, $\eta_1=0$, and $\eta_k=\eta, k=2,...,D+1$ represents a small shift for $x_1=0$ such that $\m((x_1)^+ \cap (x_1-\eta_k x_k)^-) < C\eta_k $. The constructed function over $\mathcal{A}^{(+,+,\vee,-,\cdots,-)}$ is
\begin{equation}
F_{D}^{(k)} = x_1-\eta_k x_k, k=1,\cdots,D+1,
\end{equation}
which is approximately over 
\begin{equation}
    \forall \x\in\mathcal{A}^{(+,+,\vee,-,\cdots,-)}\backslash \Big((x_1)^+ \cap (x_1-\eta_k x_k)^-) \Big)  .
\end{equation}
Let us find $\omega_1^*,\cdots,\omega_{D+1}^*$ by solving 
\begin{equation}
\begin{bmatrix}
1 &1 & 1 &\cdots &  1\\
0 &-\eta & 0 &\cdots &  0 \\
0 &0 & -\eta &\cdots &  0 \\
\vdots&  &  & \vdots & \\
0 &0 & 0 &\cdots &  -\eta \\
\end{bmatrix} \begin{bmatrix}
\omega_1 \\
\omega_2  \\
\omega_3 \\
\vdots \\
\omega_{D+1}
\end{bmatrix}=-\begin{bmatrix}
a_1 \\
a_2  \\
a_3 \\
\vdots \\
b
\end{bmatrix},
\end{equation}
and then the new function 
$F^{(+,+,\vee,-,\cdots,-)}(\x)=\sum_{k=1}^{D+1}\omega_k^* F_D^{(k)}(\x)$ satisfies that
\begin{equation}
\begin{aligned}
       &  \m\Big(\{\x\in \mathcal{A}^{(+,+,\vee,-,\cdots,-)}|~\tf(\x) + F^{(+,+,\vee,-,\cdots,-)}(\x) \neq 0 \} \Big) \\
     & \leq C (D\eta+\frac{D+1}{\mu-1}).
\end{aligned}
\end{equation}

Similarly, we can construct other functions $\overbrace{F^{(+,-,\vee,-,\cdots,-)}(\x),F^{(+,\vee,-,-,\cdots,-)}(\x),\cdots}^{~2^{D}-2 \ \ terms} $ 
to cancel $\tf$ over other polytopes. Finally, these $D$-dimenional fan-shaped functions are aggregated to form the following wide ReLU network $\mathbf{N}_1(\x)$:
\begin{equation}
\begin{aligned}
    & \mathbf{N}_1 (\x) = \sigma\circ (\tf(\x))-\sigma\circ (-\tf(\x)) \\
    & +\overbrace{F^{(+,+,\vee,-,\cdots,-)}(\x) + F^{(+,\vee,-,-,\cdots,-)}(\x)+\cdots}^{~{2^{D}-1} \ \ terms},
\end{aligned}    
\end{equation}
where the width and depth of the network are  $D(D+1)(2^{D}-1)+2$ and $D$ respectively. In addition, because there are $2^{D}-1$ polytopes being cancelled, the total area of the regions suffering from errors is no more than  
\begin{equation}
    (2^{D}-1)C (D\eta+\frac{D+1}{\mu-1}).
\end{equation}
Therefore, for any $\delta>0$, as long as we choose appropriate $\mu, \eta $ that fulfill
\begin{equation}
0 < 1/{(\mu-1)} , \eta< \frac{\delta}{(2^{D}-1)C(D+D+1)}=\frac{\delta}{(2^{D}-1)C(2D+1)},
\end{equation}
the constructed network $\mathbf{N}_1(\x)$ will have
\begin{equation}
    \m\left(\{\x\in\real^D~|~f(\x)\neq \mathbf{N}_1(\x)\}\right) < \delta.
\end{equation}

\underline{Constructing $\mathbf{N}_2$}: Allowing more layers in a network provides an alternate way to represent $f$. The fan-shaped functions remain to be used. The whole pipeline can be divided into two steps: (1) build a function over $S$; and (2) represent $f$ over $S$ by slightly moving one boundary of $S$ to create linear independent bases.

(1) We construct a number $D$ of $D$-dimensional fan-shaped functions. Without loss of generality, the $k^{th}$ fan-shaped function is constructed as
\begin{equation}
    \begin{cases}
        & F_1^{(k)} = x_1-\nu_k x_k\\
        & F_2^{(k)} = \sigma \circ \big(F_1^{(k)} - \mu^1 \sigma \circ (-x_2) \big)\\        
        & \ \ \ \ \vdots \\
        & F_{D}^{(k)} =   \sigma \circ \big(F_{D-1}^{(k)} - \mu^{D+1} \sigma \circ(-x_D)  \big),
    \end{cases}
\end{equation}
whose fan-shaped region is approximately $(\ell_1-\nu_k x_k)^+ \cap (-\ell_2)^-\cap \cdots \cap (-\ell_D)^- = (\ell_1-\nu_k x_k)^+ \cap \ell_2^+\cap \cdots \cap \ell_D^+$, which almost overlaps with $\mathcal{A}^{(+,\cdots,+,\vee)}=\ell_1^+ \cap \ell_2^+\cap \cdots \cap \ell_D^+$ as $\nu_k$ becomes sufficiently small. The output of $F_{D}^{(k)}$ is $x_1-\nu_k x_k, k=1,\cdots,D$.  To obtain the last boundary $\ell_{D+1}(\x)=-x_1-\cdots-x_D +1=0$ so as to construct the simplex $S$, we stack one more layer with only one neuron as follows: 
\begin{equation}
    \mathbf{E}^1 (\x) = (\gamma_1^* F_{D}^{(1)}+\cdots \gamma_D^* F_{D}^{(D)}+\gamma_{D+1}^*)^+,
\end{equation}
where $\gamma_1^*,\cdots, \gamma_{D+1}^*$ are the roots of the following equation:
\begin{equation}
    \begin{cases}
    (1-\nu_1) \gamma_1 + \gamma_2+\cdots+\gamma_D = -1& \\
    -\nu_2 \gamma_2  = -1 & \\
    \ \ \ \ \ \vdots & \\
    -\nu_D \gamma_D = -1 & \\
     \gamma_{D+1}= 1.
    \end{cases}
\end{equation}
Thus, $\mathbf{E}^1(\x)$ will approximately represent the linear function $-x_1-\cdots-x_D +1$ over $S$ and zero elsewhere. The depth and width of the network are $D+1$ and $D^2$ respectively. Similarly, due to the employment of a number $D$ of $D$-dimensional fan-shaped functions and the effect of $\nu_k$, the area of the region with errors is estimated as
\begin{equation}
   CD/{(\mu-1)} +C\sum_{k=1}^{D}\nu_{k}. 
\end{equation}

(2) To acquire an arbitrary linear function, similarly we need $D+1$ linear independent functions as linear independent bases. Other than the one obtained in step (1), we further modify $\ell_{D+1}$ a little bit $D$ times to get $\ell_{D+1}^{l} = \ell_{D+1}-\tau_l x_l, l=1,\cdots,D$. Repeating the same procedure described in step (1), for $\ell_{D+1}^{l}$, we can construct the network $\mathbf{E}^{l} (\x)$ that is $\ell_{D+1}^{l}-\tau_l x_l$ approximately over $\ell_1^{+}\cap \ell_2^{+} \cap \cdots \cap (\ell_{D+1}^{l})^{+}$, where $\tau_l, l=1,\cdots,D$ is small to render these domains almost identical to $S$, and $\tau_l, l=1,\cdots,D$ satisfies that
\begin{equation}
\begin{bmatrix}
-1-\tau_1 & -1 &\cdots & -1 & -1\\
-1  & -1 &\cdots & -1 & -1 \\
\vdots&  &  & \vdots & \\
-1  & -1 &\cdots & -1-\tau_D & -1 \\
1  & 1 &\cdots & 1 & 1 \\
\end{bmatrix}\begin{bmatrix}
\rho_1^*\\
\rho_2^*\\
\vdots \\
\rho_D^* \\
\rho_{D+1}^*
\end{bmatrix} = \begin{bmatrix}
a_1\\
a_2\\
\vdots\\
a_D \\
b
\end{bmatrix},
\end{equation}
where $\rho_1^*,\cdots,\rho_{D+1}^*$ are the roots. As a result, the deep network is built as
\begin{equation}
  \mathbf{N}_2(\x) = \rho_1^*\mathbf{E}^1(\x)+\rho_2^*\mathbf{E}^{2}(\x)+\cdots+\rho_{D+1}^*\mathbf{E}^{D+1}(\x),  
\end{equation}
producing $f$ on $S$. The depth and width of $\mathbf{N}_2(\x)$ are $D+1$ and $D^2(D+1)$, respectively. Similarly, the area of the region with errors is bounded above by 
\begin{equation}
 CD(D+1)/{(\mu-1)} +C(D+1)\sum_{k=1}^{D}\nu_{k}+C \sum_{l=1}^{D}\tau_{l}.
\end{equation}
Therefore, for any $\delta>0$, if we choose $\mu, \nu_k, \tau_l$ appropriately such that
\begin{equation}
\begin{aligned}
       0& < 1/{(\mu-1)},\nu_{k},\tau_{l} \\
       & < \frac{\delta}{C(D(D+1)+(D+1)D+D)}=\frac{\delta}{C(2D^2+3D)},
\end{aligned}
\end{equation}
then the constructed network $\mathbf{N}_2(\x)$ will satisfy
\begin{equation}
    \m\left(\{\x \in \mathbb{R}^D~|~f(\x)\neq \mathbf{N}_2(\x)\}\right) < \delta .
\end{equation}

\end{proof}

\begin{proof} (\textbf{Theorem \ref{thm:maind=N}}, $\mathbf{D\geq2}$) 
The network $h$ is piecewise linear and splits the space into polytopes. It is feasible to employ a number of simplices to fill the polytopes defined by $h$ \citep{ehrenborg2007perles}. Given that $M$ is the minimum number of required simplices, we have
\begin{equation}
    h(\x) = \sum_{m=1}^M f^{(m)}(\x), 
\end{equation}
where 
\begin{equation}
f^{(m)}(\x) = \left\{\begin{array}{cc}   (\a_1^{(m)})^\top \x + b^{(m)},& \textbf{if}\quad  \x \in S^{(m)}\\
0,&  \textbf{if} \quad \x\notin S^{(m)}
\end{array}\right.,
\end{equation}
and $S^{(m)}$ is the $m$-th simplex. The core of the wide construction is to horizontally aggregate network modules representing $f^{(m)}(\x)$ into a wide network. In contrast, the core of the deep construction is to sequentially express $f^{(m)}(\x)$ in a network module without linking these modules to the input. 

\underline{Representing $h$ with a wide ReLU network}: \textbf{Lemma \ref{lem:wide_deep_modules_ND}} suggests that a wide network module $\mathbf{N}_1$ can generically represent a function over a template simplex. To represent $f^{(m)}$ over $S^{(m)}$, we need to use Eq. \eqref{eqn:simplex_transform_SI} to transform the function from the barycentric coordinate system to the Euclidean coordinate system. Let $D+1$ vertices of $S^{(m)}$ be $\{\v_0^{(m)},\v_1^{(m)},\cdots,\v_D^{(m)}\}$ and $V^{(m)} = (\v_1^{(m)} - \v_0^{(m)},\v_2^{(m)} - \v_0^{(m)}, \cdots, \v_D^{(m)} - \v_0^{(m)})$, we have 
\begin{equation}
    \mathbf{N}_1^{(m)}(\x) = \mathbf{N}_1((V^{(m)})^{-1}(\x-\v_0^{(m)})),
\end{equation}
satisfying 
\begin{equation}
    \m\left(\{\x\in\real^D~|~f^{(m)}(\x)\neq \mathbf{N}_1^{(m)}(\x)\}\right) < \delta.
\end{equation}

By aggregating the network $\mathbf{N}_1^{(m)}(\x)$ horizontally, we have the following wide network:
 \begin{equation}
     \mathbf{H}_1(\x) = \sum_{m=1}^M \mathbf{N}_1^{(m)}(\x) .
 \end{equation}
Therefore, the constructed wide network $\mathbf{H}_1(\x)$ is of width $\mathcal{O}(D(D+1)(2^D-1)M)$ and depth $D$. It is clear that the width $\mathcal{O}(D(D+1)(2^D-1)M)$ of the wide network $\mathbf{H}_1(\x)$ dominates, as the number of needed simplices goes larger and larger.

\underline{Representing $h$ with a deep ReLU network}: For a deep construction, the fundamental difficulty is how to sequentially express each $f^{(m)}$, \textit{i.e.}, the input of each block can only come from the earlier block instead of the input, like what we did in one-dimensional case (Figure \ref{fig:deep_structure_2d}(a)). Let us use mathematical deduction to derive how to sequentially represent each $f^{(m)}$. We still adopt the idea of modularized networks, but now each network module has two outputs. \textbf{Lemma \ref{lem:wide_deep_modules_ND}} suggests that a deep network module $\mathbf{N}_2$ can generically represent a function over a template simplex. 

\textbf{Step 1.} Assume that the two outputs of the first block are $\mathbf{N}_2^{(1)}$ and $\Omega^{(1)}$. To represent $f^{(1)}$ over $S^{(1)}$, similarly, we need to transform the function from the barycentric coordinate system to the Euclidean coordinate system. Let $D+1$ vertices of $S^{(1)}$ be $\{\v_0^{(1)},\v_1^{(1)},\cdots, \v_D^{(1)}\}$ and $V^{(1)} = (\v_1^{(1)} - \v_0^{(1)},\v_2^{(1)} - \v_0^{(1)}, \cdots, \v_D^{(1)} - \v_0^{(1)})$, we have 

\begin{equation}
    \mathbf{N}_2^{(1)} = \mathbf{N}_2((V^{(1)})^{-1}(\x-\v_0^{(1)})),
\end{equation}
which is one output of the first block.

Next, we derive the other output $\Omega^{(1)}(\x)$. Note that we are not allowed to use the input directly. Encouraged by the inversion idea in the univariate case, we invert the function domain into the input domain to get a function that is approximately only supported over $S^{(1)}$:
\begin{equation}
\tilde{\x}_{S^{(1)}} = \left\{\begin{array}{cc}   \x,& \textbf{if}\quad  \x \in S^{(1)}\\
0,& \textbf{if} \quad \x\notin S^{(1)}
\end{array}\right..
\end{equation}
To do so, recall that we have $\mathbf{E}^1, \mathbf{E}^2, \cdots, \mathbf{E}^{D+1}$ in constructing $\mathbf{N}_2$, we will have a coefficient vector $(\xi_1^*,\xi_2^*,\cdots, \xi_{D+1}^*)$ such that 
\begin{equation}
   \tilde{\x}_{S^{(1)}} = \xi_1^*\mathbf{E}^1((V^{(1)})^{-1}(\x-\v_0^{(1)}))+\xi_2^*\mathbf{E}^{2}((V^{(1)})^{-1}(\x-\v_0^{(1)}))+\cdots+\xi_{D+1}^*\mathbf{E}^{D+1}((V^{(1)})^{-1}(\x-\v_0^{(1)})) 
\end{equation}

As shown in Figure \ref{fig:deep_structure_2d}(b), use the residual connection, we compute $\Omega^{(1)}(\x)= \x-\tilde{\x}_{S^{(1)}}$, which is zero over $S^{1}$ and $\x$ for other regions. $\Omega^{(1)}(\x)$ will be used to feed the next block to construct $f^{(2)}$. 


\textbf{Step 2}. Suppose that the two output functions of the $m$-th block are $\mathbf{N}_2^{m}(\x)$ and $\Omega^{(m)}(\x)=\x-\sum_{i=1}^{m} \tilde{\x}_{S^{(i)}}$. $\Omega^{m}(\x)$ is fed into the $(m+1)$-th block as the input. Because $S^{(m+1)}$ is outside the domain of $S^{(1)}\cup \cdots \cup S^{(m)}$, with the same technique in \textbf{Step 1}, we construct
\begin{equation}
    \mathbf{N}_2^{(m+1)}(\Omega^{(m)}(\x)) = \mathbf{N}_2((V^{(m+1)})^{-1}(\Omega^{(m)}(\x)-\v_0^{(1)})),
\end{equation}
where $\{\v_0^{(m+1)},\v_1^{(m+1)},\cdots, \v_D^{(m+1)}\}$ are three vertices of $S^{(m+1)}$, and $V^{(m+1)} = (\v_1^{(m+1)} - \v_0^{(m+1)},\v_2^{(m+1)} - \v_0^{(m+1)}, \cdots, \v_D^{(m+1)} - \v_0^{(m+1)})$ to express $f^{(m+1)}$ over $S^{(m+1)}$ well. $\Omega^{(m)}(\x)$ is functionally equivalent to $\x$ for two reasons. First, all simplices do not overlap. $\Omega^{(m+1)}(\x)$ is $\x$ outside the domain of $S^{(1)}\cup \cdots \cup S^{(m)}$; therefore zero value over $S^{(1)}\cup \cdots \cup S^{(m)}$ has no effect. Second, w.l.o.g., we assume that $\mathbf{0}$ is outside of all simplices or lies on the boundary of a certain simplex. As such, we have $\mathbf{N}_2^{(m+1)}(\mathbf{0})=0, \forall m$, therefore, $\mathbf{N}_2^{(m+1)}(\Omega^{(m)}(\x))$ will not erroneously produce a non-zero constant over $S^{(1)}\cup \cdots \cup S^{(m)}$. Furthermore, we also obtain a function $\tilde{\x}_{S^{(m+1)}}$ inside the $(m+1)$-th block. Apply the residual operation, we have 
\begin{equation}
    \Omega^{(m+1)}(\x)=\Omega^{(m)}(\x) - \tilde{\x}_{S^{(m+1)}} =  \x-\sum_{i=1}^{m+1} \tilde{\x}_{S^{(i)}}.
\end{equation}

Lastly, simlilar to the one-dimensional deep network, we use the shortcut connection to aggregate $\mathbf{N}_2^{(m)}(\Omega^{(m-1)}(\x))$ to obtain the following deep network:
 \begin{equation}
     \mathbf{H}_2(\x) = \sum_{m=1}^M \mathbf{N}_2^{(m)}(\Omega^{(m-1)}(\x)).
 \end{equation}
Therefore, the constructed deep network $\mathbf{H}_2(\x)$ is of depth $\mathcal{O}((D+1)M)$ and width $(D+1)D^2$. Please note that in the above equation, the summation is made by shortcuts. It is clear that the depth of $\mathbf{H}_2(\x)$ dominates over the width.

\end{proof}

\begin{figure*}[h]
\center{\includegraphics[height=2.0in,width=6.2in,scale=0.4] {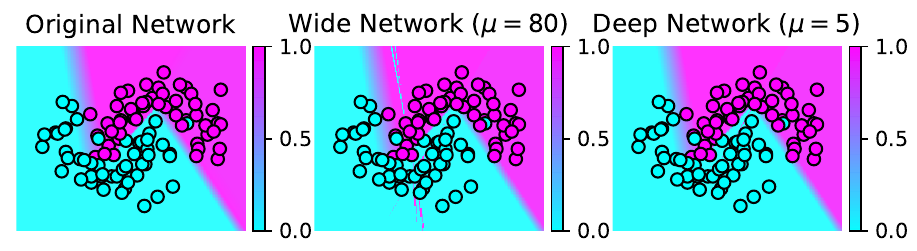}}
\caption{With the procedure in the proof for \textbf{Theorem \ref{thm:maind=N}}, an exemplary 2-6-2-1 network is transformed into a wide network ($\mu=80$) and a deep network ($\mu=5$) respectively.}
\label{fig:experiments}
\end{figure*}

\textbf{Experiment:} A 2-6-2-1 network was trained on the moon dataset. When the training process was finished, the accuracy of the network was 0.94. Then, we transformed the original network into a wide network ($\mu=80$) and a deep network ($\mu=5$) using the procedures described in the proof of \textbf{Theorem \ref{thm:maind=N}}. The patterns of outputs of the constructed wide network and deep network are respectively shown in \textbf{Figure \ref{fig:experiments}}. It is seen that the constructed networks can approximate the original network well, and some expected artifacts can be squeezed by further increasing $\mu$ or using a post-processing network that can identify streaks and remove them.

\clearpage

\newpage
\addcontentsline{toc}{section}{III. Width-Depth equivalence by the De Morgan Law}
\subsection*{III. Width-Depth equivalence by the De Morgan Law}

We have the following formal statement for the quasi-equivalence in light of the De Morgan law. The proof of this theorem is constructive.

\begin{prop}[Proposition C.1 in \citep{lin2018resnet}]
Given a piecewise constant function  $h: \mathbb{R}^{d} \rightarrow \mathbb{R}$, for any small enough $\epsilon>0$, there exists a ResNet $\mathbf{H}_2(\x)$ such that 
\begin{equation}
\m\Big(\{\x~|\mathbf{H}_2(\x)\}\neq h(\x)\} \Big) < \epsilon.
\end{equation}
\label{prop:resnet}
\end{prop}

\textit{The sketch of proof.} The detailed proof can be referred to \cite{lin2018resnet}. We only show the one-dimensional case here for a brief introduction of the idea. 

The operations that are realizable by a residual network block are
\begin{enumerate}
    \item shifting: $G^+=G+c$, for any $c\in \mathbb{R}$. This operation is to shift a function with a constant. 
    \item min or max: $G^+=\min \{G, c\}=G-\sigma(G-c)$ or $G^+=\max \{G, c\}=G+\sigma(c-G)$, for any $c\in \mathbb{R}$. This operation allows us to threshold a function.
    \item min or max with a linear transformation: $G^+=\min \{G, \alpha G+\beta\}=G-\sigma(G-(\alpha G+\beta))$ or $G^+=\max \{G, \alpha G+\beta \}= G+\sigma((\alpha G+\beta)-G)$ for any $\alpha, \beta \in \mathbb{R}$. This operation can adjust the slope of trapezoid functions.
\end{enumerate}

\begin{figure*}[htb!]
\center{\includegraphics[width=\linewidth] {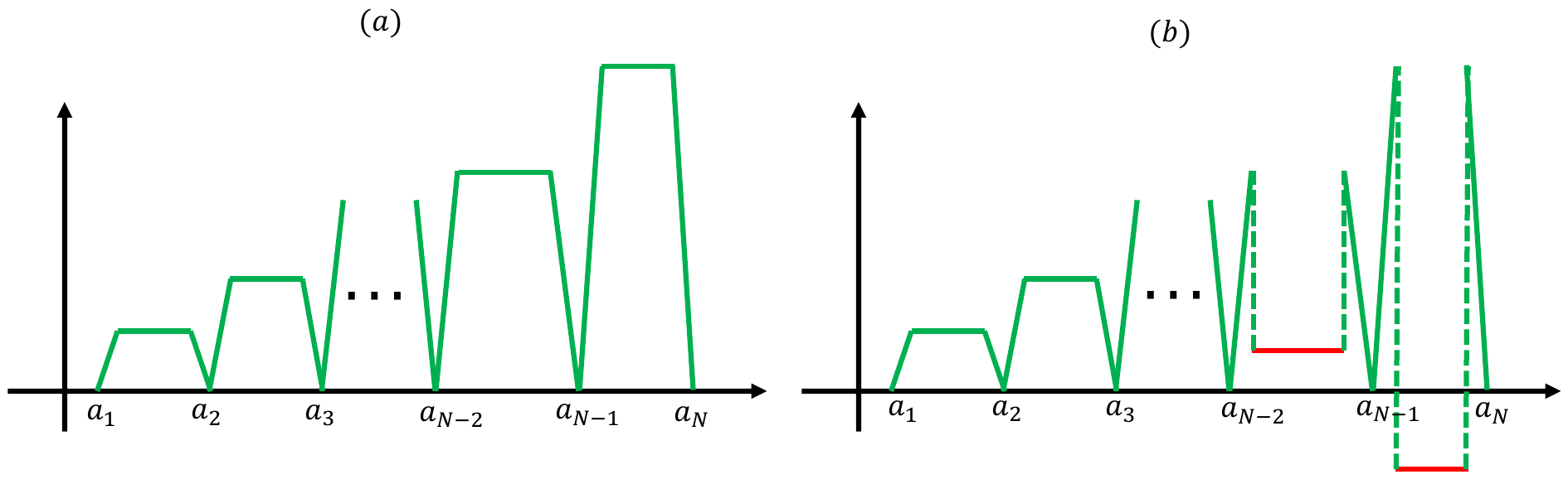}}
\caption{To approximate any univariate function, (a) we construct an increasing trapezoidal function; (b) we adjust the height of each trapezoid function to its corresponding height.}
\label{Figure_theorem3_sketch}
\end{figure*}

As Figure \ref{Figure_theorem3_sketch} shows, we first construct an increasing trapezoidal function with a very high plateau over each trapezoid. Then, we adjust the height of each trapezoid function to its corresponding height. The procedures of constructing an increasing function are shown in Figure \ref{Figure_theorems_trick}, which are based on mathematical induction. Suppose $G_m$ satisfies 
\begin{enumerate}
    \item $G_m=0$, when $x \in [-\infty, a_1]$.
    \item $G_m$ is a trapezoid function over each $[a_i,a_{i+1}]$.
    \item $G_m = k \Vert G \Vert_\infty $ over an interval $[a_k+\delta, a_{k+1}-\delta]$ for any $k=1,2,\cdots,m$.
    \item $0 \leq G_m \leq m\Vert G \Vert_\infty$ over $[-\infty, a_{m+1}]$.
    \item $G_m = -\frac{m\Vert G \Vert_\infty}{\delta}(x-a_{m+1})$ over $ [a_{m+1}, \infty]$.
\end{enumerate}

\begin{figure*}[htb!]
\center{\includegraphics[width=0.5\linewidth] {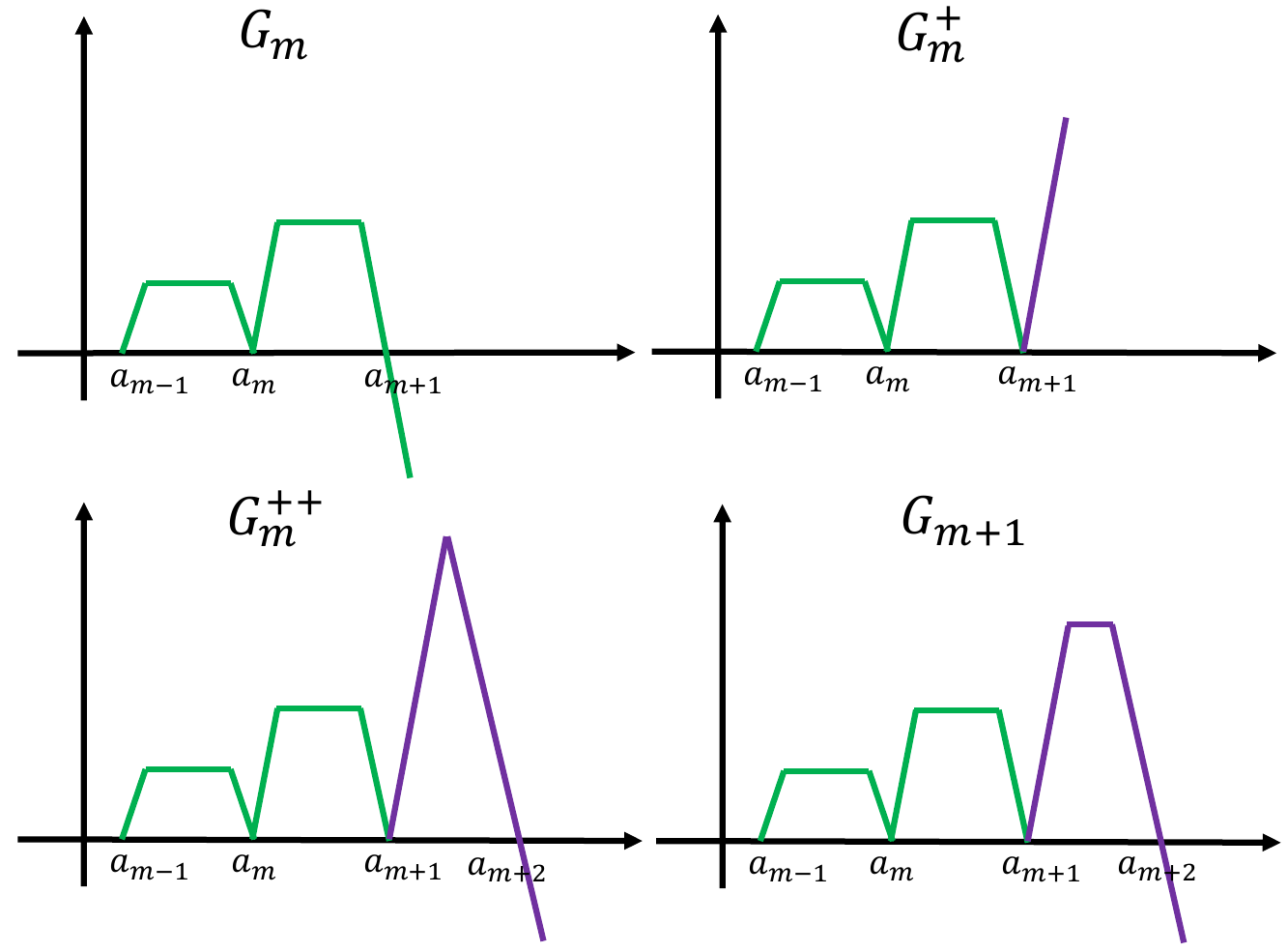}}
\caption{An illustrative plot to show how to derive $G_{m+1}$ from $G_m$.}
\label{Figure_theorems_trick}
\end{figure*}

Given $G_m$, we derive $G_{m+1}$ as follows:
\begin{equation}
    \begin{cases}
    & G_m^+ = \max \{G_m, -(1+\frac{1}{m})G_m \} \\
    & G_m^{++} = \min \{ G_m^+, -G_m^+ + \frac{(m+1)\Vert G\Vert_\infty}{\delta}(a_{m+2}-a_{m+1}) \} \\
    & G_{m+1} = \min \{ G_m^{++}, (m+1)\Vert G\Vert_\infty \}.
    \end{cases}
\end{equation}

The procedure of adjusting the height of each trapezoid function is shown in Figure \ref{Figure_theorem3_sketch}(b). An important consideration is that we need to keep the function on previous subdivisions unchanged
while twisting the current trapezoid function. This is realized by the fact that $||G||_\infty$ is large. Suppose that $\phi_k$ is the target value over the interval $[a_k, a_{k+1}]$, we adjust the values of intervals sequentially by
\begin{equation}
 R_{k-1}^* = R_k^* + \frac{\phi_k-k\Vert G\Vert_\infty}{\Vert G\Vert_\infty}\sigma[R_k^*-(k-1)\Vert G\Vert_\infty].
 \label{eqn:adjust}
\end{equation}
Finally, $R_0^*$ is the constructed function.

$\hfill\square$

\begin{theorem}[Quasi-equivalence in light of the De Morgan law]
Given a disjoint rule system $\{A_i\}_{i=1}^n$, where each rule $A_i$ is characterized by
an indicator function $g_i(\x)$ over a hypercube $\Gamma_i=[a_{1i},b_{1i}]\times\cdots\times[a_{Di},b_{Di}] \in [0,1]^D$: \[g_i(\x) = \left\{\begin{array}{cc} 1,& \textbf{if}\quad  \x \in \Gamma_i\\
0,&\textbf{if} \quad \x\in \Gamma_i^{c},
\end{array}\right.\] fulfilling the De Morgan law
\begin{equation}
 A_1 \lor A_2 \cdots \lor A_n = \neg\Big( (\neg A_1) \land (\neg A_2) \cdots  \land  (\neg A_n) \Big),   
\label{dmlaw}    
\end{equation}
we can construct a wide ReLU network $\mathbf{H}_1(\x)$ to represent the right hand side of the De Morgan law and a deep ReLU network $\mathbf{H}_2(\x)$ to represent the left hand side of the De Morgan law, such that for any $\epsilon>0$,
\begin{equation}
\m\Big(\{\x~|\mathbf{H}_1(\x)\}\neq \mathbf{H}_2(\x)\} \Big) < \epsilon,
\end{equation}
where $\m$ is a measurement in $[0,1]^D$.
\label{thm:dmlaw}
\end{theorem}

\begin{figure*}[htb]
\label{CubeWidth}
\center{\includegraphics[height=3.5in,width=6in,scale=0.4] {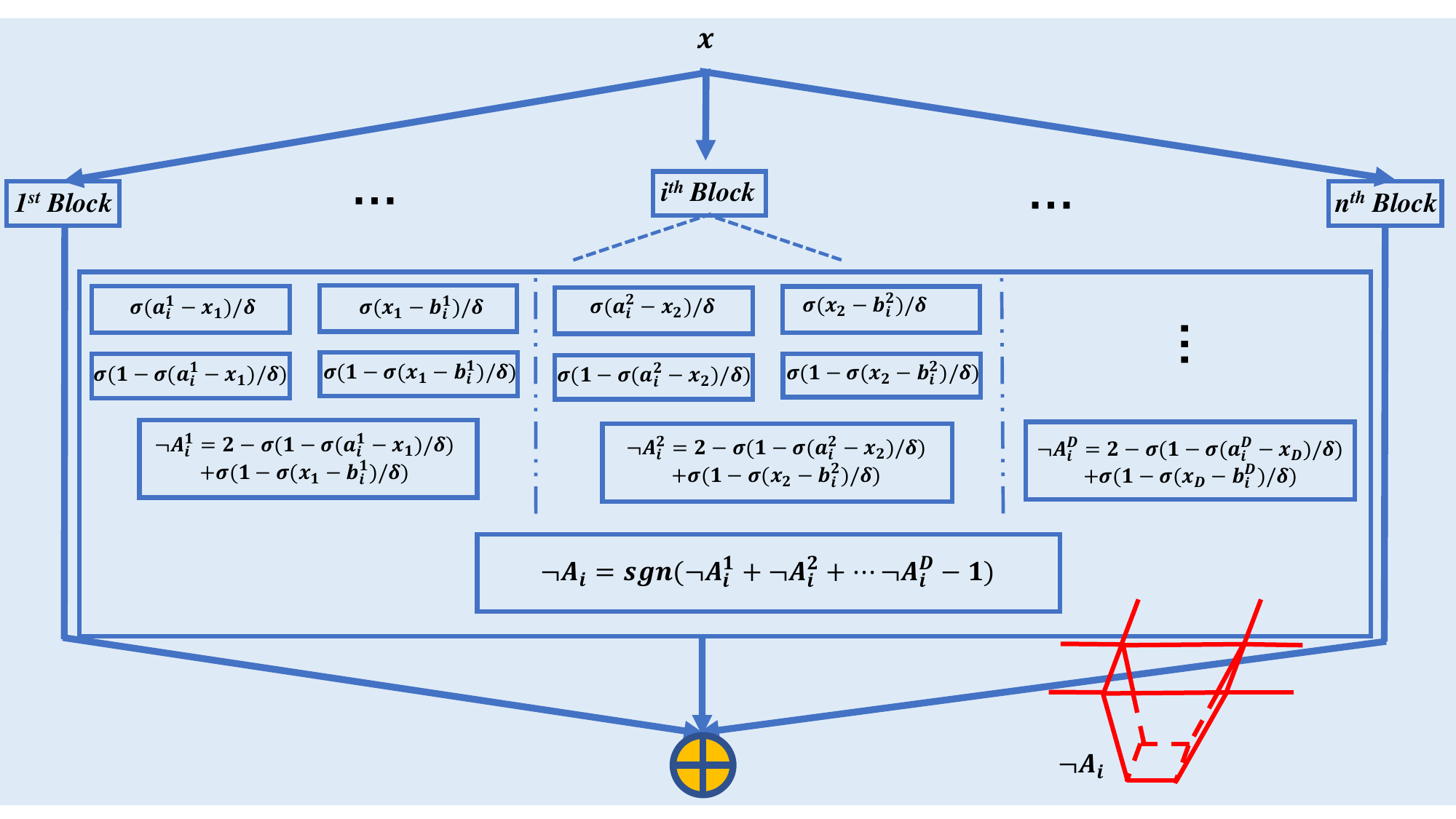}}
\caption{A wide network to realize the negation of the logic intersection of the negation of propositional rules in a high-dimensional space. The negation of each propositional rule is associated with a trap-like indicator function.}
\label{fig:CubeWidth}
\end{figure*}

\begin{proof} The defined rule system is a piecewise constant function over a hypercube. The wide and deep networks are constructed as follows:

\underline{Wide network $\mathbf{H}_1(\x)$}: As illustrated in Figure \ref{fig:CubeWidth}, to represent the rule $A_i$ that is equivalent to $g_i(\x)$, a trap-like function is constructed to represent $\neg A_i$ that is equivalent to $1-g_i(\x)$. In our construction, for each hypercube the measures of $\{\x \in \Gamma_i|\mathbf{H}_1(\x)\neq g_i(\x)\}$ is no more than $vol(\Gamma_i)(1-2\delta)^n$, where $vol(\Gamma_i)$ is the volume of a hypercube $\Gamma_i$. Therefore, the total measure of errors for all the hypercubes is less than $\sum_i^n vol(\Gamma_i)(1-2\delta)^n = (1-2\delta)^n \sum_i^n vol(\Gamma_i) < (1-2\delta)^n$. As a result, let $\delta<(1-(\epsilon/2)^{1/n})/2$, we have
\begin{equation}
\m\Big(\{\x~|\mathbf{H}_1(\x)\}\neq \sum_{i=1}^n g_i(\x)\} \Big) < \epsilon/2 .
\label{eqn:wide_dm}
\end{equation}

\underline{Deep network $\mathbf{H}_2(\x)$}: $A_1 \lor A_2 \cdots \lor A_n$ is a piecewise constant function over a hypercube, based on \textbf{Proposition \ref{prop:resnet}}, we have 
\begin{equation}
\m\Big(\{\x~|\mathbf{H}_2(\x)\}\neq \sum_{i=1}^n g_i(\x)\} \Big) < \epsilon/2 .
\label{eqn:deep_dm}
\end{equation}

Combining Eqs. \eqref{eqn:wide_dm} and \eqref{eqn:deep_dm} leads to
$\m\Big(\{\x~|\mathbf{H}_1(\x)\}\neq \mathbf{H}_2(\x)\} \Big) < \epsilon$, which concludes our proof.

\end{proof}

\clearpage

\newpage
\addcontentsline{toc}{section}{IV. Quasi-equivalence of Quadratic Networks. }
\subsection*{IV. Quasi-equivalence of Quadratic Networks.}
In this section, we first prove the correctness of the continued fraction representation of a polynomial. Then, we report the experimental results on the quasi-equivalency of quadratic networks.
\subsubsection*{1. Continued Fraction Representation}
For a general univariate polynomial, there is a continued fraction representation as follows: 
\begin{equation}
\begin{aligned}
    & \sum_{i=0}^{2N} a_{i} x^i = \sum_{k=0}^{N} a_{2k} x^{2k} + \sum_{k=0}^{N} a_{2k+1} x^{2k+1}  \\
    & = \cfrac{b_0}{1-\cfrac{b_1 x^2}{1+b_1 x^2-\cdots \cfrac{b_{N-1} x^2}{1+b_{N-1}x^2- \cfrac{b_N x^2}{1 + b_N x^2} }}}
     + \cfrac{c_0x}{1-\cfrac{c_1 x^2}{1+c_1 x^2-\cdots \cfrac{c_{N-1} x^2}{1+c_{N-1}x^2- \cfrac{c_N x^2}{1 + c_N x^2} }}}, 
\end{aligned}    
\end{equation}
where $a_i \neq 0$, $b_0 = a_0$, $b_k = a_{2k}/a_{2k-2}, k\geq 1$ and $c_0 = a_1$, $c_k = a_{2k+1}/a_{2k-1}, k\geq 1$. In the right side of \eqref{contiued}, the first part contains the terms with even powers of $x$, while the second part is for the odd powers of $x$. Because both even and odd parts are in the essentially same format, we only show the correctness of the even part:

\begin{equation}
\sum_{k=0}^{N} a_{2k} x^{2k} = \sum_{k=0}^{N} \Big(\prod_{l=0}^k b_l\Big) x^{2k} = \cfrac{b_0}{1-\cfrac{b_1 x^2}{1+b_1 x^2-\cdots \cfrac{b_{N-1} x^2}{1+b_{N-1} x^2- \cfrac{b_N x^2}{1 + b_N x^2} }}}.
    \label{contiued}
\end{equation}

Specially, we can derive the continued fraction as follows:
\begin{equation}
\begin{aligned}
\sum_{k=0}^{N} \Big(\prod_{l=0}^k b_l\Big) x^{2k} & =  b_0(1+b_1 x^2 (1+b_2 x^2(1+b_3 x^2(\cdots ( b_{N-1} x^2 (1+b_N x^2)))))) \\
& =  \cfrac{b_0}{\cfrac{1}{1+b_1 x^2(1+b_2 x^2(1+b_3 x^2(\cdots ( b_{N-1} x^2 (1+b_N x^2)))))}} \\
& = \cfrac{b_0}{1-\cfrac{b_1 x^2(1+b_2 x^2(1+b_3 x^2(\cdots ( b_{N-1} x^2 (1+b_N x^2)))))}{1+b_1 x^2 (1+b_2 x^2(1+b_3 x^2(\cdots ( b_{N-1} x^2 (1+b_N x^2)))))}} \\
& = \cfrac{b_0}{1-\cfrac{b_1 x^2}{\cfrac{1}{1+b_2 x^2(1+b_3 x^2(\cdots ( b_{N-1} x^2 (1+b_N x^2))))}+b_1 x^2 }} \\
& = \cfrac{b_0}{1-\cfrac{b_1 x^2}{1+b_1 x^2-\cfrac{b_2 x^2(1+b_3 x^2(\cdots ( b_{N-1} x^2 (1+b_N x^2))))}{1+b_2 x^2(1+b_3 x^2(\cdots ( b_{N-1} x^2 (1+b_N x^2))))} }} \\
& = \cfrac{b_0}{1-\cfrac{b_1 x^2}{1+b_1 x^2-\cdots \cfrac{b_{N-1} x^2}{1+b_{N-1}x^2- \cfrac{b_N x^2}{1 + b_N x^2} }}}.
\end{aligned}
\end{equation}

The following example helps illustrate the correctness of \eqref{contiued}:
\begin{equation}
\cfrac{b_0}{1-\cfrac{b_1 x^2}{1+b_1 x^2- \cfrac{b_2 x^2}{1+b_2 x^2}}} = \cfrac{b_0}{1-\frac{b_1 x^2 (1+b_2 x^2)}{(1+b_1 x^2)(1+b_2 x^2)-b_2 x^2}} = \cfrac{b_0}{1-\frac{b_1 x^2 + b_1 b_2 x^4}{1+b_1 x^2 + b_1 b_2 x^4}} = b_0 + b_0 b_1 x^2 + b_0 b_1 b_2 x^4.
\end{equation}

If $a_i = 0$ for certain $i$, we can derive an approximate continued fraction representation by complementing $\sum_{i=0}^{2N} a_{i} x^i$ with a term $\delta_i x^i$, where $\delta_i$ is a small constant. For a given approximation requirement, we can always choose a sufficient small, nonzero $\delta_i$ to accommodate the accuracy requirement since $x$ is bounded.

\subsubsection*{2. Experiments}
This subsection reports the experimental results on quasi-equivalence of quadratic networks in two steps. In the first step, we show the feasibility of the reciprocal activation function in the network according to the continued fraction representation. In the second step, we compare the accuracy and robustness between wide and deep networks.

\textbf{Experimental Design:} We first preprocessed the MNIST dataset using image deskewing and dimension deduction techniques. Image deskewing (\url{https://fsix.github.io/mnist/Deskewing.html}) straightens the digits that are written in a crooked manner. Mathematically, skewing is modeled as an affine transformation: $Image^{'} = A(Image)+b$, in which the center of mass of the image is computed to estimate how much offset is needed, and the covariance matrix is estimated to approximate by how much an image is skewed. Furthermore, the center and covariance matrix are employed for the inverse affine transformation, which is referred to as deskewing. Then, we used t-SNE \citep{van2008visualizing} to reduce the dimension of the MNIST from $28\times 28$ to $2$, as the two-dimensional embedding space. Figure \ref{Effect_Comparison} presents the effect of deskewing on t-SNE. 

\begin{figure}[htb]
\center{\includegraphics[height=3.0in,width=6.0in,scale=0.4] {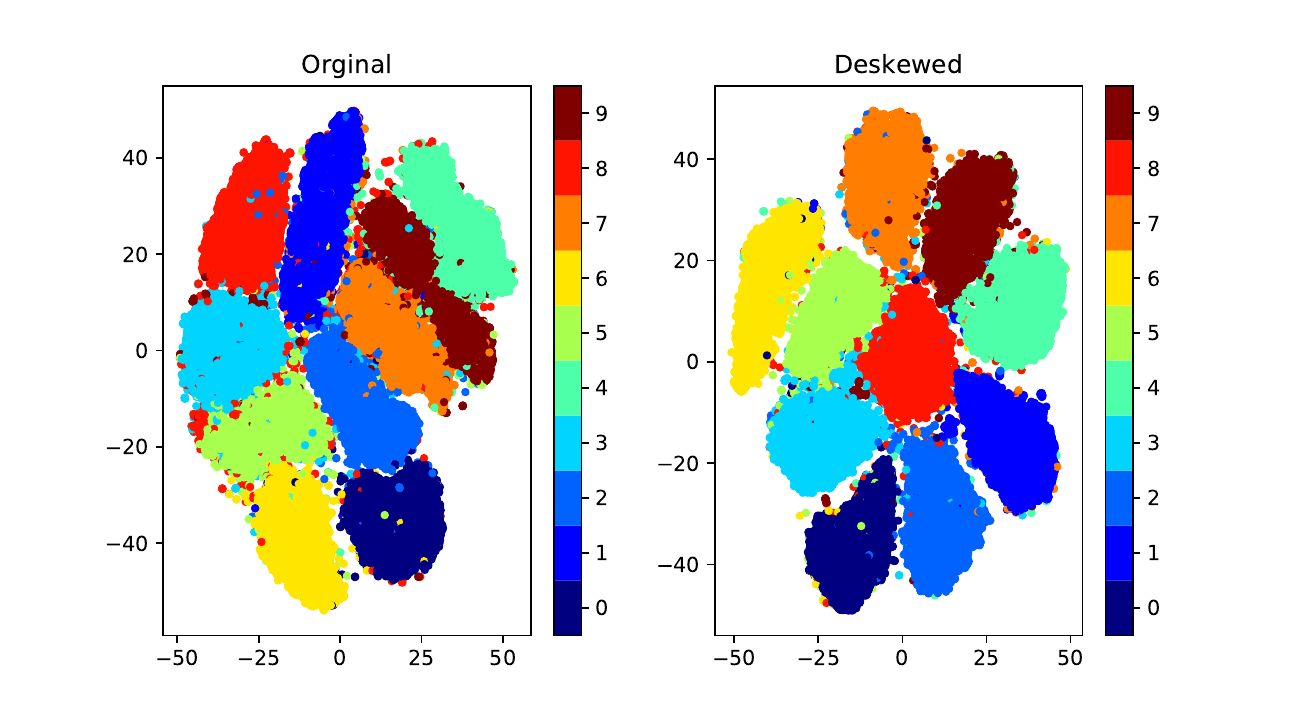}}
\caption{The effect of the deskewing operation on the t-SNE result of the MNIST data. Deskewing significantly improves the embedding quality.}
\label{Effect_Comparison}
\end{figure}

The embedding data can be naturally divided into the training and testing sets based on the partition in the original space. To illustrate the aforementioned quasi-equivalence between quadratic networks, We built three representative quadratic deep learning models, using
two wide sub-networks (by the factorization representation),
four deep sub-networks with the reciprocal activation (by the continued fraction), and 
four deep sub-networks using the ReLU activation (representing the first-order approximation to the continued fraction representation).
Each wide sub-network takes one element of the input, and 
two paired deep sub-networks take the same element of the input.
Each wide sub-network has four layers with $8,4,2,1$ neurons sequentially, while each deep sub-network also has four layers with one neuron in each layer. These sub-networks are followed by a fully connected ReLU layer of the same structure. The fully connected network has three layers with $300$, $200$, and $10$ neurons respectively. 

We performed the batch training with $1,000$ instances per batch in each iteration. All the parameters were initialized with truncated Gaussian distribution of a mean $0$ and a variance $0.1$. The learning rate was set to $0.002$, and the whole network was optimized with Adam \citep{kingma2015adam}. The number of epochs was $200$.

\textbf{Feasibility of the Reciprocal Activation:} At the first glance, the training of a deep model using the reciprocal activation function is subject to instability. For example, when the value of the input is large, $\frac{b_N x^2}{1+b_N x^2}$ gets closer to $1$, which will lead to zero in the denominator in the next fraction, and undermine the training convergence. However, we argue that when data are appropriately normalized, the training process can be made stable. We did the following experiment to study how the normalization can help.
Specifically, the input was normalized by the formula: $x_{new} = \frac{x_{max}}{\zeta} \Big(\frac{x-x_{min}}{x_{max}-x_{min}}\Big)$, where $\zeta$ is a scaling factor. We evaluated four scaling factors $\{1,4,8,16\}$ by repeating $20$ times the training process for each factor. The criterion is whether the training process converges or not. In this experiment, as long as 'nan' does not appear, the training will converge because we used a sufficiently large number of epochs. Table \ref{tab:feasibility} shows the success rate of training for different $\zeta$ scales. It can be seen that without scaling the training process always fails, while the success rate reaches $80\%$ when $\zeta=16$. The point is that the training process can be stablized with an appropriate scaling operation. 

\begin{table}[htb]
 \centering
\caption{The success rates of the training process with reciprocal activation after normalization.}
 \begin{tabular}{ |c|c|c|c| c|c|  }
      \hline
     $\zeta$ & 1  & 4 & 8 & 16 \\
   \hline
     Success Rate & 0 &  60\% & 55.5\% & 80\% \\
   \hline   
 \end{tabular}
 \label{tab:feasibility}
\end{table}

\textbf{Accuracy and Robustness of the Three Models:} We repeated the training process $10$ times and computed the accuracy of the three models on the test dataset. As shown in Table \ref{tab:ModelPerformance}, all the three models achieved the state-of-the-art results. The performance of the deep model using ReLU was only slightly lower than that of the other two models.

\begin{table}[htb]
 \centering
\caption{Performance of the three models on the test dataset after $10$ repetitions. }
 \begin{tabular}{ |c|c|c|c|  }
      \hline
     Model & Wide Model  & \makecell{Deep Model \\ (Reciprocal Activation)} & \makecell{Deep Model \\ (ReLU Activation)}  \\
   \hline
     Accuracy & $0.9813 \pm 0.000582$ &  $0.9810 \pm 0.000249$ & $0.9804 \pm 0.000513$ \\
   \hline   
 \end{tabular}
 \label{tab:ModelPerformance}
\end{table}

Furthermore, we used the following four popular adversarial attack methods to evaluate the robustness of the deep learning models: (1) fast gradient method (FGM); (2) fast sign gradient method (FSGM \citep{goodfellow2014explaining}); (3) iterative fast sign gradient method (I-FSGM \citep{kurakin2016adversarial}); and (4) DeepFool \citep{moosavi2016deepfool}. Let $\mathbf{\theta}$ denote the model parameters, $\x$ be the input, $y$ be the target pertaining to $\x$ and $L(\mathbf{\theta},\x,y)$ the loss, FGM generates an adversary as
\begin{equation}
    \x^{adv} = \x + \epsilon \cdot (\nabla_\x L(\mathbf{\theta},\x,y)),
\end{equation}
where $\epsilon$ is an amplitude factor. FGM is plausible to find the attack along the direction of the gradient. FSGM computes an adversary based on 
\begin{equation}
    \x^{adv} = \x + \epsilon \cdot sign(\nabla_\x L(\mathbf{\theta},\x,y)),
\end{equation}
where $\epsilon$ is also a factor, and $sign(\cdot)$ outputs $1$ for a positive argument and $-1$ otherwise. I-FSGM iteratively derives an adversary with the FSGM formula:
\begin{equation}
\begin{cases}
    & \x_0^{adv} = \x \\
    & \x_{k+1}^{adv} =  Clip_{\x, \alpha} \Big(\x_{k}^{adv} + \epsilon \cdot sign(\nabla_\x L(\mathbf{\theta},\x_{k}^{adv},y)) \Big),
\end{cases}
\end{equation}
where $Clip_{\x, \alpha}(\cdot)$ is a thresholding function such that the maximum value of $\x$ is no more than a preset limit $\alpha$. In the experiment, we set this limit $\alpha$ to the maximum value of the input. DeepFool calculates an adversary by
\begin{equation}
\begin{cases}
    & \x_0^{adv} = \x \\
    & \x_{k+1}^{adv} = \x_{k}^{adv} + \frac{L(\mathbf{\theta},\x_{k}^{adv},y)}{||\nabla_\x L(\mathbf{\theta},\x_{k}^{adv},y))||_2^2} \nabla_\x L(\mathbf{\theta},\x_{k}^{adv},y)).
\end{cases}
\end{equation}

Table \ref{tab:AdvPerformance} shows the performance of the three models on the adversarial samples generated based on the test dataset using the aforementioned attacking methods. We bold-faced the best scores under attack. It can be seen that the wide model shows the highest accuracy under most attacks. The only exceptions are the cases of FSGM($\epsilon=3$) and FSGM($\epsilon=5$), where the wide model is the second best.

\begin{table}[htb]
 \centering
\caption{Accuracy of the three models under attacks, where $\epsilon$ is the amplitude factor and $N$ is the number of iterations.}
 \begin{tabular}{ |c|c|c|c| c|  }
      \hline
     Attacks & Wide Network & Deep Network & Deep Network (ReLU) \\
     \hline
     FGM ($\epsilon=200$) & \textbf{0.9562 $\pm$ 0.0025} & 0.9526 $\pm$ 0.0043 & 0.9524 $\pm$ 0.0061 \\
     \hline
     FGM ($\epsilon=500$) & \textbf{0.9346 $\pm$ 0.0049} & 0.9263 $\pm$ 0.0054 & 0.9226 $\pm$ 0.0061 \\     
   \hline
     FGM ($\epsilon=1000$) & \textbf{0.9114 $\pm$ 0.0076} & 0.8963 $\pm$ 0.0077 & 0.8909 $\pm$ 0.0094 \\ 
   \hline   
     FSGM ($\epsilon=2$) & \textbf{0.4869 $\pm$ 0.0252} & 0.4642 $\pm$ 0.0205 & 0.4835 $\pm$ 0.0132 \\
   \hline   
     FSGM ($\epsilon=3$) & 0.2811 $\pm$ 0.0303 & 0.2349 $\pm$ 0.0245 & \textbf{0.2840 $\pm$ 0.0173} \\
   \hline  
     FSGM ($\epsilon=5$) & 0.1005 $\pm$ 0.0244 & 0.0809 $\pm$ 0.0194 & \textbf{0.1306 $\pm$ 0.0184} \\
    \hline   
     I-FSGM ($\epsilon=0.01, N=20$) & \textbf{0.4750 $\pm$ 0.0152} & 0.4614 $\pm$ 0.0167 & 0.4711 $\pm$ 0.0127 \\
   \hline   
     I-FSGM ($\epsilon=0.01, N=30$) & \textbf{0.2820 $\pm$ 0.0312} & 0.2490 $\pm$ 0.0304 & 0.2646 $\pm$0.0224\\
   \hline  
     I-FSGM ($\epsilon=0.01, N=50$) & \textbf{0.1416 $\pm$ 0.0516} & 0.1138 $\pm$ 0.0292 & 0.1131 $\pm$ 0.0403\\
   \hline   
      DeepFool ($N=30$) & \textbf{0.9225 $\pm$ 0.0069} & 0.9125 $\pm$ 0.0111 & 0.9064 $\pm$ 0.0102 \\
   \hline    
      DeepFool ($N=50$) & \textbf{0.8504 $\pm$ 0.0184} & 0.7516 $\pm$ 0.2305 & 0.8080 $\pm$ 0.0236 \\
   \hline  
      DeepFool ($N=60$) & \textbf{0.8213 $\pm$ 0.0255} & 0.6458 $\pm$ 0.2895 & 0.7612 $\pm$ 0.0337 \\
   \hline  
   
 \end{tabular}
 \label{tab:AdvPerformance}
\end{table}

Next, we probed the robustness of each model by examining the needed strength of the adversarial attack such that the network performance will drop by a pre-specified percentage. Using the absolute performance drop as a reference is better than using the performance compromised by adversarial attacks, as it eliminates the bias induced by the original network performance. Suppose that the original performance (without attack) is $O$, with each attacking method, we reduced $O$ to $O'=O-5\%$ by gradually increasing the strength of attack. The reason to select a $5\%$ drop is that if the drop is too high the comparison is not sensitive, while if the drop is too low the attack is too weak. Specifically, we gradually increased the strength by a fixed step for each attacking method until the performance drop is over $5\%$. For FGM, FSGM, I-FSGM and DeepFool, the steps are 50, 1, 1 and 1 respectively. Table \ref{tab:ChangingEpsilon} shows the needed strength values. The higher the needed strength, the more robust the model is. We bold-faced the best scores. Overall, the wide model shows the highest robustness. The only exception is the case of I-FSGM, where the wide model is the second best. These data suggest that the wide model seems more robust than the deep model. Further studies are needed to understand the underlying mechanisms and derive practical guidelines.

\begin{table}[htb]
 \centering
\caption{Needed strength values for each of the four attacking methods to reduce the unattacked network performance by $5\%$}
 \begin{tabular}{ |c|c|c|c| c|  }
      \hline
     Attacks & Wide Network & Deep Network & Deep Network (ReLU) \\
     \hline
     FGM ($\epsilon$) & \textbf{547.5 $\pm$ 65.8} & 447.4 $\pm$ 71.6 & 512.5 $\pm$ 95.8 \\
   \hline   
     FSGM ($\epsilon$) & \textbf{60.174 $\pm$ 1.267}  & 59.464 $\pm$ 1.895 & 59.783 $\pm$ 1.594\\

    \hline   
     I-FSGM ($\epsilon, N=20$) & 6.20 $\pm$ 0.41  & 6.15 $\pm$ 0.58 & \textbf{6.50 $\pm$ 0.51}\\

   \hline   
      DeepFool ($N$) & \textbf{26.95 $\pm$ 1.60}  & 25.05 $\pm$ 1.76 & 26.85 $\pm$ 2.25\\
   \hline
   
 \end{tabular}
 \label{tab:ChangingEpsilon}
\end{table}

\clearpage

\newpage
\addcontentsline{toc}{section}{V. Bounds for the Partially Separable Representation}
\subsection*{V. Bounds for the Partially Separable Representation}
The purpose of this section is to show that the partially separable function can be realized a quadratic network whose structure is bounded. Let us first introduce necessary preliminaries.

\textbf{Partially Separable Representation:} Approximating a multivariate function $f(x_1,x_2,\cdots,x_n)$ by a set of functions of fewer variables is a basic problem in approximation theory \citep{light2006approximation}. Despite that some function $f$ is directly separable in the form of
\begin{equation}
    f(x_1,x_2,\cdots,x_n) = \phi_1(x_1)\phi_2(x_2)\cdots \phi_n(x_n),
\end{equation}
a more general formulation to express a multivariate function $f$ is 
\begin{equation}
    f(x_1,x_2,\cdots,x_n) = \sum_{l=1}^L \prod_i^n \phi_{li}(x_{i}).
\end{equation}
If $L$ is permitted to be sufficiently large, then such a model is rather universal. For bivariate functions, an inspiring theorem has been proved \citep{light2006approximation}: Let $\{u_n\}_{n\in \mathcal{N}}$ and $\{v_n\}_{n\in \mathcal{N}}$ are orthornomal bases of $L^2 (\textbf{X})$ and $L^2 (\textbf{Y})$ respectively, then $\{u_m v_n\}_{(m,n) \in \mathcal{N}^2}$ is an orthornomal basis of $L^2 (\textbf{X} \times \textbf{Y})$.

To approximate a general multivariate function, we relax the restrictive equality to the approximation in the $L_1$ sense and assume that, for every continuous $n$-variable function $f(x_1,\cdots,x_n)$ on $[0,1]^n$, given any positive number $\epsilon$, there exists a group of $\phi_{li}$, satisfying:
\begin{equation}
    \int_{{(x_1,\cdots,x_n)\in [0,1]^n}}  |f(x_1,\cdots,x_n) - \sum_{l=1}^L \prod_{i=1}^n \phi_{li}(x_{i})| < \epsilon.
\end{equation}

The key to demonstrate boundedness of the partially separable representation with a quadratic network is to use the Taylor's expansion, and estimate the remainder, leading to a realization of the partially separable representation. Based on such a realization, we obtain an upper bound of the needed width and depth for the partially separable representation. Let $\partial^{\bm{\alpha}} f(\x) = \frac{\partial^{\alpha_1}}{\partial x_1} \frac{\partial^{\alpha_2}}{\partial x_2}\cdots \frac{\partial^{\alpha_n}}{\partial x_n} f(\x)$, $\bm{\alpha}! = \prod_{i=1}^n \alpha_i !$, and $\x^{\bm{\alpha}}=x_1^{\alpha_1}\cdots x_n^{\alpha_n}$, for the function class $\mathcal{F}^{n,k}=\{f\in C^{k+1}([0,~1]^n) ~|~ \|\partial^{\bm{\alpha}} f \|_{\infty} \leq M, \quad \forall  |\bm{\alpha}|\leq k \}$,
the following theorem holds. 
\begin{theorem}
For any $f \in \mathcal{F}^{n,k}$, if $\x \in [0,1]^n$, there exists a quadratic network $\mathcal{Q}(\x)$ with the width no more than $k{n+k \choose n}$ and the depth no more than $\log_2 {(kn)}$ to represent $f$ in the partially separable representation, satisfying that 
\begin{equation}
    \sup_{\x \in [0,1]^n} |f(\x)-\mathcal{Q}(\x)| \leq \frac{M}{(k+1)!}n^{k+1}.
\end{equation}
\label{thm:bound}
\end{theorem}

\begin{proof}Let $f \in \mathcal{F}^{n,k}$, if $\x \in [0,1]^n$, then based on classical multivariate calculus \citep{widder1989advanced},
\begin{equation}
\label{taylor1}
    f(\x) = \sum_{|\bm{\alpha}| \leq k} \frac{\partial^{\bm{\alpha}}}{\bm{\alpha}!} f(\bm{0}) \x^{\bm{\alpha}}+R_{\x,k}(\x),
\end{equation}
where $R_{\x,k}(\x)$ is the remainder term. For any $\x \in [0,1]^n$, we have
\begin{equation}
\label{taylor2}
    R_{\x,k}(\x) \leq \frac{M}{(k+1)!}(|x_1| + |x_2| + \cdots + |x_n|)^{k+1} \leq \frac{M}{(k+1)!}n^{k+1}.
\end{equation}
It is observed from Eqs. \ref{taylor1} and \ref{taylor2} that, given $f \in \mathcal{F}^{n,k}$, the Taylor expansion forms a partially separable representation $\sum_{|\bm{\alpha}| \leq k} \frac{\partial^{\bm{\alpha}}}{\bm{\alpha}!} f(\bm{0}) \x^{\bm{\alpha}}$ with ${n+k \choose n}$ products, and the error is no more than $\frac{M}{(k+1)!}n^{k+1}$. Along this line, we apply the quadratic network to express $\sum_{|\bm{\alpha}| \leq k} \frac{\partial^{\bm{\alpha}}}{\bm{\alpha}!} f(\bm{0}) \bm{x}^{\bm{\alpha}}$. Any univariate polynomial of degree $N$ can be computed by a quadratic network using the ReLU function with the depth of $\log_2(N)$ and the width of $N$. To approximate a general term $\frac{\partial^{\bm{\alpha}}}{\bm{\alpha}!} f(\bm{0}) \bm{x}^{\bm{\alpha}}=\frac{\partial^{\bm{\alpha}}}{\bm{\alpha}!} f(\bm{0}) x_1^{\alpha_1}\cdots x_n^{\alpha_n}$, the depth of the quadratic network is no more than $\log_2({\max\{\alpha_1,\cdots,\alpha_n\}})\leq \log_2 {k}$,  and the width is no more than $\sum_{i=1}^n \alpha_i = k$. In addition, the depth of a network that is used to express the product of $n$ terms is  $\log_2 {n}$. Therefore, the depth is no more than $\log_2 {k} + \log_2 {n}=\log_2 {(kn)}$. In contrast, the width is no more than $k{n+k \choose n}$, considering that there are ${n+k \choose n}$ terms in $\sum_{|\bm{\alpha}| \leq k} \frac{\partial^{\bm{\alpha}}}{\bm{\alpha}!} f(\bm{x}) \bm{x}^{\bm{\alpha}}$. 
\end{proof}

\newpage
\addcontentsline{toc}{section}{VI. Properties of the Partially Separable Representation}
\subsection*{VI. Properties of the Partially Separable Representation}
Let us compare the properties of the partially separable representation with those of the piecewise polynomial representation and the Kolmogorov–Arnold representation in terms of universality, finiteness, decomposability and smoothness. As a result, the partially separable representation is well justified.

\textbf{1. Universality:} The representation should have a sufficient ability to express the functions. A piecewise polynomial representation is capable of representing any continuous function in a local way. The Kolmogorov–Arnold representation theorem states that every multivariate continuous function can be represented as a combination of continuous univariate functions, which solves a generalized Hilbert thirteen problem. Therefore, the Kolmogorov–Arnold representation also possesses universality. As far as the partially separable representation is concerned, its universality in the $L_1$ distance is analyzed in the above Taylor expansion analysis. In summary, these three representation classes are all powerful enough to express continuous functions. 

\textbf{2. Finiteness:}  The multivariate Taylor expansion can approximate a general continuous function $f \in \mathcal{F}^{n,k}$ with a finite number of terms. Although the proof in the manuscript devises a specific strategy of obtaining a partially separable representation, the partially separable representation actually allows a variety of constructions in addition to the Taylor expansion. There is no fundamental reason that the partially separable representation will necessarily have exponentially many terms in a majority of practical tasks after reasonable assumptions are incorporated.

\textbf{3. Decomposability:} Decomposability is to decode the functionality of the model in terms of its building units, i.e., cells, layers, blocks, and so on. A plethora of engineering examples, such as software development and optical system design, have shown that modularized analysis is effective. Particularly, modularizing a neural network is advantageous for the model interpretability. For example, \cite{chen2016infogan} developed InfoGAN to enhance decomposability of features learned by GAN \citep{goodfellow2014generative}, where InfoGAN maximizes the mutual information between the latent code and the observations, encouraging the use of noise to encode the semantic concept. 
Since the partially separable representation is more decomposable than its counterparts, the network constructed in reference to the partially separable representation is easily interpretable. For instance, such a network has simpler partial derivatives that can simplify interpretability analysis \citep{lipton2018mythos}. 

\textbf{4. Smoothness:} A useful representation should be as smooth as possible such that the approximation can suppress high-frequency oscillations and be robust to noise. In the last eighties, Girosi and Poggio claimed that the use of the Kolmogorov-Arnold representation is doomed because the inner functions and the outer functions are highly non-smooth \citep{girosi1989representation}. In the piecewise polynomial representation, the situation gets better since at least over each interval the representation is smooth. As far as the partially separable representation is concerned, because the expression is greatly relaxed (from the exact computation to approximate representation), it is feasible to make each element smooth as $L$ goes large, and extract meaningful structures underlying big data. 
\begin{table}[htb]
 \centering
\caption{Comparison of the three functional representations, where "\checkmark" means "yes", "\text{\sffamily X}" means "no", and "--" means "mediocre".}
 \begin{tabular}{ |c|c|c|c| c|  }
      \hline
   \textbf{Representation} & \textbf{Universality} & \textbf {Finiteness}& \textbf{Decomposability} & \textbf{Smoothness}\\
   \hline
   \textbf{Piecewise Polynomial}  &  \checkmark & \text{\sffamily X} &   \text{\sffamily X} & --\\
   \hline
   \textbf{Kolmogorov-Arnold} & \checkmark & \checkmark& -- & \text{\sffamily X}\\
   \hline
   \textbf{Partially Separable} & \checkmark & -- & \checkmark & \checkmark  \\
   \hline
 \end{tabular}
 \label{tab:ps}
\end{table}

In summary, we compare the three representation schemes in terms of universality, finiteness, decomposability, and smoothness in \textbf{Table \ref{tab:ps}}. Clearly, the partially separable representation is suitable for machine learning tasks.

\newpage
\addcontentsline{toc}{section}{VII. Width-Depth Correlation Through Partially Separable Presentation}
\subsection*{VII. Width-Depth Correlation Through Partially Separable Presentation}


Now, let us analyze the complexity of the quadratic network in light of the aforementioned partially separate representation scheme as shown in \textbf{Figure \ref{fig:complexity}}. Suppose that the polynomial $P_{li}$ is of degree $N_{li}$, then the representation of each $P_{li}$ can be done with a network of width $\sum_{l=1}^{L} \sum_{i=1}^{n} N_{li}$ and depth $\max_{l,i}\{\log_2(N_{li})\}$. Next, the multiplication demands an additional network of width $Ln$ and depth $\log_2(n)$. Therefore, the overall quadratic network architecture will be of width $ \max \{\sum_{l=1}^{L} \sum_{i=1}^{n} N_{li}, Ln\}$ and depth $\max_{l,i}\{\log_2(N_{li})\}+\log_2(n)$. Because the depth scales with a log function, which changes slowly when the dimensionality of the input is large. For simplicity, we take an approximation for depth $\max_{l,i}\{\log_2(N_{li})\}+\log_2(n) = \log_2(\max_{l,i}\{N_{li}\})+\log_2(n) \approx \alpha \log_2 \sum_{l=1}^{L} \sum_{i=1}^{n} N_{li}+\log_2(n)$, where $\alpha$ is a positive constant. Let $\sum_{l=1}^{L} \sum_{i=1}^{n} N_{li}= N^{\Sigma}$, which describes the overall complexity of the function to be expressed, then the formulas to compute the width and depth are simplified as follows:

\begin{equation}
\begin{aligned}
    & Width = \max\{N^{\Sigma},Ln\}  \\
    & Depth = \alpha log_{2}(N^{\Sigma}) + log_{2}(n).
\end{aligned}
\label{eq:WDrelation}
\end{equation}

\begin{figure}[htb]
\center{\includegraphics[height=3in,width=3.5in,scale=0.4] {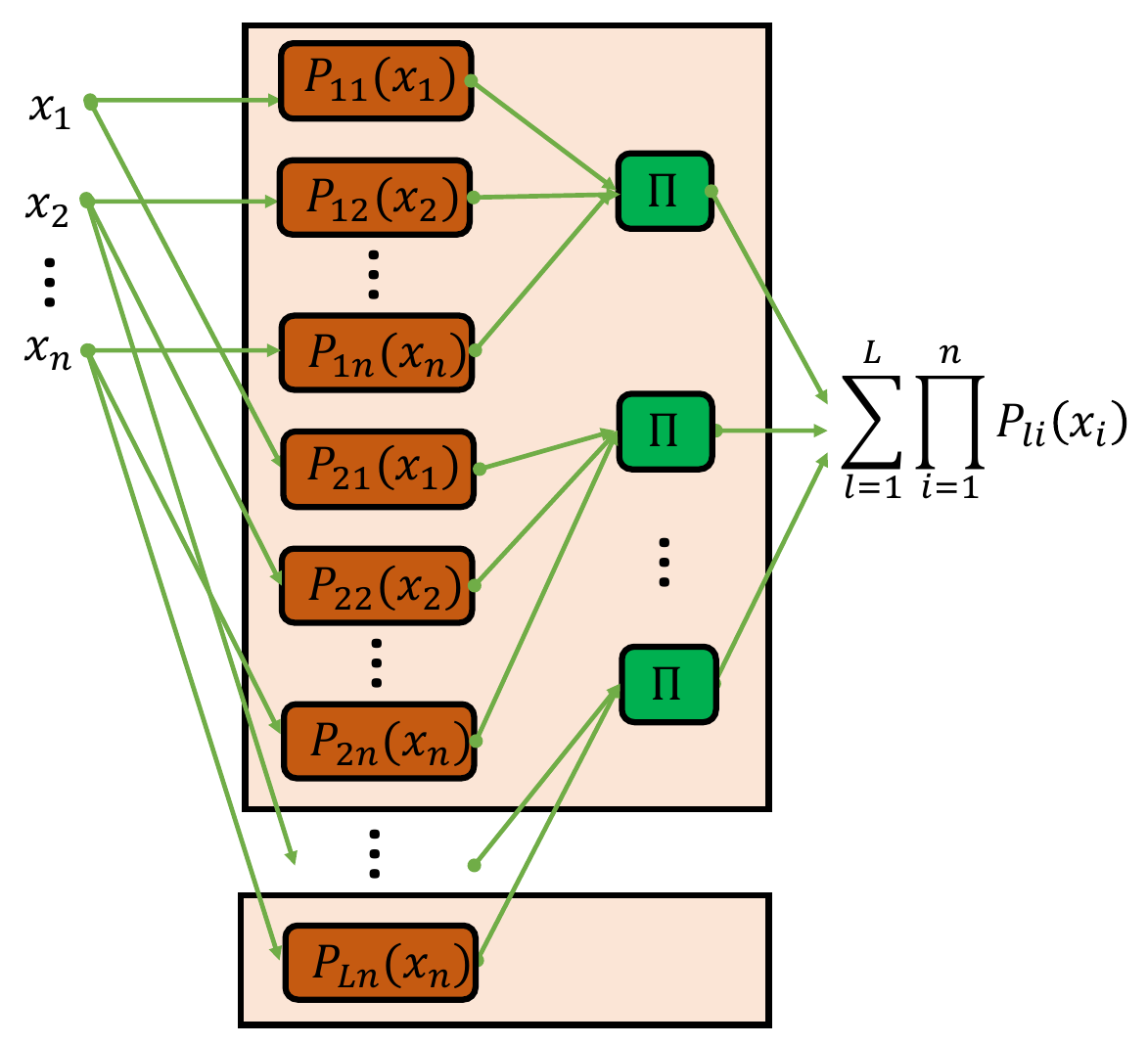}}
\caption{Use of a quadratic network to represent a partially separable representation. Suppose that the polynomial $P_{li}$ is of degree $N_{li}$, the width and depth of the quadratic network to approximate $P_{li}$ are $N_{li}$ and $\log_2(N_{li})$ respectively.}
\label{fig:complexity}
\end{figure}

\begin{figure}[htb]
\center{\includegraphics[height=2.2in,width=3.0in,scale=0.4] {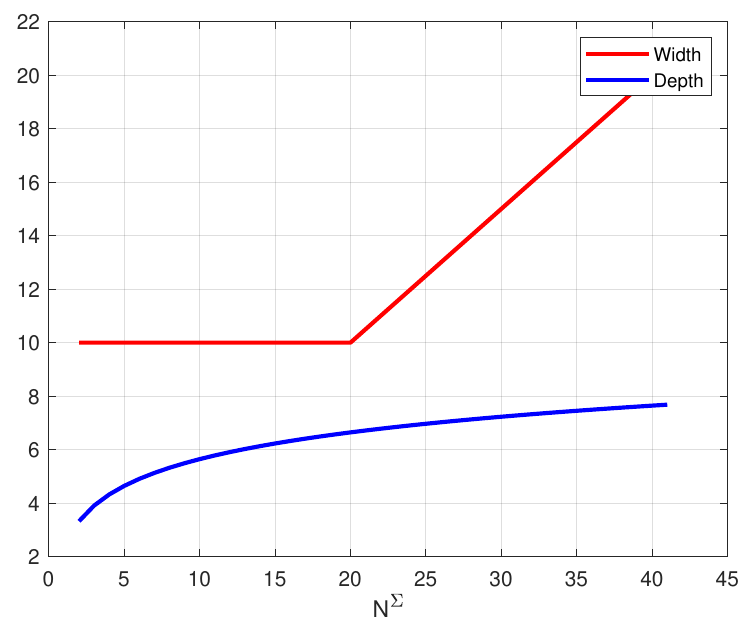}}
\caption{Width and depth versus $N^{\Sigma}$ changes ($L=4$, $n=5$, and $\alpha=1$ without loss of generality) assuming the partially separable representation.}
\label{fig:curve}
\end{figure}

One interesting point from \eqref{eq:WDrelation} is that the lower bounds for depth and width to realize a partially separable representation are also suggested. As shown in \textbf{Figure \ref{fig:curve}}, we plot the width and depth as $N^{\Sigma}$ changes. There are two highlights in \textbf{Figure \ref{fig:curve}}. The first is that the width is generally larger than the depth, which is different from the superficial impression on deep learning. The second is that, as the $N^{\Sigma}$ goes up, the width/depth ratio is accordingly increased. 

\begin{table}[htb]
\centering
\caption{Descriptions of different building blocks.} 
 \begin{tabular}{||c c c c c||} 
 \hline
 Modules & Degree & Operation & Width & Depth \\ [0.5ex] 
 \hline\hline
 $P_{li}$ & $N_{li}$ & Express $\phi_{li}$ & $N_{li}$ & $\log_2(N_{li})$ \\
 $\Pi$ & $-$ & Express $\Phi_i$ & $n$ & $\log_2(n)$ \\ [1ex] 
 \hline
 \end{tabular}
\end{table}

\textbf{Remark:} Through the complexity analysis, we realize that the width and depth of a network depend on the structure or complexity of the function to be approximated. In other words, they are controlled by the nature of a specific task. As the task becomes complicated, the width and depth must increase accordingly, and the combination of the width and depth is not unique. 
\clearpage

\newpage
\addcontentsline{toc}{section}{VIII. Effects of Width on Optimization and Generalization}
\subsection*{VIII. Effects of Width on Optimization and Generalization}
We first illustrate the importance of width on optimization in the context of over-paramterization, kernel ridge regression, and NTK, and then report our findings that the existing generalization bounds and VC dimension results somehow suggest the width and depth equivalence for a given complexity of networks. 

\textbf{1. Optimization:} The optimization mechanism is the key to understand the training process of neural networks \citep{ge2015escaping}. From the view of optimization, the fact that randomly-initialized first-order methods can find well-generalizable minima on the training data is quite intriguing. Given the theme of this paper, we put our endeavor into the importance of width on the optimization of neural networks in hope to provide insights for practitioners. We divide the relevant literature into the three categories: 

(1) Increase width for over-parameterization: Brutzkus et al. showed that a wide two-layer network using the hinge loss can generalize well on linearly separable data with stochastic gradient descent (SGD) \citep{brutzkus2017sgd}. Li and Liang showed that when data is normalized and fulfills a separability condition, a two-layer over-parameterized network can learn these data in a polynomial time \citep{li2018learning}. \cite{allen2019learning} showed that if the width of hidden layers is $\mathcal{O}\Big(n^{30} D^{30} \log^{30} (1/\epsilon) \Big)$, where $n$ is the number of samples, $D$ is the network depth, and $\epsilon$ is an expected error, then the gradient descent search converges with the error $\epsilon$. This bound was further reduced to $n^4 2^{D}$ for the network using non-linear smooth activation functions such as soft-plus \citep{du2018gradient}. These paper established that a pre-specified small training error can be achieved by gradient descent when the network is very wide. The secret therein is that the weight matrix is very close to the initialization due to NTK \citep{jacot2018neural}.

(2) Kernel ridge regression: A neural network tends to give a Gaussian process when the width goes infinitely large. In this situation, only training the top layer of a network (weak training) will reduce to Kernel ridge regression. In a classical way, the weak training is to minimize the quadratic loss:
\begin{equation}
    C(\mathbf{w}) = (\mathbf{y-Xw})^T (\mathbf{y-Xw}) + \lambda ||\mathbf{w}||^2,
\end{equation}
where $\mathbf{X}=[\mathbf{x}_1;...;\mathbf{x}_n]$ is a collection of data $\mathbf{x}_i \in \mathbb{R}^{1\times d}, i=1,...,n$. 
Taking derivatives with respect to $\bm w$ and equating them to zero gives
\begin{equation}
    \mathbf{w} = (\mathbf{X}^T \mathbf{X}+\lambda \mathbf{I}_d)^{-1} \mathbf{X}^T \mathbf{y} = \mathbf{X}^T (\lambda \mathbf{I}_n+\mathbf{X} \mathbf{X}^T)^{-1}  \mathbf{y}. 
\end{equation}
Given a new example $x^*$, the prediction is
\begin{equation}
   y^* = x^* \mathbf{X}^T (\lambda \mathbf{I}_n+\mathbf{X} \mathbf{X}^T)^{-1}  \mathbf{y} .
\end{equation}
We replace the data samples with the high-dimensional representations of the neural network: $\mathbf{x}\to g_\mathbf{\theta}(\mathbf{x}) \in \mathbb{R}^{1\times k}$ and let $\Phi_g (\mathbf{X}) = [g_\mathbf{\theta}(\mathbf{x}_1);...;g_\mathbf{\theta}(\mathbf{x}_n)]$. By a similar derivation, we have
\begin{equation}
     y^* = g_\theta (x^*) \Phi_g (\mathbf{X})^T (\lambda \mathbf{I}_n+\Phi_g (\mathbf{X}) \Phi_g (\mathbf{X})^T)^{-1}  \mathbf{y} .  
\end{equation}
When the width is very large, due to the Gaussian process, the weak training is reasonable, because $\Phi_g (\mathbf{X})$ converges to $\underset{\bm{\theta}\sim \bm{\Theta}}{\mathbb{E}} [\Phi_g (\mathbf{X})]$, and $\Phi_g (\mathbf{X}) \Phi_g (\mathbf{X})^T$ converges to $\underset{\bm{\theta}\sim \bm{\Theta}}{\mathbb{E}} [\Phi_g (\mathbf{X}) \Phi_g (\mathbf{X})^T]$, which can be approximated by the random initialization. 

(3) Neural Tangent Kernel: In the above argument, we have justified the legitimacy of training the top layer by the large network width. How about training the entire network? Given the dataset $\{(\mathbf{x}_i, y_i) \}_{i=1}^n$, we also consider training the network with the quadratic loss for regression: $l(\bm{\theta}) = \frac{1}{2}\sum_{i=1}^n (f_{\bm{\theta}}(\mathbf{x}_i))-y_i)^2$. Consider the gradient descent in an infinitesimally small learning rate, then 
\begin{equation}
    \frac{d \bm{\mathbf{\theta}}(t)}{dt} = -\nabla_{\bm{\mathbf{\theta}}}l(\bm{\theta}(t)) = -\sum_i^n (f_{\bm{\theta}(t)}(\mathbf{x}_i))-y_i) \nabla_{\bm{\mathbf{\theta}}}f_{\bm{\theta}(t)}(\mathbf{x}_i).
\end{equation}
Next, we can describe the dynamics of the model output $y({\bm{\theta}})$ given the input $\mathbf{x}_j$ as follows:
\begin{equation}
\begin{aligned}
    \frac{d f_{{\bm{\theta}}(t)}(\mathbf{x}_j)}{dt} & = \nabla_{\bm{\mathbf{\theta}}}f_{{\bm{\theta}}(t)}(\mathbf{x}_j)^T \frac{d \bm{\mathbf{\theta}}}{dt} \\
    & =-\sum_i^n (f_{\bm{\theta}(t)}(\mathbf{x}_i))-y_i) \Big \langle \nabla_{\bm{\mathbf{\theta}}}  f_{{\bm{\theta}}(t)}(\mathbf{x}_j), \nabla_{\bm{\mathbf{\theta}}}f_{\bm{\theta}(t)}(\mathbf{x}_i) \Big \rangle.
\end{aligned}  
\label{ntkopi}
\end{equation}
Let us consider $\bm{u}(t) = (f_{\bm{\theta}(t)}(\mathbf{x}_i))_{i=1,...,n}$ for all samples at time $t$ and $\bm{y}=(y_i)_{i=1,...,n}$ is the output, \eqref{ntkopi} can be written in a compact way:
\begin{equation}
    \frac{d \bm{u}(t)}{dt} = - \mathbf{H}(t)\cdot (\bm{u}(t)-\bm{y}),
\end{equation}
where the $pq$-entry of $\mathbf{H}(t)$ is
\begin{equation}
    [\mathbf{H}(t)]_{pq} = \Big \langle \nabla_{\bm{\mathbf{\theta}}}  f_{{\bm{\theta}}(t)}(\mathbf{x}_p), \nabla_{\bm{\mathbf{\theta}}}f_{\bm{\theta}(t)}(\mathbf{x}_q) \Big \rangle.
\end{equation}
In the width limit, the Gaussian process shows that $\mathbf{H}(t)$ becomes a constant $\mathbf{H}^*$. Therefore, 
\begin{equation}
     \frac{d \bm{u}(t)}{dt} = - \mathbf{H}^*\cdot (\bm{u}(t)-\bm{y}),
\end{equation}
which characterizes the trajectory of the training in the functional space instead of the parameter space. 

\begin{table}[htb]
 \centering
\caption{Representative bounds for chain-like neural networks, where $B_{l,2}$, $B_{l,F}$ and $B_{l,2\to 1}$ to denote the upper bounds of the spectral norm $||\mathbf{W}_d||_2$, Frobenius norm $||\mathbf{W}_d||_F$, and $||\mathbf{W}_d||_{2,1}$ of the rank-$r$ matrix in $l^{th}$ layer. Generally, $||\mathbf{W}_d||_F$ and $||\mathbf{W}_d||_{2,1}$ is $\sqrt{r}$-times larger than $||\mathbf{W}_d||_2$, $\gamma$ means the ramp loss function is $1/\gamma$-Lipschitz function, $\Gamma$ is the lower bound for the product of the spectal norm of all the layers and the $m$ is the size of data.  }
 \begin{tabular}{ p{2cm}|p{7cm}| p{3cm} }
   \hline
      \hline
   \bf{Reference} &  \bf{Bound}  & $||\mathbf{W}_d||_2 = 1$\\

   \hline
   \cite{neyshabur2015norm}    &   $\mathcal{O}\Big(\frac{2^L\cdot \prod_{l=1}^{L}B_{l,F}}{\gamma \sqrt{m}}\Big)$ & $\mathcal{O}\Big( \frac{2^L \cdot r^{L/2}}{\gamma \sqrt{m}}\Big)$ \\
   \hline
   \cite{bartlett2017spectrally} & $\mathcal{O}\Big(\frac{\prod_{l=1}^{L}B_{l,F}\cdot \log(Lp)}{\gamma \sqrt{m}} \big(\sum_{l=1}^{L}\frac{B_{l,2\to 1}^{2/3}}{B_{l,2}^{2/3}} \big)^{3/2}\Big)$ & \bm{$\mathcal{O}\Big(\frac{\log(p) \sqrt{L^3 r}}{\gamma \sqrt{m}}\Big)$} \\
   \hline
   \cite{neyshabur2018pac} &  $\mathcal{O}\Big(\frac{\prod_{l=1}^{L}B_{l,F}\cdot \log(Lp)}{\gamma \sqrt{m}} \big(L^2 p \sum_{l=1}^{L}\frac{B_{l,F}^2}{B_{l,2}^2} \big)^{1/2}\Big)$ & \bm{$\mathcal{O}\Big(\frac{\log(Lp) \sqrt{L^3 pr}}{\gamma \sqrt{m}}\Big)$}\\
   \hline
   \cite{golowich2017size} & $\mathcal{O}\Big(\frac{\prod_{l=1}^{L}B_{l,F} }{\gamma} \cdot \min\{ \sqrt{\frac{\log{\frac{1}{\Gamma}}\prod_{l=1}^{L}B_{l,F}}{m^{1/2}}}, \sqrt{\frac{D}{m}}\} \Big)$ & \bm {$\mathcal{O}\Big(\frac{\sqrt{Lpr}}{\gamma \sqrt{m}}\Big)$} \\
   \hline
   \hline

 \end{tabular}
 \label{tab:genebounds}
\end{table}

\textbf{2. Generalization:} Analysis of generalization bounds is a powerful tool to explain the excellent performance of neural networks. Traditional wisdom suggests that the increased model complexity will cause over-fitting to  training data, which contradicts the fact that deep networks can easily fit random labels to the data and yet practically generalize well \citep{zhang2016understanding}. Recently, the generalization theory has gained increasingly more traction. In reference to \cite{li2018tighter}, we have summarized the the state-of-the-art generalization bounds in \textbf{Table \ref{tab:genebounds}} and provided their complexities. In \textbf{Table \ref{tab:genebounds}}, we bold-face the bounds of interest which the width and depth dominate. As far as \cite{neyshabur2015norm} is concerned, due to the exponential dependence on the depth, we will not focus on it. Instead, we argue that these bold-faced bounds somehow suggest the width and depth equivalence under a given complexity. Now, we use $B_1 (L,p) = \log(p) \sqrt{L^3}$, $B_2 (L,p) = \log(Lp) \sqrt{L^3 p}$, and $B_3 (L,p)= \sqrt{Lp}$ to denote the bounds of interest. \textbf{Figure \ref{fig:contour}} shows the contours plots of $B_1$, $B_2$ and $B_3$. Along a contour, the width and depth changes for the fixed bound complexity. In other words, the depth and width of a neural network are mutually transformable without impacting the overall generalization ability theoretically.

\begin{figure*}[h]
\center{\includegraphics[height=2.5in,width=6.0in,scale=0.4] {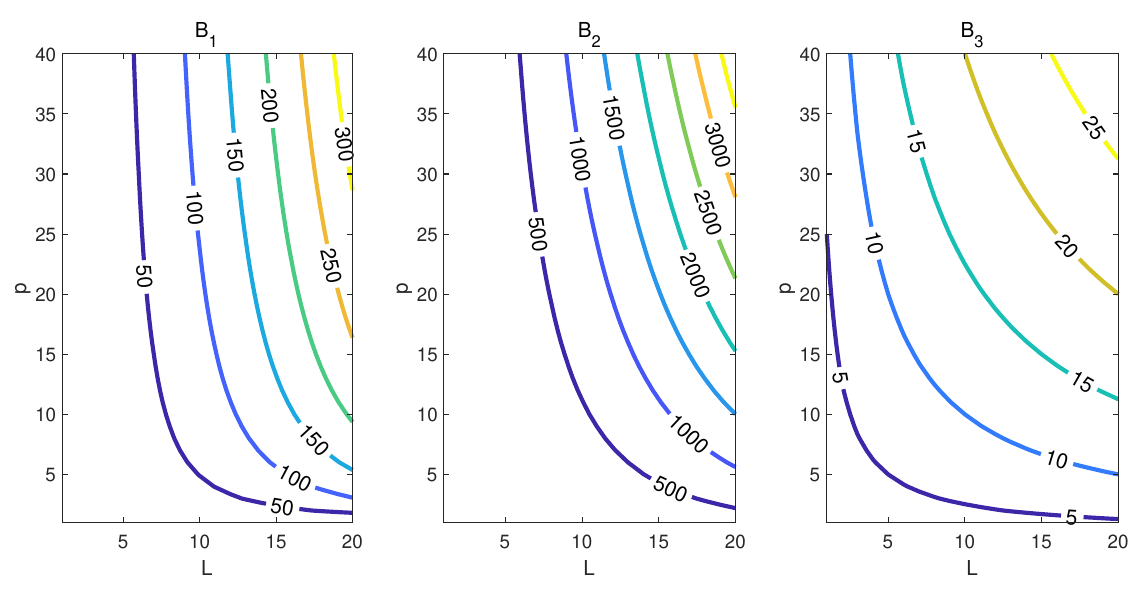}}
\caption{Contours of $B_1$, $B_2$ and $B_3$, where numbers on the curves of each sub-figure represent $\log(p) \sqrt{L^3}$, $\log(p) \sqrt{L^3}$ and $\sqrt{Lp}$ respectively. Along a contour, the width and depth changes to give the same bound.}
\label{fig:contour}
\end{figure*}

Therefore, increasing either width or depth can boost the hypothesis space of a neural network. In other words, when it comes to promoting the expressive power of a network, increasing the width is essentially equivalent to increasing the depth in the sense of VC dimension, which also implies the equivalence of the width and depth of neural networks.

\newpage
\addcontentsline{toc}{section}{IX. Rethinking the Depth Separation with Intra-layer Links }
\subsection*{IX. Rethinking the Depth Separation with Intra-layer Links }

Depth separation highlights the representation ability of a deep network, which has been intensively investigated over the past years. The idea is to show that a deep network can more efficiently approximate a complicated function than a shallow network. One notable depth separation result is from \cite{arora2016understanding}, showing that \textit{"for every pair of natural numbers  $k \geq 1, w \geq 2$, there exists a family of hard functions representable by a  $\mathbb{R} \rightarrow \mathbb{R}$ $(k+1)$-layer feedforward ReLU DNN of width  $w$  such that if it is also representable by a $\left(k^{\prime}+1\right)$-layer feedforward ReLU DNN for any $k^{\prime} \leq k$, then this $\left(k^{\prime}+1\right)$-layer feedforward ReLU DNN has size at least  $\frac{1}{2} k^{\prime} w^{\frac{k}{k^{\prime}}}-1$".} Suppose that the number of neurons, \textit{i.e.}, the size of a ReLU network is $s$, and the piecewise linear function of this ReLU network has $p$ pieces, the core of the depth separation theorem established in \cite{arora2016understanding} is summarized as the size-piece relationship: $s\geq \frac{1}{2} k(2p)^{1/k}-1$. 

In the proposed network structure for the width-depth conversion, intra-layer links are employed for a network to flexibly represent arbitrary piecewise linear functions over polytopes using fan-shaped functions. Here, we find that adding intra-layer links can greatly increase the maximum number of pieces represented by a shallow network such that it can express as a complicated function as a deep network could. As shown in Table \ref{tab:size-piece}, let $n$ be the number of intra-linked neurons, the size-piece inequality $s\geq \frac{n}{2(2^n-1)} k(2p)^{1/k}-1$ trivially holds true. As such, it cannot be used to demonstrate depth separation any more. Instead, we need to rethink the depth separation and re-characterize the power of a network when intra-layer links are used. Our detailed proof is as follows.

\begin{table}[htb]
 \centering
\caption{Network structures and their associated size-piece relationships. The number of neurons, i.e., the size of a ReLU network is $s$. $k$ is the number of hidden layers. The piecewise linear function of this ReLU network has $p$ pieces. $n$ is the number of intra-linked neurons.}
 \begin{tabular}{ |l|c|  }
      \hline
      Network & size-piece relationship \\
   \hline
 feedforward (Figure \ref{Figure_reduce_the_bound})(a) \citep{arora2016understanding} & $s\geq \frac{1}{2} k(2p)^{1/k}-1$ \\
 \hline
ours (Figure \ref{Figure_reduce_the_bound})(b) & $s\geq \frac{1}{3} k(2p)^{1/k}-\frac{2}{3}$ \\
   \hline
   extension (Figure \ref{Figure_reduce_the_bound})(c) & $s\geq \frac{n}{2(2^n-1)} k(2p)^{1/k}-\frac{n}{2^n-1}$ \\
   \hline
 \end{tabular}
 \label{tab:size-piece}
 \vspace{2mm}
 \\
 Note: In the first row, we re-calculate the inequality and obtain a slightly different inequality from the original. The previously-established inequality in \citep{arora2016understanding} is $s\geq \frac{1}{2} kp^{1/k}$.
\end{table}

\begin{figure*}[htb!]
\center{\includegraphics[width=0.8\linewidth] {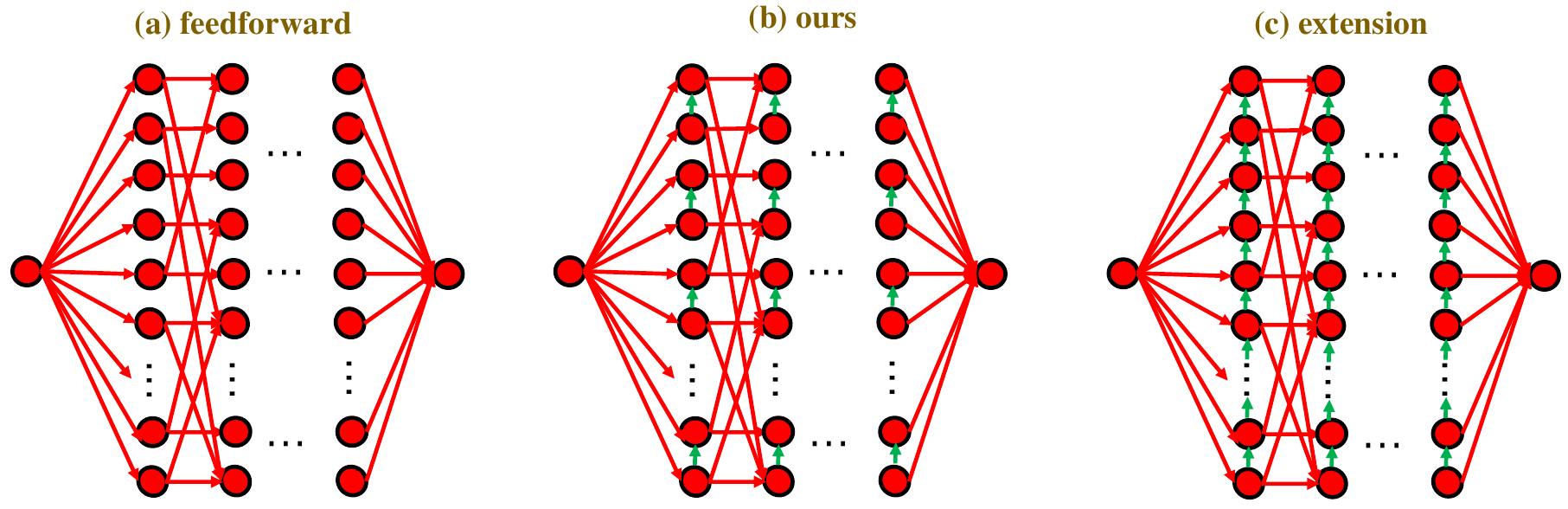}}
\caption{(a) the feedforward architecture; (b)-(c) the structures with intra-layer links.}
\label{Figure_reduce_the_bound}
\end{figure*}

The result listed in the first row of Table \ref{tab:size-piece} is obtained from the following two lemmas and proofs in \citep{arora2016understanding}. 
\begin{lemma}[Lemma D.5 in \citep{arora2016understanding}]
Let  $f: \mathbb{R}\rightarrow\mathbb{R}$ be a function represented by a $\mathbb{R}\rightarrow\mathbb{R}$  ReLU DNN, as shown in Figure \ref{Figure_reduce_the_bound}(a), with depth $k+1$ and widths $w_{1}, \ldots, w_{k} $ of $k$ hidden layers. Then $f$ is a PWL function with at most  $2^{k-1} \cdot\left(w_{1}+1\right) \cdot w_{2} \cdot \ldots \cdot w_{k}$  pieces.
\label{lem:D.5}
\end{lemma}

\begin{lemma}[Lemma D.6 in \citep{arora2016understanding}]
Let  $f: \mathbb{R}\rightarrow \mathbb{R}$  be a piecewise linear function with $p$ pieces. If $f$ is represented by a ReLU DNN, as shown in Figure \ref{Figure_reduce_the_bound}(a), with depth $k+1$, then it must have size at least $\frac{1}{2} k p^{1/k}-1$. Conversely, any piecewise linear function $f$ represented by a ReLU DNN of depth $k+1$ and size at most $s$, can have at most $\left(\frac{2 s}{k}\right)^{k}$ pieces.
\end{lemma}


Now, we prove the result listed in the second row of Table \ref{tab:size-piece} for ReLU DNN with intra-links. 
\begin{lemma}[Our 1st New Result]
Let  $f: \mathbb{R}\rightarrow\mathbb{R}$ be a function represented by a $\mathbb{R}\rightarrow\mathbb{R}$ ReLU DNN, as shown in Figure \ref{Figure_reduce_the_bound}(b), with depth  $k+1$  and widths  $w_{1}, \ldots, w_{k} $ of $k$ hidden layers. Then $f$ is a PWL function with at most $3^{k-1} \cdot\left(\frac{3w_{1}}{2}+1\right) \cdot w_{2} \cdot \ldots \cdot w_{k}$ pieces.
\label{lemma:piece_size_relation_ours}
\end{lemma} 

\begin{proof}
As shown in Figure \ref{Figure_reduce_the_bound}(b), let us connect every two neurons with an intra-layer link. Suppose that one neuron is $\sigma(wx+b)$ which can create 1 breakpoint at $-b/w$, while the other neuron is $\sigma(w'x+b'+\sigma(wx+b))$ creating at most 3 breakpoints $-b/w, -b'/w', -(b+b')/(w+w')$. Thus, we can get at most $\frac{w_{1}}{2}\cdot3=\frac{3w_{1}}{2}$ distinct breakpoints, i.e., $\frac{3w_{1}}{2}+1$ pieces.

Let us estimate the number of pieces via induction. Assume that for some $k\geq 1$, any $\mathbb{R} \rightarrow \mathbb{R}$  ReLU DNN, as shown in Figure \ref{Figure_reduce_the_bound}(b), with depth $k+1$ and widths  $w_{1}, \ldots, w_{k}$ of the $k$ hidden layers produces at most $A_k = 3^{k-1} \cdot\left(\frac{3w_{1}}{2}+1\right) \cdot w_{2} \cdot \ldots \cdot w_{k}$ pieces.
We add one more layer of $w_{k+1}$ neurons to this network such that its pre-activation is actually the output of a $\mathbb{R} \rightarrow \mathbb{R}$ ReLU DNN with depth $k+1$  and widths  $w_{1}, \ldots, w_{k}$. Suppose that the preactivation of the $i$-th neuron is $f_i, i\in[w_{k+1}]$. By the induction hypothesis, $f_i$ is a piecewise linear function  with at most $A_k$ pieces. Without loss of generality, let the $i$-th and $(i+1)$-th neurons be connected by an intra-layer link. Their outputs are $\sigma(f_i)$ and $\sigma(f_{i+1}+\sigma(f_i))$ whose total number of pieces equals to that of three functions $\sigma(f_i), \sigma(f_{i+1}), \sigma(f_i+f_{i+1})$, which is at most $6A_k$. Because we have $w_{k+1}$ neurons in the last hidden layer, we can get at most $6A_k\cdot (w_{k+1}/2)=3^{k} \cdot\left(\frac{3w_{1}}{2}+1\right) \cdot w_{2} \cdot \ldots w_k \cdot w_{k+1}$ pieces, which concludes the induction step.

\end{proof}

\begin{lemma}[Our 2nd New Result]
Let  $f: \mathbb{R} \rightarrow \mathbb{R}$  be a piecewise linear function with  $p$  pieces. If $f$ is represented by a ReLU DNN, as shown in Figure \ref{Figure_reduce_the_bound}(b), with depth $k+1$, then it must have size at least  $\frac{1}{3} k (2p)^{1/k}-\frac{2}{3}$. Conversely, any piecewise linear function $f$  represented by a ReLU DNN of depth  $k+1$  and size at most $s$, can have at most $\frac{1}{2}\left(\frac{3s}{k}\right)^{k}$ pieces.
\label{sp_ineqaulity}
\end{lemma}

\begin{proof}
Let widths of the  $k$  hidden layers be  $w_{1}, \ldots, w_{k}$. By \textbf{Lemma \ref{lemma:piece_size_relation_ours}}, we must have

\begin{equation}
   3^{k-1} \cdot\left(\frac{3w_{1}}{2}+1\right) \cdot w_{2} \cdot \ldots \cdot w_{k} = 2^{k-1} \cdot\left(\frac{3w_{1}}{2}+1\right) \cdot \frac{3w_{2}}{2} \cdot \ldots \cdot \frac{3w_{k}}{2} \geq p.
\label{piece_size_relationship}   
\end{equation}

By the AM-GM inequality, we minimize the size subject to \eqref{piece_size_relationship}, 
\begin{equation}
\begin{aligned}
     s & = w_{1}+w_{2}+\ldots+w_{k} \\
     & =\frac{2}{3} (\frac{3w_{1}}{2}+ \frac{3w_{2}}{2} + \ldots + \frac{3w_{k}}{2}) \\
     & = \frac{2}{3} (\frac{3w_{1}}{2}+1+ \frac{3w_{2}}{2} + \ldots + \frac{3w_{k}}{2}-1) \\
     & \geq \frac{2}{3} \Big(k(\frac{p}{2^{k-1}})^{1/k}-1\Big) \\
     & = \frac{1}{3} k(2p)^{1/k}-\frac{2}{3}, 
\end{aligned}
\end{equation}
where the inequality is achieved at $\frac{3w_{1}}{2}+1=\frac{3w_{2}}{2}=\ldots=\frac{3w_{k}}{2}\geq \frac{1}{2} p^{1/k}$. This leads to the first statement. The second statement follows by reversing the above equation. 
\begin{equation}
    \frac{1}{2}\left(\frac{3s}{k}\right)^{k} \geq p.
\end{equation}
\end{proof}

\begin{lemma}[The 1st New Result Extended to $n$ Intra-Linked Neurons]
Let $f: \mathbb{R}\rightarrow\mathbb{R}$ be a function represented by a $\mathbb{R}\rightarrow\mathbb{R}$ ReLU DNN, as shown in Figure \ref{Figure_reduce_the_bound}(c), with depth $k+1$, widths $w_{1}, \ldots, w_{k} $ of $k$ hidden layers and $n$ intra-linked neurons in each layer. Then $f$ is a PWL function with at most $2^{k-1} \cdot\left(\frac{(2^n-1)w_{1}}{n}+1\right) \cdot \frac{(2^n-1)w_{2}}{n} \cdot \ldots \cdot \frac{(2^n-1)w_{k}}{n} $ pieces.
\label{lemma:piece_size_relation_extended}
\end{lemma} 

\begin{proof}
For the first layer, the $n$ intra-linked neuron can create $2^n-1$ breaking points. Let us derive this fact via induction. Suppose that $G_{n-1}$ is the piecewise linear function of $n-1$ intra-linked neurons, then $G_n = \sigma(wx+b+G_{n-1})$. When $G_{n-1}\geq 0$, $G_n=\sigma(wx+b+G_{n-1})$, generating at most $2^{n-1}-1$ breaking points (same as $G_{n-1}$). When $G_{n-1}\leq 0$, $G_n=\sigma(wx+b)$ has one breaking point. Without repetition, the number of breaking points of $G_n$ is the summation of these newly-generated breaking points and breaking points of $G_{n-1}$, which is $2(2^{n-1}-1)+1=2^n-1$. Because the first layer has $w_1$ neurons, at most $w_1/n$ groups of $n$ intra-linked neurons are available. Thus, we can get at most $\frac{(2^n-1)w_{1}}{n}+1$ pieces.

Computing the number of pieces for the hidden layers shares the same spirit with the first layer, combining the proof of Lemma \ref{lemma:piece_size_relation_ours}, we conclude that a $\mathbb{R}\rightarrow\mathbb{R}$ ReLU DNN, as shown in Figure \ref{Figure_reduce_the_bound}(c), with depth $k+1$, widths $w_{1}, \ldots, w_{k} $ of $k$ hidden layers, and $n$ intra-linked neurons in each layer, has at most $2^{k-1} \cdot\left(\frac{(2^n-1)w_{1}}{n}+1\right) \cdot \frac{(2^n-1)w_{2}}{n} \cdot \ldots \cdot \frac{(2^n-1)w_{k}}{n} $ pieces.
\end{proof}

\begin{lemma}[The 2nd New Result Extended to $n$ Intra-Linked Neurons]
Let  $f: \mathbb{R} \rightarrow \mathbb{R}$  be a piecewise linear function with  $p$  pieces. If $f$ is represented by a ReLU DNN, as shown in Figure \ref{Figure_reduce_the_bound}(c), with depth $k+1$ and $n$ neurons intra-linked in each hidden layer, then it must have size at least  $\frac{n}{2(2^n-1)} k (2p)^{1/k}-\frac{n}{2^n-1}$. Conversely, any piecewise linear function $f$  represented by a ReLU DNN of depth  $k+1$  and size at most $s$, can have at most $\frac{1}{2}\left(\frac{2(2^n-1)s}{nk}\right)^{k}$ pieces.
\end{lemma}

\begin{proof}
Please refer to our proof for Lemma \ref{sp_ineqaulity}.
\end{proof}

\clearpage


\renewcommand\refname{Reference}

\vskip 0.2in
\bibliography{sample}

\end{document}